\theoremstyle{plain}
\newtheorem{theorem}{Theorem}[section]
\newtheorem{lemma}[theorem]{Lemma}
\theoremstyle{definition}
\theoremstyle{remark}
\newcommand*\circled[1]{\tikz[baseline=(char.base)]{
    \node[shape=circle,draw,inner sep=0.1pt] (char) {$#1$};}}
\icmltitlerunning{Model-Based Decentralized Policy Optimization}
\begin{document}

\twocolumn[
\icmltitle{Model-Based Decentralized Policy Optimization}



\icmlsetsymbol{equal}{*}

\begin{icmlauthorlist}
\icmlauthor{Hao Luo}{pku}
\icmlauthor{Jiechuan Jiang}{pku}
\icmlauthor{Zongqing Lu}{pku}
\end{icmlauthorlist}

\icmlaffiliation{pku}{School of Computer Science, Peking University}

\icmlcorrespondingauthor{Hao Luo}{lh2000@pku.edu.cn}
\icmlcorrespondingauthor{Zongqing Lu}{zongqing.lu@pku.edu.cn}

\icmlkeywords{Multi-Agent Reinforcement Learning}

\vskip 0.3in
]



\printAffiliationsAndNotice{}  

\begin{abstract}
  
Decentralized policy optimization has been commonly used in cooperative multi-agent tasks. However, since all agents are updating their policies simultaneously, from the perspective of individual agents, the environment is non-stationary, resulting in it being hard to guarantee monotonic policy improvement. To help the policy improvement be stable and monotonic, we propose model-based decentralized policy optimization (MDPO), which incorporates a latent variable function to help construct the transition and reward function from an individual perspective. We theoretically analyze that the policy optimization of MDPO is more stable than model-free decentralized policy optimization. Moreover, due to non-stationarity, the latent variable function is varying and hard to be modeled. We further propose a latent variable prediction method to reduce the error of the latent variable function, which theoretically contributes to the monotonic policy improvement. Empirically, MDPO can indeed obtain superior performance than model-free decentralized policy optimization in a variety of cooperative multi-agent tasks.

\end{abstract}

\section{Introduction}

Decentralized multi-agent reinforcement learning (MARL) has been commonly used in practice for cooperative multi-agent tasks, \textit{e.g.}, traffic signal control \citep{wei2018intellilight}, unmanned aerial vehicles \citep{qie2019joint}, and IoT \citep{cao2020multiagent}, where global information is inaccessible. Independently performing policy optimization using local information, \textit{e.g.}, independent PPO \citep{schulman2017proximal} (IPPO), is one of the most straightforward methods for decentralized MARL. Recent empirical studies \citep{de2020independent,yu2021surprising,benchmark} demonstrate that IPPO performs surprisingly well in several cooperative multi-agent benchmarks, which shows great promise for fully decentralized policy optimization. 

However, since all agents are updating policies, from the perspective of an individual agent, the environment is non-stationary \citep{Zhang2019multi}. Thus, the monotonic policy improvement, which can be achieved by policy optimization in single-agent settings \citep{schulman2015trust,schulman2017proximal}, may not be guaranteed in decentralized MARL. Concretely, in policy optimization, the state visitation frequency is assumed to be stationary since the agent policy is limited to slight updates, which is necessary to guarantee monotonic policy improvement \citep{schulman2015trust}. However, in decentralized multi-agent settings, as all agents are updating policies simultaneously, the state visitation frequency will change largely, which contradicts the fundamental assumption of policy optimization, thus the monotonic improvement of policy optimization may not be preserved.

To address this problem, we resort to exploiting the environment model to stabilize the state visitation frequency and help monotonic policy improvement. However, learning an environment model in decentralized settings is non-trivial, since the information of other agents, \textit{e.g.}, other agents' policies, is not observable and changing. Therefore, we introduce a latent variable to help distinguish different transitions resulting from the unobservable information. And then we build an environment model for each agent, which contains a transition function, a reward function, and a latent variable function that learns the latent variable given observation. The agents are trained using independent policy optimization methods, \textit{e.g.}, TRPO \citep{schulman2015trust} or PPO \citep{de2020independent}, on both the experiences generated by the environment model and collected in the environment.

Since the environment is non-stationary, the latent variable function is also varying during learning. We theoretically show that independently performing policy optimization on experiences generated by the environment model with the varying latent variable function can obtain more stationary observation visitation frequency than on the experiences collected in the non-stationary environment. Thus, independent policy optimization goes more stable on the environment model. 

Moreover, to obtain monotonic improvement, the gap between the return of interacting with the environment and the return predicted by the environment model should be small. We theoretically analyze that the return gap is bounded by the prediction error of the latent variable function. As the latent variable function is varying due to non-stationarity, to minimize the prediction error, we propose a latent variable prediction method that uses the historical variables to predict the future variable. Thus, the latent variable prediction can reduce the return gap and help the monotonic policy improvement.

The proposed algorithm, \textbf{\textit{model-based decentralized policy optimization}} (\textbf{MDPO}), is theoretically grounded and empirically effective for fully decentralized learning. We evaluate MDPO on a variety of cooperative multi-agent tasks, \textit{i.e.}, a stochastic game, multi-agent particle environment (MPE) \citep{lowe2017multi}, multi-agent MuJoCo \citep{mamujoco}, and Google Research Football (GRF)\citep{kurach2020google}. MDPO outperforms the model-free independent policy optimization baseline, and the proposed latent variable prediction additionally obtains performance gain, verifying that MDPO can help stable and monotonic policy improvement in fully decentralized learning.


\section{Preliminaries}

\textbf{Dec-POMDP.} A cooperative multi-agent task is generally modeled as a decentralized partially observable Markov decision process (Dec-POMDP) \citep{oliehoek2016concise}. Specifically, a Dec-POMDP is defined as a tuple $G = \{\mathcal{S},\mathcal{I},{\mathcal{A}},\mathcal{O},\Omega,P,R,\gamma\}$. $\mathcal{S}$ is the state space, $\mathcal{I}$ is the set of agents, and ${\mathcal{A}}={A}_1\times\cdots\times {A}_{|\mathcal{I}|}$ is the joint action space, where ${A}_i$ is the action space for each agent $i$. At each state $s$, each agent $i\in\mathcal{I}$ merely gets access to the observation $o_{i}\in\mathcal{O}$, which is drawn from observation function $\Omega(s,i)$, and selects an action $a_i \in {A}_i$, and all the actions form a joint action $\boldsymbol{a} \in {\mathcal{A}}$. The state transitions to next $s'$ according to the transition function $P(s'|s, \bm{a}):\mathcal{S}\times{\mathcal{A}}\times\mathcal{S}\rightarrow[0,1]$, and all agents receive a shared reward $r=R(s,\bm{a}):\mathcal{S}\times{\mathcal{A}}\rightarrow\mathbb{R}$. The objective is to maximize the expected return $\eta(\boldsymbol{\pi})=\mathbb{E}[\sum_{t=0}^\infty \gamma^t r_t|\rho_0,\boldsymbol{\pi}]$ under the joint policy of all agents $\boldsymbol{\pi}$ and distribution of initial state $\rho_0$, where $\gamma \in [0,1)$ is the discounted factor. The joint policy $\bm{\pi}$ can be represented as the product of each agent's policy $\pi_i$. Also we denote $\bm{\pi}_{-i}$ as the joint policy of all agents except $i$.

\textbf{Fully decentralized learning.} We consider the fully decentralized way to solve the Dec-POMDP \citep{tan1993multi,de2020independent}, where each agent independently learns a policy and executes actions without communication or parameter sharing in both training and execution phases. Since all agents are updating policies, from the perspective of individual agents, the environment is non-stationary, which fundamentally challenges decentralized learning \citep{Zhang2019multi}. The existing decentralized MARL methods are limited. Independent Q-learning (IQL) \citep{tan1993multi} and independent policy optimization, \textit{e.g.}, IPPO \citep{de2020independent}, are the most straightforward fully decentralized algorithms. Despite good empirical performance \citep{benchmark}, due to non-stationarity, these methods lack theoretical guarantees. IQL has no convergence guarantee, to the best of our knowledge. Although there has been some study \citep{sun2022monotonic}, IPPO may not guarantee policy improvement by independent policy optimization, since the assumption of stationary state visitation frequency for policy optimization may not hold in fully decentralized settings, which we will discuss in the following.

\textbf{Monotonic policy improvement.} In Dec-POMDP, from a centralized perspective, we can obtain a TRPO objective \citep{schulman2015trust} of the joint policy $\bm{\pi}$ for the monotonic improvement,
\begin{small}
\begin{align*}
    &\eta({\bm{\pi}^{\operatorname{new}}} )-\eta({\bm{\pi}^{\operatorname{old}} } ) \\
    \ge & \sum_{s,\bm{a}} {\color{red!30!orange}{\rho}^{\bm{\pi}^{\operatorname{new}}}}(s)  \bm{\pi}^{\operatorname{new}}(\bm{a} | s) A^{\bm{\pi}^{\operatorname{old}}}(s,\bm{a}) - C \cdot D_{\operatorname{KL}}^{\operatorname{max}}(\bm{\pi}^{\operatorname{old}} \| \bm{\pi}^{\operatorname{new}}) \\
    \gtrapprox & \sum_{s,\bm{a}} {\color{blue!70!black} \rho^{\bm{\pi}^{\operatorname{old}}}}(s)  \bm{\pi}^{\operatorname{new}}(\bm{a} | s) A^{\bm{\pi}^{\operatorname{old}}}(s,\bm{a}) - C \cdot D_{\operatorname{KL}}^{\operatorname{max}}(\bm{\pi}^{\operatorname{old}} \| \bm{\pi}^{\operatorname{new}}), 
\end{align*}    
\end{small}

where ${\rho}^{\bm{\pi}^{\operatorname{old}}}(s) = \sum_{t = 0} \gamma^t \operatorname{Pr}(s_t = s| {\bm{\pi}^{\operatorname{old}} })$ is the discounted state visitation frequency given $\bm{\pi}^{\operatorname{old}}$, similarly for ${\rho}^{\bm{\pi}^{\operatorname{new}}}(s)$, $A^{\bm{\pi}^{\operatorname{old}}}$ is the advantage function under $\bm{\pi}^{\operatorname{old}}$, $D_{\operatorname{KL}}^{\operatorname{max}}(\bm{\pi}^{\operatorname{old}} \| \bm{\pi}^{\operatorname{new}}) = \max_{s} D_{\operatorname{KL}}(\bm{\pi}^{\operatorname{old}}(\cdot|s) \| \bm{\pi}^{\operatorname{new}}(\cdot|s) )$, and $C$ is a constant. From ${\color{red!30!orange}{\rho}^{\bm{\pi}^{\operatorname{new}}}}$ to ${\color{blue!70!black} \rho^{\bm{\pi}^{\operatorname{old}}}}$ is an approximation or assumption \citep{schulman2015trust}. As ${\rho}^{\bm{\pi}^{\operatorname{new}}}$ is \textit{unknown} and the policy is limited to slight updates, ${\rho}^{\bm{\pi}^{\operatorname{new}}}$ is approximated by ${\rho}^{\bm{\pi}^{\operatorname{old}}}$. However, in fully decentralized MARL, this assumption may not hold, as all agents are updating their policies simultaneously and their joint policy may change significantly especially when the number of agents is large. This will severely affect the performance of independent policy optimization. Although we can constrain the policy update of each agent to be slight like TRPO, this leads to much slower convergence, especially in fully decentralized MARL, where the joint policy has a much larger search space and is merely optimized by independent learning of individual agents.

\section{Methodology} 

In this paper, we provide a novel perspective and resort to the environment model to bridge the gap between ${\rho^{\bm{\pi}^{\operatorname{new}}}}$ and  ${\rho^{\bm{\pi}^{\operatorname{old}}}}$ for each agent such that the monotonic joint policy improvement can be potentially achieved by fully decentralized policy optimization. 

As illustrated in the following, we turn the learning process into a Dyna-style \citep{sutton1990integrated} decentralized model-based method with the {\color{green!65!blue}green} path\footnote{Related work on model-based MARL can be found in Appendix~\ref{app:related}. However, none of the existing work considers exploiting the environment model to help fully decentralized policy optimization.}. Each agent $i$ additionally learns a decentralized model using local information from policy rollout and can optionally perform policy optimization on the experiences from model rollout. When optimizing policy with model rollout, we essentially have $\circled{1}$, which means the state visitation frequency in model rollout ($\rho_{\operatorname{model}}$) is more stable. Thus, the approximation from ${\color{red!30!orange}{\rho}^{\bm{\pi}^{\operatorname{new}}}}$ to ${\color{blue!70!black} \rho^{\bm{\pi}^{\operatorname{old}}}}$ becomes acceptable under looser constraints of policy update. Further, we can bound the gap between the returns of policy rollout ($\eta$) and model rollout ($\eta^{\operatorname{model}}$), formally in $\circled{2}$. Once the bound ($\mathcal{B}$) is controllable throughout the learning process, it can potentially guarantee the monotonic improvement of the joint policy in the real environment.

\begin{figure}[h]
\centering
\adjustbox{width=0.48\textwidth}{
\small
\begin{tikzpicture}[node distance=.7cm, auto]
 \node[] (pom) {\color{green!65!blue} policy optimization $\pi_i^{\operatorname{model}}$};
 \node[left = of pom] (mre) {\color{green!65!blue} model rollout}
    edge[-stealth,thick, green!65!blue] (pom.west);
 \node[left = of mre] (model) {\color{green!65!blue} model}
    edge[-stealth,thick, green!65!blue] node[below] {$\pi_{i}^{\operatorname{old}}$} (mre.west);
 
 \node[above = of pom] (poe) {policy optimization $\boldsymbol{\pi}^{\operatorname{new}}$};
 \node[left =of poe] (pre) {policy rollout}
    edge [-stealth,thick] (poe.west);
 \node[left=of pre]  (env) {environment}
    edge[-stealth,thick] node[above] {$\boldsymbol{\pi}^{\operatorname{old}}$} (pre.west);

 \path[->] (pre) edge[-stealth,thick, green!65!blue] (model);
 \draw [-{Latex[round,open]},double,thick] (-2.2,-0.3) -- (-2.2,-1.2) ;
 
 \draw (-5.2, -1.6) node[anchor=west] {$\circled{1}$ \(\left\|{\color{red!30!orange}\rho^{\boldsymbol{\pi}^{\operatorname{new}}}} - {\color{blue!70!black}\rho^{\boldsymbol{\pi}^{\operatorname{old}}}}\right\|>\left\|\rho_{\operatorname{model}}^{{\color{green!65!blue}\pi_{i}^{\operatorname{model}}}} -\color{black}{\rho_{\operatorname{model}}^{\pi_i^{\operatorname{old}}}} \right\|\)};
 
 \draw (-5.2, -2.5) node[anchor=west] (tag) {$\circled{2}$ \(\left|{\eta(\pi_{i}^{\operatorname{model}},\bm{\pi}_{-i}^{\operatorname{new}}) - \eta^{\operatorname{model}}(\pi_{i}^{\operatorname{model}})}\right| < \mathcal{B}\)};

\end{tikzpicture}
}
\end{figure}
Thus, $\circled{1}$ and $\circled{2}$ together highlight the potential benefits of incorporating an environment model for decentralized policy optimization. In the following, we discuss how to learn such a decentralized model, theoretically investigate its benefits for decentralized policy optimization, and analyze the return bound for monotonic policy improvement.

\subsection{Latent Variable Model} 

In decentralized learning, from the perspective of each agent $i$, the transition function and reward function are respectively,
\begin{align*}
P_i(s'|s,a_i) = \mathbb{E}_{\boldsymbol{a}_{-i}\sim {\pi}_{-i}}P(s'|s,a,\boldsymbol{a}_{-i}) 
\end{align*}
and
\begin{align*}
R_i(s,a_i) = \mathbb{E}_{\boldsymbol{a}_{-i}\sim {\pi}_{-i}}R(s,a,\boldsymbol{a}_{-i}),
\end{align*}
where $\boldsymbol{a}_{-i}$ denotes the joint action of all agents except $i$. As other agents are also updating their policies, $P_i$ and $R_i$ are varying throughout the learning process, which is the well-known non-stationarity problem. Moreover, as each agent $i$ usually obtains observation instead of state in decentralized learning, the model can only be learned on $(o_i,a_i,o'_i,r)$. Thus, it is challenging to construct an environment model from the perspective of an individual agent. 

To build a decentralized environment model, we introduce a latent variable $z_i$, which helps distinguish different transitions resulting from varying unobservable information of the full state and other agents' policies. Then the transition function and the reward function can be redefined as:
\begin{equation*}
P_i(o_i'|o_i,a_i,z_i) \quad \text{and} \quad R_i(o_i,a_i,z_i).
\end{equation*}
As we discuss fully decentralized learning, we drop the subscript of $i$ for simplicity in the following.

To model the transition function and the reward function with the latent variable, we define the latent variable function from the perspective of an individual agent, $\psi(z|o)$, which indicates the probability of latent variable $z$ given observation $o$. 
As $z$ is related to the policies of other agents, $\psi(z|o)$ also varies during policy updates. 
A latent variable model consists of three modules: transition function $P_\theta$, reward function $R_\phi$, and latent variable function $\psi_\omega$, to predict the next observation and reward. As the impact of unobservable information is designed to merely reflect on the latent variable, although other agents update their policies, the transition function and reward function stay constant and only the latent variable function varies. We learn such a model by maximizing the likelihood of experiences of policy rollout $\mathcal{D}$, and the objective is
\begin{align}\label{eq:objective}
    &\underset{\theta,\omega,\phi}{\min}\left(\mathcal{L}_{\operatorname{rew}} + c_{o^{\prime}} \cdot {\mathcal{L}}_{\operatorname{trans}}\right),\\
    &\mathcal{L}_{\operatorname{rew}} = \mathbb{E}_{(o,a,o^\prime,r)\sim\mathcal{D},z\sim\psi_\omega(\cdot|o)}(R_\phi\left(o,a,o^\prime,z\right) - r)^2, \notag \\
    &\mathcal{L}_{\operatorname{trans}}  = \mathbb{E}_{(o,a,o^\prime,r)\sim\mathcal{D},z\sim\psi_\omega(\cdot|o)} - P_\theta\left(o^\prime|o,a,z\right), \notag 
\end{align}
where coefficient $c_{o\prime}$ is used to balance the scale of $\mathcal{L}_{\operatorname{trans}}$ and $\mathcal{L}_{\operatorname{rew}}$.
We examine the correlation between the latent variable learned end-to-end and inaccessible information in a simple setting, and the learned latent variable is indeed correlated with the inaccessible information. More details can be found in Appedix~\ref{sec:verify}.

Moreover, when using the learned latent variable model to train an agent, we adopt $k$-step branched model rollout in MBPO \citep{janner2019trust} to avoid compounding model error due to long-horizon rollout. Concretely, at each policy update of an agent, we sample $h$-step length experiences $\{( o_1,a_1,o^\prime_{1},r_1),\cdots,( o_h,a_h,o^\prime_{h},r_h)\}$ from policy rollout $\mathcal{D}$ and perform $k$-step model rollout starting from the last observation $o^\prime_{ h}$ under current policy $\pi$. The policy $\pi$ is updated on the merged $(h+k)$-step experiences $\{( o_1,a_1,o^\prime_{1},r_1),\cdots,( o_{h+k},a_{h+k},o^{\prime}_{h+k},r_{h+k})\}$ by policy optimization, \textit{e.g.}, PPO \citep{schulman2015trust}.

\subsection{Stable Policy Optimization on Model}\label{sec.3.2}
Now, we turn to analyze the benefits of such a model-based method over model-free independent policy optimization. We first theoretically analyze that independently performing policy optimization \textit{e.g.,}, TRPO \citep{schulman2015trust} or PPO \citep{schulman2017proximal}, on the latent variable model can make the learning process more stable.

In decentralized learning, from the perspective of an agent, given the \textit{true} latent variable function $\psi$, the discounted observation visitation frequency of $\mathcal{D}$ obtained by policy rollout is defined as 
\begin{equation*}
\rho ^{\pi ,\psi}\left( o \right) =\rho _{0}^{\pi ,\psi}\left( o \right) +\gamma \rho _{1}^{\pi ,\psi}\left( o \right) +\gamma ^2\rho _{2}^{\pi ,\psi}\left( o \right) +\cdots ,
\end{equation*}
where $\rho _{t}^{\pi ,\psi}\left(o\right) \triangleq Pr(o_t=o)$ and $o_t$ is the observation at timestep $t$ of experience from $\mathcal{D}$. Note that $\rho ^{\pi ,\psi}\left( o \right)$ is an unbiased estimate of discounted observation visitation frequency when interacting in the environment. Similarly, $\rho^{\pi ,\psi_{\omega}}$ denotes the discounted observation visitation frequency for experiences obtained by model rollout. During the learning process, $\pi^n$ and $\psi^n$ respectively denote the policy and latent variable function after the $n$th policy update. Then, we have the following theorem. All proofs are available in Appendix~\ref{sec:proofs}.
\begin{theorem}\label{th.1}
Define $\Delta \rho^n(o)\triangleq\rho^{\pi^{n},\psi^{n}}(o)-\rho^{\pi^{n-1},\psi^{n-1}}(o)$. Denote $ \left\| \Delta \rho ^n \right\| \triangleq \underset{o}{\max}|\rho^n(o)-\psi^{n-1}(o)|$, similarly for $\left\| \Delta \pi ^n \right\|$ and $\left\| \Delta \psi ^n \right\|$. It holds that,
\begin{equation*}
    \left\| \Delta \rho ^n \right\| \le C\left( \mathcal{E}_\pi + \mathcal{E}_\psi \right),
\end{equation*}
where $\mathcal{E}_\pi \triangleq \max\limits_n\left\|\Delta\pi^n\right\|$, $\mathcal{E}_\psi \triangleq \max\limits_n\left\|\Delta\psi^n\right\|$ and $C$ is a constant. Assume $\psi^n_\omega = (1-\alpha)\psi^{n-1}_\omega + \alpha\psi^n$ and $\psi^0_\omega = \psi^0$ \footnote{Since $\psi$ is varying and $\psi_\omega$ is continuously updated using the experiences from several recent policy rollouts, we use the form of soft-update for the relation between $\psi$ and $\psi_{\omega}$.}. It holds that $\mathcal{E}_\psi > \mathcal{E}_{\psi_\omega}$ and the bound above is lower when substituting $\psi$ with $\psi_\omega$.
\end{theorem}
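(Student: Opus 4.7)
The plan is to prove the bound and the strict inequality separately, then compose them. For the perturbation bound on $\|\Delta\rho^n\|$, I would view $(\pi,\psi)$ as inducing an effective Markov kernel on observations, $T_{\pi,\psi}(o'|o)=\sum_{a,z}\pi(a|o)\psi(z|o)P(o'|o,a,z)$, and write $\rho^{\pi,\psi}=(I-\gamma T_{\pi,\psi})^{-1}\rho_0$. The resolvent identity then yields $\Delta\rho^n=\gamma(I-\gamma T_{\pi^n,\psi^n})^{-1}(T_{\pi^n,\psi^n}-T_{\pi^{n-1},\psi^{n-1}})\rho^{\pi^{n-1},\psi^{n-1}}$. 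Because each $T$ is column-stochastic the resolvent is $L_1$-non-expansive up to the factor $1/(1-\gamma)$ and $\|\rho\|_1=1/(1-\gamma)$, so the two factors jointly contribute the universal constant $C=\gamma/(1-\gamma)^2$. The kernel difference splits via $\pi^n\psi^n-\pi^{n-1}\psi^{n-1}=(\pi^n-\pi^{n-1})\psi^n+\pi^{n-1}(\psi^n-\psi^{n-1})$, which together with the stochasticity of $\pi^{n-1}$ and $\psi^n$ gives $\|T_{\pi^n,\psi^n}-T_{\pi^{n-1},\psi^{n-1}}\|\le\|\Delta\pi^n\|+\|\Delta\psi^n\|\le\mathcal{E}_\pi+\mathcal{E}_\psi$, and the claimed bound follows.

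For the strict inequality $\mathcal{E}_\psi>\mathcal{E}_{\psi_\omega}$, I would use the soft-update recursion directly. Subtracting consecutive iterates gives $\Delta\psi^n_\omega=\alpha(\psi^n-\psi^{n-1}_\omega)$, and the key algebraic step $\psi^{n-1}-\psi^{n-1}_\omega=(1-\alpha)(\psi^{n-1}-\psi^{n-2}_\omega)$ follows directly from the recursion. Setting $d_n=\|\psi^n-\psi^{n-1}_\omega\|$, the triangle inequality produces $d_n\le\|\Delta\psi^n\|+(1-\alpha)d_{n-1}\le\mathcal{E}_\psi+(1-\alpha)d_{n-1}$, with base case $d_1=\|\Delta\psi^1\|\le\mathcal{E}_\psi$. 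A one-line induction yields $d_n\le\mathcal{E}_\psi(1-(1-\alpha)^n)/\alpha$, hence $\|\Delta\psi^n_\omega\|=\alpha d_n\le\mathcal{E}_\psi(1-(1-\alpha)^n)<\mathcal{E}_\psi$ for $\alpha\in(0,1)$. Taking the maximum over $n$ gives $\mathcal{E}_{\psi_\omega}<\mathcal{E}_\psi$, and substitution into the first part produces a strictly smaller upper bound.

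The main obstacle is the first part: keeping the perturbation argument clean when both $\pi$ and $\psi$ move, and matching norms between the $\max_o$ used in the excerpt's definition of $\|\Delta\rho^n\|$ and the $L_1$-type bounds natural for stochastic operators. I would either pass through $\|\cdot\|_\infty\le\|\cdot\|_1$ or rerun the estimate with $T^\top$ (which is row-stochastic) so that the same constant $C$ carries over without hidden dependence on $|\mathcal{O}|$. The second part is essentially a geometric-series calculation once the clean recursion $\psi^{n-1}-\psi^{n-1}_\omega=(1-\alpha)(\psi^{n-1}-\psi^{n-2}_\omega)$ is in hand, so I do not expect any real difficulty there.
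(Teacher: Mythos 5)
Your proposal is correct, and the two halves relate to the paper's proof differently. The second half (the soft-update argument) is essentially identical to the paper's Lemma~\ref{l.4}: the same identity $\Delta\psi^n_\omega=\alpha(\psi^n-\psi^{n-1}_\omega)$, the same recursion $d_n\le\mathcal{E}_\psi+(1-\alpha)d_{n-1}$, and the same geometric-series bound $d_n<\mathcal{E}_\psi/\alpha$. The first half, however, takes a genuinely different and in fact tighter route. The paper proves the visitation-frequency bound by an explicit per-timestep recursion (Lemmas~\ref{l.A.1}--\ref{l.3}) in the sup norm, which picks up a factor $|\mathcal{O}|$ at every step from bounding $\sum_o|\Delta\rho_t(o)|$ by $|\mathcal{O}|\cdot\|\Delta\rho_t\|_\infty$; since $\gamma|\mathcal{O}|$ need not be less than $1$, they are forced to truncate at a horizon $T$ and absorb a $2\gamma^T/(1-\gamma)$ tail, yielding a constant involving $\sum_t\gamma^t\sum_k(\gamma|\mathcal{O}|)^k$. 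Your resolvent identity in the $L_1$ operator norm avoids this entirely: column-stochasticity makes $(I-\gamma T)^{-1}$ bounded by $1/(1-\gamma)$ with no $|\mathcal{O}|$ factor and no truncation, and $\|\cdot\|_\infty\le\|\cdot\|_1$ recovers the sup-norm statement. Your kernel-difference splitting is exactly the paper's Lemma~\ref{l.A.1}. The one overstatement is calling $\gamma/(1-\gamma)^2$ a universal constant: since the theorem's $\|\Delta\pi^n\|$ and $\|\Delta\psi^n\|$ are sup norms over $(a,o)$ and $(z,o)$, converting $\sum_{a,z}|\pi_1\psi_1-\pi_2\psi_2|$ into these norms still costs factors of $|\mathcal{A}|$ and $|\mathcal{Z}|$ (as in the paper), so $C$ depends on those cardinalities --- but you flag this yourself, the theorem only asserts existence of some constant, and your constant is still strictly better than the paper's because it carries no dependence on $|\mathcal{O}|$ or on an arbitrary truncation horizon.
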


According to Theorem \ref{th.1}, the divergence of discounted observation visitation frequency is bounded by the divergence of policy and latent variable function. Again, the policy divergence can be constrained via policy optimization, like TRPO. Thus, the main difference lies in the divergence of latent variable function. As indicated by Theorem \ref{th.1}, the learned latent variable function $\psi_\omega$ has a smaller divergence between consecutive policy rollouts than the true latent variable function $\psi$. \textit{Therefore, independent policy optimization on experiences generated by the latent variable model can obtain more stationary observation visitation frequency than on the experiences collected in the varying environment, so the learning process of independent policy optimization becomes stable on the model.}

\begin{figure*}[ht]
        \centering
        \includegraphics[width=0.75\textwidth]{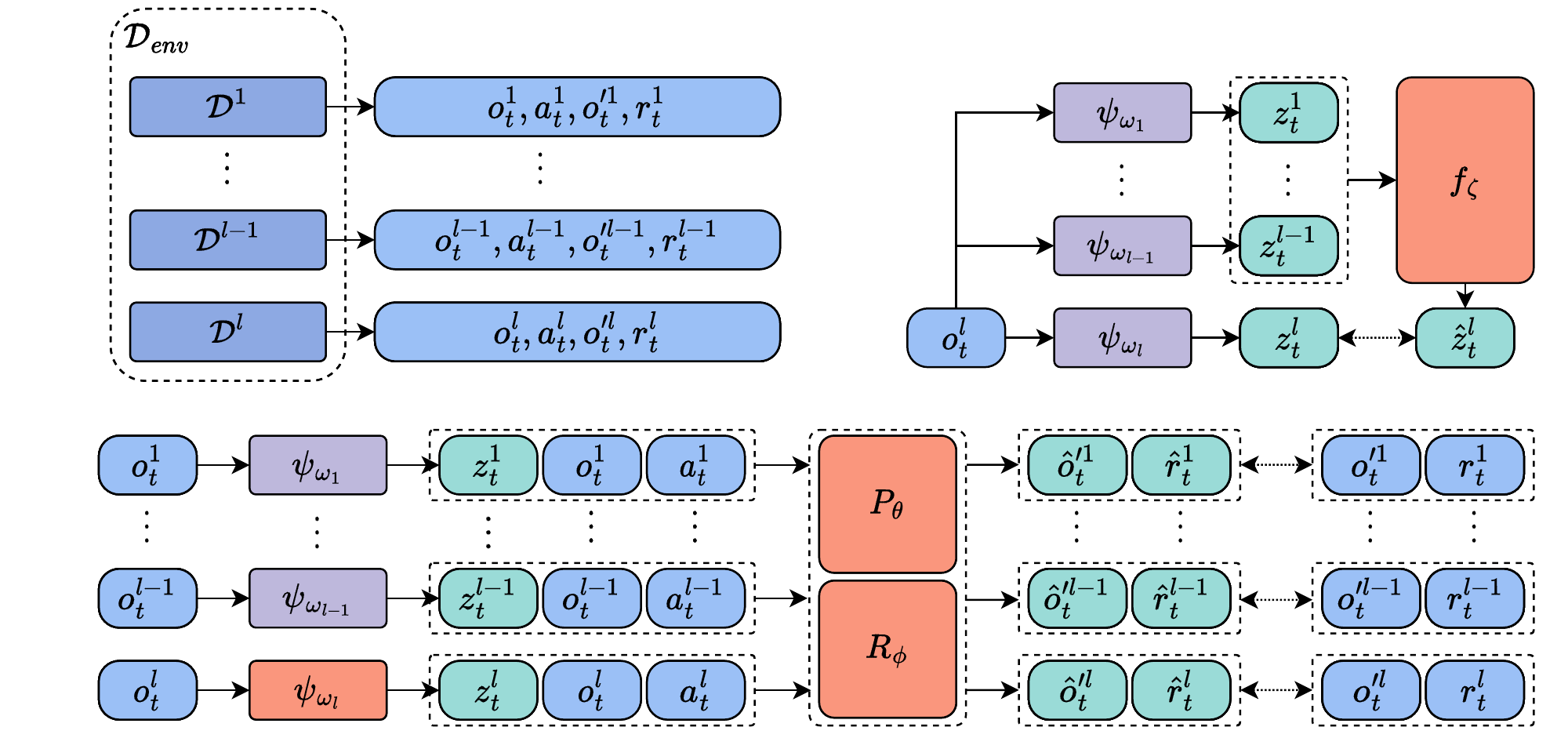}
        \vspace{-0.2cm}
        \caption{The environment model includes four modules: transition function $P_\theta$, reward function $R_\phi$, latent variable prediction function $f_\zeta$, and latent variable functions $\{\psi_{\omega_1},\cdots,\psi_{\omega_l}\}$ over $l$ consecutive policy rollouts. For learning, each agent maintains the experiences of $l$ consecutive policy rollouts, $P_\theta$ and $R_\phi$ learn on $l$ consecutive policy rollouts, $\psi_{\omega_l}$ learns on the experiences of $l$th policy rollout, and $f_\zeta$ learns to predict $l$th latent variable given $l-1$ latent variable functions.}
        \label{fig:model with prediction}
\end{figure*}

\subsection{Return Bounds}
We then analyze the bound of return gap between interacting in the environment and interacting with the model. If the return improvement of interacting with the model is higher than the bound, the agent can obtain the monotonic policy improvement when interacting in the environment. 

However, the return of interacting in the environment is hard to analyze in decentralized learning since the policies of other agents are inaccessible, we turn to analyze the return in policy rollout, which is an unbiased estimate of expected return in the environment.

Several bounds have been introduced in MBPO \citep{janner2019trust} for the return bound analysis, which however are not sufficient in decentralized learning. Thus, we need to introduce two new bounds that indicate the divergence of the latent variable function between consecutive policy rollouts and the error of the learned latent variable function. Here, we analyze the return bound with reward bound ($r_{\max}$), transition error ($\epsilon _\theta$), policy divergence ($\epsilon _\pi$), \textit{latent variable function divergence} ($\textcolor{blue!70!black}{\epsilon_\psi}$), and \textit{learned latent variable function error} ($\textcolor{red!30!orange}{\epsilon_\omega}$):
\begin{align*}
    r_{\max} &\triangleq \underset{o,a,z}{\max}\max\{R(o,a,z),R_\phi(o,a,z)\}, \\
    \epsilon _\theta &\triangleq \underset{t}{\max}D_{TV}\left(P^t\|P^t_\theta\right)\footnotemark[3] , \\ 
    \epsilon _\pi &\triangleq \underset{o}{\max}D_{TV}\left( \pi \|\pi^n \right), \\
    \textcolor{blue!70!black}{\epsilon_\psi} &\triangleq \underset{o}{\max}D_{TV}\left( \psi^n \|\psi^{n+1} \right),\\
    \textcolor{red!30!orange}{\epsilon_\omega} &\triangleq \underset{o}{\max}D_{TV}\left( \psi^n \|\psi^n_\omega \right),
\end{align*}
\footnotetext[3]{$D_{TV}$ means total variance and $D_{TV}\left(P^t\|P^t_\theta\right)$ means $\mathbb{E}_{a_t,z_t\sim(\pi,\psi^n_\omega)} D_{TV}\left( P\left( o_{t+1}|o_t,a_t\right) \|P_\theta\left( o_{t+1}|o_t,a_t,z_t \right) \right)$} 

\vspace{-4mm}
where $\psi^n$, $\psi^n_\omega$, and $\pi^n$ respectively refer to the true latent variable function, the learned latent variable function, and the policy of the $n$th policy rollout.

Additionally, we use several notations to represent different returns. The return in $n$th policy rollout with the true latent variable function $\psi$ is denoted as $\eta(\pi,\psi)$, the return in model rollout with the $n$th learned latent variable function $\psi_\omega$ is denoted as $\eta^{model}(\pi,\psi_\omega^n)$, and the return in $k$-step branched model rollout with $h$-step experiences of $n$th policy rollout is denoted as $\eta^{branch}((\pi^n,\pi),(\psi,\psi_\omega^n))$. Now we analyze the return bound of model rollout and branched model rollout with the newly introduced $\epsilon_\psi$ and $\epsilon_\omega$ in the following two theorems.

\begin{theorem}\label{th.2}
Denote the return gap between $n+1$th policy rollout and model rollout with $n$th learned model as $\left| \eta \left( \pi ,\psi ^{n+1} \right) -\eta ^{\operatorname{model}}\left( \pi ,\psi _{\omega}^{n} \right) \right| \triangleq \Delta \eta $ ,which is bounded as:
\begin{align*}
 \Delta \eta \le \underset{C\left(\epsilon _\theta,\epsilon _{\pi}, \textcolor{red!30!orange}{\epsilon _\omega},\textcolor{blue!70!black}{\epsilon _{\psi}} \right)}{\underbrace{\frac{2r_{\max}}{\left( 1-\gamma \right) ^2}\left( \gamma \epsilon_\theta+2\epsilon_{\pi}+2\epsilon _\omega+ \epsilon _{\psi} \right) }}.
\end{align*}
\end{theorem}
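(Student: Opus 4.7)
The plan is to bound $\Delta\eta$ by a telescoping triangle inequality that isolates each of the four error sources. Concretely, I would insert two intermediate returns and write
\begin{align*}
\Delta\eta \;\le\; &\bigl|\eta(\pi,\psi^{n+1})-\eta(\pi,\psi^{n})\bigr| \\
&+ \bigl|\eta(\pi,\psi^{n})-\eta(\pi,\psi_\omega^{n})\bigr| \\
&+ \bigl|\eta(\pi,\psi_\omega^{n})-\eta^{\operatorname{model}}(\pi,\psi_\omega^{n})\bigr|,
\end{align*}
so that the three summands are controlled by $\epsilon_\psi$, $\epsilon_\omega$, and the pair $(\epsilon_\theta,\epsilon_\pi)$, respectively. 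Each piece isolates a single mismatch: latent drift between consecutive rollouts, approximation error of the learned latent variable function, and model-vs-environment gap inside the learned MDP.

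For the first two summands I would apply the discounted-visitation simulation lemma. Two rollout distributions that share $(\pi,P_\theta,R_\phi)$ but differ in the per-step latent-variable law can be coupled step-by-step via the maximal coupling of $\psi$ and $\psi'$ at every observation, so the event that the two latent draws disagree occurs with probability $D_{TV}(\psi\|\psi')$ at each step. Unrolling the coupling through the discounted occupancy and multiplying by $r_{\max}$ inflates a per-step TV of $\epsilon$ into a return gap of order $\frac{2r_{\max}}{(1-\gamma)^2}\epsilon$, which is essentially the TRPO-style inequality of \citet{schulman2015trust} transported to the observation-latent kernel $\mathbb{E}_{z\sim\psi}P_\theta(o'|o,a,z)$. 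The factor of $2$ that appears on $\epsilon_\omega$ but not on $\epsilon_\psi$ reflects that the second summand additionally changes the reward evaluation $R_\phi(o,a,z)$, so I would split it into a trajectory-only piece and a reward-only piece, each contributing $\epsilon_\omega$.

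For the third summand the latent variable function is already fixed to the common $\psi_\omega^n$ on both sides, so the remaining gap is purely between the real environment and the learned MDP $(P_\theta,R_\phi)$ with transition TV error $\epsilon_\theta$ and a policy shift from the data-collecting policy $\pi^n$ to the evaluation policy $\pi$ measured by $\epsilon_\pi$. This is the exact setting of the MBPO return lemma of \citet{janner2019trust}, which directly yields a bound of $\frac{2r_{\max}}{(1-\gamma)^2}(\gamma\epsilon_\theta+2\epsilon_\pi)$. Summing the three contributions and pulling out the common factor $\frac{2r_{\max}}{(1-\gamma)^2}$ recovers the stated inequality.

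The hard part will be the bookkeeping in the second summand. When $\psi^n$ is replaced by $\psi_\omega^n$, the same $z$ simultaneously drives the next-observation distribution through $P_\theta(o'|o,a,z)$ and the reward through $R_\phi(o,a,z)$, so a naive triangle inequality either double-counts the mismatch or loses the tight per-step $\epsilon_\omega$ factor, since total variation composes only sub-additively under mixtures. The cleanest fix I foresee is to run a single maximal coupling of $z\sim\psi^n$ and $z'\sim\psi_\omega^n$ step-by-step along the rollout and then use a standard recursion on the discounted occupancy so that mismatched steps contribute additively in $\epsilon_\omega$, producing exactly the coefficient $2\epsilon_\omega$ demanded by the theorem.
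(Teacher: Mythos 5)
Your proposal is correct and rests on the same machinery as the paper's proof: a triangle inequality over intermediate returns, with each leg controlled by a simulation-lemma-type bound (the paper's Lemma~\ref{l.c.3}) that accumulates the per-step total-variation errors of the policy, the latent-variable law, and the transition kernel through the discounted occupancy, producing the $\frac{2r_{\max}}{(1-\gamma)^2}$ prefactor. The only real difference is the choice of intermediates: the paper inserts a single pivot, the return $\eta(\pi^n_D,\psi^n)$ of the actual $n$th policy rollout, so that $L_1$ pays $\epsilon_\pi+\epsilon_\psi$ and $L_2$ pays $\epsilon_\pi+\epsilon_\omega+\gamma\epsilon_\theta$ (whence the $2\epsilon_\pi$); you keep $\pi$ fixed and insert $\eta(\pi,\psi^{n})$ and $\eta(\pi,\psi_\omega^{n})$, deferring the appearance of the data-collecting policy to the last leg, where the MBPO-style detour through $\pi^n$ supplies the same $2\epsilon_\pi$. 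Both routes are valid and yield the same bound. One correction: your account of the coefficient $2$ on $\epsilon_\omega$ is spurious. In your second summand the reward function is identical on both sides (only the latent law changes), which is structurally the same situation as your first summand, and the simulation lemma already absorbs the reward's dependence on $z$ into the total variation of the joint $(o,a,z)$ occupancy via the $2r_{\max}$ factor; it therefore yields a single $\epsilon_\omega$, exactly as it yields a single $\epsilon_\psi$, with no double counting to repair. Indeed the paper's own proof arrives at the tighter constant $\gamma\epsilon_\theta+2\epsilon_\pi+\epsilon_\omega+\epsilon_\psi$, so the $2\epsilon_\omega$ in the theorem statement is simply slack, and no coupling argument is needed to manufacture it, since the coefficient-one bound implies the stated one.
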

\begin{theorem}\label{th.3}
 Denote the return gap between $n+1$th policy rollout and branched model rollout with $n$th learned model as $\left| \eta \left( \pi ,\psi ^{n+1} \right) -\eta ^{\operatorname{branch}}\left( \left( \pi^n,\pi \right) ,\left( \psi ^n,\psi _{\omega}^{n} \right) \right) \right| \triangleq \Delta \eta^{\operatorname{branch}} $, which is bounded as:
 
\begin{equation*}
\Delta \eta^{\operatorname{branch}} \le C\left(\epsilon _\theta,\epsilon _{\pi}, \textcolor{red!30!orange}{\epsilon _\omega}, \textcolor{blue!70!black}{\epsilon _{\psi}} \right).
\end{equation*}
 
\end{theorem}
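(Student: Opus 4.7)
The plan is to prove Theorem~\ref{th.3} by mirroring the branched-rollout analysis of MBPO while carefully tracking the two new sources of error introduced by the decentralized setting: the drift of the true latent variable function between consecutive rollouts ($\epsilon_\psi$) and the learning error of $\psi_\omega^n$ relative to $\psi^n$ ($\epsilon_\omega$). The same TV-based machinery that produced Theorem~\ref{th.2} still applies, but the branched structure means the policy divergence only contributes over the first $h$ real steps while the model and latent-learning errors only contribute over the last $k$ synthetic steps, which is what saves the bound relative to the unbranched Theorem~\ref{th.2}.

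First I would split $\Delta\eta^{\operatorname{branch}}$ by the triangle inequality through two intermediate returns:
\begin{align*}
\Delta\eta^{\operatorname{branch}}
&\le \bigl|\eta(\pi,\psi^{n+1}) - \eta(\pi,\psi^{n})\bigr| \\
&\quad + \bigl|\eta(\pi,\psi^{n}) - \eta^{\operatorname{branch}}((\pi^n,\pi),(\psi^n,\psi^n))\bigr| \\
&\quad + \bigl|\eta^{\operatorname{branch}}((\pi^n,\pi),(\psi^n,\psi^n)) - \eta^{\operatorname{branch}}((\pi^n,\pi),(\psi^n,\psi_\omega^n))\bigr|.
\end{align*}
The first piece isolates the drift of the true latent variable function, the second isolates the cost of rolling out $\pi^n$ instead of $\pi$ on the $h$-step prefix (both with the true dynamics and true $\psi^n$), and the third isolates the model-transition and latent-learning errors on the $k$-step suffix. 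Each piece I would then control via the standard lemma used in the proof of Theorem~\ref{th.2}: if two rollouts produce per-timestep observation distributions whose TV is bounded by $\delta_t$, then their return gap is at most $2r_{\max}\sum_t \gamma^t \delta_t$.

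For the first piece, a telescoping argument (identical policies, $\epsilon_\psi$-close latent functions, identical dynamics) accumulates $\epsilon_\psi$ per step and contributes of order $r_{\max}\epsilon_\psi/(1-\gamma)^2$. For the second piece, both rollouts use the true $P$ and true $\psi^n$ but disagree in policy over the first $h$ steps by $\epsilon_\pi$; the accumulated TV is $O(h\epsilon_\pi)$, and from step $h+1$ onward both rollouts follow $(\pi,\psi^n)$ so no additional divergence accrues, yielding $O(r_{\max} h \epsilon_\pi/(1-\gamma))$. For the third piece, the first $h$ steps are identical real data, and on the $k$-step suffix the rollouts differ by $\epsilon_\theta$ per transition and by $\epsilon_\omega$ per latent sample, contributing $O(r_{\max} k (\gamma\epsilon_\theta + 2\epsilon_\omega)/(1-\gamma))$. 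Summing the three contributions yields the claimed $C(\epsilon_\theta,\epsilon_\pi,\epsilon_\omega,\epsilon_\psi)$ with explicit polynomial coefficients in $h,k,\gamma,r_{\max}$.

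The hard part will be the third piece, where two compounding sources of error must be propagated through the $k$-step model rollout while correctly handling that $\psi_\omega^n$ is only a learned approximation to the inaccessible $\psi^n$. Each step under $(P_\theta,\psi_\omega^n)$ injects both a transition TV bounded by $\epsilon_\theta$ and a latent-sampling TV bounded by $\epsilon_\omega$, and a careful coupling (in the same spirit as the proof of Theorem~\ref{th.2}) is required so that the two sources add rather than double-count when we convert them into a bound on $\delta_t$. Once that suffix bound is in hand, combining it with the prefix and drift contributions delivers the advertised $C(\epsilon_\theta,\epsilon_\pi,\epsilon_\omega,\epsilon_\psi)$, with the virtue that (unlike Theorem~\ref{th.2}) the coefficients multiplying $\epsilon_\pi$ and $\epsilon_\theta$ scale with $h$ and $k$ rather than with $1/(1-\gamma)$, so moderate $h$ and $k$ make the branched bound tighter.
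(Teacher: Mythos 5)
Your proposal is correct and rests on the same underlying machinery as the paper's proof --- propagating per-timestep total-variation bounds on the joint $(o,a,z)$ distribution and converting them to a return gap via $2r_{\max}\sum_t\gamma^t\delta_t$ --- but you organize the triangle inequality differently. The paper inserts a single intermediate quantity, $\eta^{\operatorname{branch}}\!\left((\pi^n,\pi^n),(\psi^n,\psi^n)\right)$, and invokes its general branched-rollout lemma (Lemma~\ref{l.c.4}) twice, so that $\epsilon_\pi$ and $\epsilon_\psi$ are charged jointly in the first piece and $\epsilon_\pi$ and $\epsilon_\omega$ jointly in the second. You instead insert two intermediate returns so that $\epsilon_\psi$ (latent drift), $\epsilon_\pi$ (prefix policy mismatch), and $(\epsilon_\theta,\epsilon_\omega)$ (suffix model and latent-learning error) are each isolated in its own term; the first two pieces then need only the plain rollout-return lemma rather than the full branched version. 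Both routes yield a valid $C(\epsilon_\theta,\epsilon_\pi,\epsilon_\omega,\epsilon_\psi)$; your finer split makes the bookkeeping more transparent and cleanly exhibits the $h$- and $k$-scaling of the $\epsilon_\pi$ and $\epsilon_\theta$ coefficients, at the cost of an undamped $\gamma/(1-\gamma)^2$ coefficient on $\epsilon_\psi$ from the full-horizon comparison in your first piece, where the paper's grouping picks up an extra $\gamma^{h+k+1}$ damping on the tail. One point to pin down when you write this out: whether your auxiliary quantity $\eta^{\operatorname{branch}}\!\left((\pi^n,\pi),(\psi^n,\psi^n)\right)$ runs the suffix under the true dynamics $P$ or the learned $P_\theta$ determines whether $\epsilon_\theta$ is charged in your second or third piece --- your prose places it in the third, which is consistent only if that intermediate object uses $P$ throughout, so state that definition explicitly to avoid double-counting or dropping the transition error.
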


According to Theorem \ref{th.2} and \ref{th.3}, we can guarantee the monotonic improvement in the environment via improving the return in model rollout or branched model rollout beyond a bound linear to $(\epsilon_\pi,\epsilon_\theta,\epsilon_\omega,\epsilon_\psi)$. In these bounds, $\epsilon_\theta$ and $\epsilon _\omega$ are limited via supervised learning and $\epsilon _{\pi}$ is constrained by policy optimization. However, $\epsilon_{\psi}$ is left \textit{unrestricted}. In the following, we try to find a better bound in which all elements are controllable.

\subsection{Latent Variable Prediction}\label{sec.3.4}

In order to restrict the impact of divergence of the latent variable function, we introduce one new error bound, which measures the divergence between the learned latent variable function and the true latent variable function in incoming policy rollout. Formally, such an error bound in $n$th policy rollout is defined as:
\begin{equation*}
\textcolor{green!65!blue}{\hat{\epsilon}_\omega}\triangleq \underset{o}{\max}D_{TV}\left( \psi^{n+1} \|\psi^n_\omega \right).
\end{equation*}
Now we use $\hat{\epsilon}_\omega$ in place of $\epsilon_\psi$ to analyze the return bound of model rollout and branched model rollout again in the following two theorems.
\begin{theorem}\label{th.4}
Denote the return gap of $n+1$th policy rollout and model rollout with $n$th learned model as $\left| \eta \left( \pi ,\psi ^{n+1} \right) -\eta ^{\operatorname{model}}\left( \pi ,\psi _{\omega}^{n} \right) \right| \triangleq \Delta \eta$, which is bounded as:
\begin{equation*}
\Delta \eta \le C\left(\epsilon _\theta,\epsilon _{\pi},\textcolor{green!65!blue}{\hat{\epsilon}_\omega} \right).
\end{equation*}
\end{theorem}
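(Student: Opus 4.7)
The plan is to mimic the proof of Theorem 4.2 but with a different triangle-inequality decomposition that avoids introducing the intermediate quantity $\psi^n$. Rather than passing through the true latent function of the $n$th rollout (which forces the two-term bound $\epsilon_\psi+2\epsilon_\omega$ in Theorem 4.2), I introduce only one intermediate return, $\eta(\pi,\psi_\omega^n)$---the expected return under the true environment dynamics but with the latent marginalized by the learned function $\psi_\omega^n$---and split
$$\Delta \eta \;\le\; \bigl|\eta(\pi,\psi^{n+1})-\eta(\pi,\psi_\omega^n)\bigr|\;+\;\bigl|\eta(\pi,\psi_\omega^n)-\eta^{\operatorname{model}}(\pi,\psi_\omega^n)\bigr|.$$

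First I would handle the latent-swap term. Expanding the return as a discounted sum over the occupancy measure, the latent function appears only inside $\mathbb{E}_{z\sim\psi(\cdot\mid o)}$ around the one-step transition and reward operators. A standard TV inequality gives $|\mathbb{E}_{z\sim\psi^{n+1}}[f(z)]-\mathbb{E}_{z\sim\psi_\omega^{n}}[f(z)]|\le 2\|f\|_\infty D_{TV}(\psi^{n+1}\|\psi_\omega^n)\le 2\|f\|_\infty\hat{\epsilon}_\omega$ at every observation. Telescoping this one-step difference over the horizon by the standard simulation-lemma argument then yields a contribution that is linear in $\hat{\epsilon}_\omega$ with the usual $r_{\max}/(1-\gamma)^2$-type prefactor, mirroring the way Theorem 4.2 handles its $\epsilon_\psi$ and $\epsilon_\omega$ terms.

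Next, the model-swap term has the latent function pinned at $\psi_\omega^n$ in both returns, so the two MDPs differ only through the learned transition/reward model and the shift between the data-collection policy and $\pi$. This is exactly the MBPO-style bound Theorem 4.2 already uses, contributing a term of order $(\gamma\epsilon_\theta+2\epsilon_\pi)/(1-\gamma)^2$. Summing the two pieces produces a bound of the claimed form $C(\epsilon_\theta,\epsilon_\pi,\hat{\epsilon}_\omega)$, linear in each of the three quantities, with $\epsilon_\psi$ and $\epsilon_\omega$ both absorbed into the single $\hat{\epsilon}_\omega$.

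The main obstacle is verifying that the single direct swap from $\psi^{n+1}$ to $\psi_\omega^n$ really subsumes the two successive swaps from $\psi^{n+1}$ through $\psi^n$ to $\psi_\omega^n$ without a hidden cross-term re-introducing either $\epsilon_\psi$ or $\epsilon_\omega$ on its own. Concretely, every occurrence of the latent distribution in the telescoping simulation-lemma expansion must be expressible as a single expectation against $\psi^{n+1}$ or against $\psi_\omega^n$, so that the TV distance between these two distributions---namely $\hat{\epsilon}_\omega$---is the only latent-related quantity incurred. Once that bookkeeping is confirmed, the remaining computations are a routine re-run of the Theorem 4.2 algebra with the shorter decomposition.
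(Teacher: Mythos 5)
Your proposal is correct and takes essentially the same route as the paper: a two-term triangle inequality through an intermediate return evaluated in the true dynamics but with the learned latent function $\psi_\omega^n$, followed by the rollout-return simulation lemma on each piece, so that the single hop $\psi^{n+1}\to\psi_\omega^n$ (i.e., $\hat{\epsilon}_\omega$) replaces the two-hop $\psi^{n+1}\to\psi^n\to\psi_\omega^n$ of Theorem~\ref{th.2}. The only difference is that you pin the intermediate policy at $\pi$ where the paper uses the data-collection policy $\pi_D^n$; carried through consistently your split would in fact drop the $2\epsilon_\pi$ term entirely (there is no policy shift in either of your two pieces), so re-inserting $(\gamma\epsilon_\theta+2\epsilon_\pi)$ for the model-swap term is an internal inconsistency, but a harmless one since it only loosens the bound to exactly the form the paper proves.
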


\begin{theorem}\label{th.5}
The return gap of $n+1$th policy rollout and branched model rollout with $n$th learned model as $\left| \eta \left( \pi ,\psi ^{n+1} \right) -\eta ^{\operatorname{branch}}\left( \left( \pi^n,\pi \right) ,\left( \psi ^n,\psi _{\omega}^{n} \right) \right) \right| \triangleq \Delta \eta^{\operatorname{branch}}$ is bounded as:
\begin{equation*}
\Delta \eta^{\operatorname{branch}} \le C\left(\epsilon _\theta,\epsilon _{\pi}, \textcolor{red!30!orange}{\epsilon _\omega},\textcolor{green!65!blue}{\hat{\epsilon}_\omega} \right).
\end{equation*}
\end{theorem}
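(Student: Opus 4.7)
The plan is to reduce the branched-rollout bound from Theorem~\ref{th.3} to one that depends on $\hat{\epsilon}_\omega$ rather than on the unrestricted quantity $\epsilon_\psi$. The crucial observation is that although $\epsilon_\psi = \max_o D_{TV}(\psi^n \| \psi^{n+1})$ is not directly controllable, the learned latent variable function $\psi^n_\omega$ can serve as an intermediary: by the triangle inequality for total variation distance,
\begin{equation*}
\max_o D_{TV}(\psi^n \| \psi^{n+1}) \le \max_o D_{TV}(\psi^n \| \psi^n_\omega) + \max_o D_{TV}(\psi^n_\omega \| \psi^{n+1}),
\end{equation*}
which is precisely $\epsilon_\psi \le \epsilon_\omega + \hat{\epsilon}_\omega$. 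Plugging this into the bound of Theorem~\ref{th.3} immediately collapses the four-argument bound into the desired $C(\epsilon_\theta, \epsilon_\pi, \epsilon_\omega, \hat{\epsilon}_\omega)$ form, at the cost of at most a factor of two in front of the $\epsilon_\omega$ and $\hat{\epsilon}_\omega$ contributions.

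If tight constants matter, the cleaner route is to redo the telescoping of Theorem~\ref{th.3} from scratch and insert $\psi^n_\omega$ at the right place in each segment of the branched rollout. For the $h$-step roll-in segment the data is drawn under $(\pi^n,\psi^n)$ while the target expectation is under $(\pi,\psi^{n+1})$; the policy gap is absorbed into $\epsilon_\pi$ in the standard way, and the latent variable gap is decomposed via $\psi^n_\omega$ into $\epsilon_\omega + \hat{\epsilon}_\omega$. For the subsequent $k$-step model-rollout segment the sampler uses $(\pi,P_\theta,\psi^n_\omega)$, and the discrepancy against the target under $(\pi,P,\psi^{n+1})$ is controlled by $\epsilon_\theta$ together with $\hat{\epsilon}_\omega$, the latter appearing directly because it is by definition the TV distance between $\psi^n_\omega$ and $\psi^{n+1}$. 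Summing the discounted per-step contributions of both segments and bounding each immediate reward by $r_{\max}$ reproduces the MBPO-style $\frac{1}{(1-\gamma)^2}$ prefactors and yields the stated linear combination.

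The main obstacle is bookkeeping rather than anything conceptual. I need to be careful not to double-count $\epsilon_\omega$: it should appear only in the $h$-step segment, since the $k$-step segment's latent variable gap against $\psi^{n+1}$ is already captured by $\hat{\epsilon}_\omega$ alone. A secondary subtlety is that $\epsilon_\omega$ is indexed at round $n$ while $\hat{\epsilon}_\omega$ involves $\psi^{n+1}$, so the index alignment must be verified at every occurrence; this is essentially the same step that distinguishes Theorem~\ref{th.5} from Theorem~\ref{th.4}, where the pure model rollout never touches real data generated under $\psi^n$ and therefore does not need the $\epsilon_\omega$ term. Once the two segments are combined, the bound has the same shape as Theorem~\ref{th.3} with $\epsilon_\psi$ replaced by $\hat{\epsilon}_\omega$, which is what the theorem asserts.
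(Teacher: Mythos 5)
Your proposal is correct, and your primary route is genuinely different from the paper's. The paper does not derive Theorem~\ref{th.5} as a corollary of Theorem~\ref{th.3}; instead it re-runs the two-term decomposition with a different intermediate quantity, namely $\eta^{\operatorname{branch}}\bigl(\bigl(\pi^n,\pi^n\bigr),\bigl(\psi_\omega^n,\psi_\omega^n\bigr)\bigr)$ (in Theorem~\ref{th.3} the intermediate used $\psi^n$ in both segments), and applies Lemma~\ref{l.c.4} to each of the two resulting gaps: the first gap contributes $\epsilon_\pi$, $\hat{\epsilon}_\omega$ in both segments and $\epsilon_\theta$ in the model segment, while the second contributes $\epsilon_\omega$ only in the roll-in segment and $\epsilon_\pi$ only in the model segment. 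Your second sketch reproduces exactly this bookkeeping, including the observation that $\epsilon_\omega$ enters only through the $h$-step roll-in. Your first route --- using that $D_{TV}$ is a metric, so $\epsilon_\psi \le \epsilon_\omega + \hat{\epsilon}_\omega$, and substituting into the explicit bound of Theorem~\ref{th.3} --- is shorter and perfectly valid for the statement as written, since the claim only requires a bound linear in $(\epsilon_\theta,\epsilon_\pi,\epsilon_\omega,\hat{\epsilon}_\omega)$; what it costs is a looser coefficient on $\epsilon_\omega$ (roughly a factor of two, and an extra $(k+1)\gamma^{h+1}$ contribution) compared with the paper's direct derivation, which keeps $\epsilon_\omega$ out of the model segment entirely. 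What the paper's route buys in exchange for the longer computation is the tighter constant; what yours buys is a one-line reduction that makes the logical relationship between Theorems~\ref{th.3} and~\ref{th.5} transparent.
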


Now all elements of the bounds are controllable, once we can constrain $\hat{\epsilon}_\omega$ in the learning process. To achieve this, we introduce a latent variable prediction function, which predicts the latent variable distribution given observation $o$ in incoming policy rollout via latent variable distributions of $o$ in the latest $l-1$ policy rollouts. However, as the true latent variable function cannot be obtained directly for an agent, the latent variable prediction function $f_\zeta$ can instead minimize:
\begin{equation*}
\underset{o}{\max}D_{TV}\left(\psi_{\omega_l}(o)\|f\left(\psi_{\omega_1}(o),\cdots,\psi_{\omega_{l-1}}(o)\right)\right).
\end{equation*}
With such a latent variable prediction function, $\hat{\epsilon}_\omega$ is controllable.


\subsection{Algorithm}

With all the theoretical analysis and discussions above, we are ready to present the learning of \textit{model-based decentralized policy optimization} (MDPO).

As illustrated in Figure~\ref{fig:model with prediction}, the environment model consists of transition function $P_\theta$, reward function $R_\phi$, latent variable prediction function $f_\zeta$, and latent variable functions $\{\psi_{\omega_1},\cdots,\psi_{\omega_l}\}$ over recent $l$ consecutive policy rollouts. The experiences of the $l$ consecutive policy rollouts $\mathcal{D}_{env} =\{\mathcal{D}^1,\cdots, \mathcal{D}^l\}$ are also stored.

After the latest policy rollout $l$, we update the transition function and reward function, and learn the latent variable function $\psi_{\omega_l}$ of policy rollout $l$ by optimizing the objective: 
\begin{align}\label{e.1}
     &\underset{\theta,\omega_l,\phi}{\min} \sum_{j=1}^l \left(\mathcal{L}^j_{\operatorname{rew}}+c_{o^{\prime}} \cdot \mathcal{L}^j_{\operatorname{trans}} \right), \\
      &\mathcal{L}^j_{\operatorname{rew}} = \mathbb{E}_{(o,a,o^\prime,r)\sim\mathcal{D}^j,z\sim\psi_{\omega_j}(o)} \left(R_\phi\left(o,a,z\right) - r\right)^2, \notag \\
      &\mathcal{L}^j_{\operatorname{trans}} = \mathbb{E}_{(o,a,o^\prime,r)\sim\mathcal{D}^j,z\sim\psi_{\omega_j}(o)} - P_\theta\left(o^\prime|o,a,z\right). \notag 
\end{align}
In \eqref{e.1}, $\psi_{\omega_l}$ is obtained by updating $\psi_{\omega_{l-1}}$ using $\mathcal{D}^l$, while $P_\theta$ and $R_\phi$ are updated using $\mathcal{D}_{env}$ to make sure they are stable across policy rollouts. Then, the latent variable prediction function $f_\zeta$ is updated using $\mathcal{D}^l$ by optimizing the objective:
\begin{equation}\label{e.2}
   \underset{\zeta}{\max} \ \mathbb{E}_{o\sim\mathcal{D}^l,z^{i}\sim \psi_{\omega_i}(o)}\left[f_\zeta\left(z^{l}|z^{1},\cdots,z^{l-1}\right)\right]. 
\end{equation}
For model rollout, the model predicts the transition in incoming policy rollout given observation $o$ and action $a$ via $l-1$ latest learned latent variable functions ($\psi_2,\cdots,\psi_l$) as:
\begin{align}\label{e.3}
  & z\sim f_\zeta\left(\cdot | z^{2}\sim\psi_{\omega_2}(o),\cdots,z^{l}\sim\psi_{\omega_{l}}(o)\right), \notag \\
  & \hat{o}^\prime\sim P_\theta\left(o,a,z\right),\\ 
  & \hat{r}=R_\phi\left(o,a,z\right) \notag.  
\end{align}
Finally, the policy is updated using the branched model rollout by policy optimization, such as PPO or TRPO. We summarize the full learning procedure of MDPO in Algorithm \ref{al2}.

\begin{algorithm}[!t]
\caption{\textbf{MDPO}}
\label{al2}
\begin{algorithmic}[1]
\STATE \textbf{Initiate}  $\mathcal{D}_{env}=\{\mathcal{D}^1,\cdots,\mathcal{D}^l\}$, $\pi$, $P_{\theta}$, $R_{\phi}$, $ \Psi=\{\psi_{\omega_1},\cdots,\psi_{\omega_l}\}$, $f_\zeta$.
\REPEAT
\STATE  policy rollout in environment and obtain $\mathcal{D}^l$
{\color{black}\STATE  optimize $P_{\theta}$,$R_{\phi}$ and $\psi_{\omega_l}$ on $\mathcal{D}_{env}$ with \eqref{e.1}}
{\color{black}\STATE  optimize prediction function $f_\zeta$ on $\mathcal{D}^l$ with \eqref{e.2}}
{\color{black}\STATE  obtain branched model rollout $\mathcal{D}_{rollout}$ based on $\mathcal{D}^l$ using $P_{\theta}$, $R_{\phi}$, $\pi$, $\Psi$, and $f_\zeta$ with \eqref{e.3}}
\STATE  optimize policy $\pi$ using $\mathcal{D}_{rollout}$ by PPO or TRPO
\FOR{$j \gets 1 , \ldots, l-1$}
\STATE $\mathcal{D}^j\leftarrow\mathcal{D}^{j+1}$, $\psi_{\omega_j}\leftarrow\psi_{\omega_{j+1}}$
\ENDFOR
\UNTIL{terminate}		
\end{algorithmic}
\end{algorithm}

\section{Experiments}


For evaluation, we compare MDPO, MDPO without latent variable prediction (denoted by MDPO w/o prediction), and independent PPO (IPPO) \citep{schulman2017proximal} on a set of cooperative multi-agent tasks including a stochastic game, multi-agent particle environment (MPE) \citep{lowe2017multi}, and multi-agent MuJoCo \citep{peng2021facmac}, and additionally compare with independent TRPO (ITRPO) \citep{schulman2015trust} in Google Research Football (GRF) \citep{kurach2020google}. We do not consider StarCraft multi-agent challenge (SMAC) \citep{samvelyan2019starcraft}, because IPPO has been shown to perform very well in SMAC \citep{de2020independent,benchmark}, close enough to centralized training with decentralized execution methods like QMIX \citep{rashid2018qmix} and MAPPO \citep{yu2021surprising}. Thus, the gain of MDPO may not be clearly evidenced there.

By experiments, we try to answer the following three questions: 
\vspace{-2mm}
\begin{itemize}
\setlength{\itemsep}{0pt}
    \item[1.] \textit{Does the latent variable model help to generate experiences with more stationary observation visitation frequency experimentally?}
    \item[2.] \textit{Does latent variable prediction help to control $\hat{\epsilon}_\omega$?}
    \item[3.] \textit{Does MDPO help to improve performance in decentralized learning?}
\end{itemize}
\vspace{-2mm}

For a fair comparison, the network architecture and hyperparameters are the same for IPPO and MDPO. The number of environment steps taken in each round (policy rollout, network update) is consistent and thus we compare the performance of methods under the same number of environment steps and policy updates. Note that since we consider fully decentralized learning, for all methods, \textit{agents do not use parameter-sharing.} Indeed, parameter-sharing should not be allowed in decentralized learning \citep{terry2020revisiting}. More details on experiment settings, implementation, and hyperparameters are available in Appendix~\ref{sec:hyperparameters}.  All results are presented using the mean and standard deviation of five runs with different random seeds.

\begin{figure*}[ht]
        \centering
        \includegraphics[width=.8\textwidth]{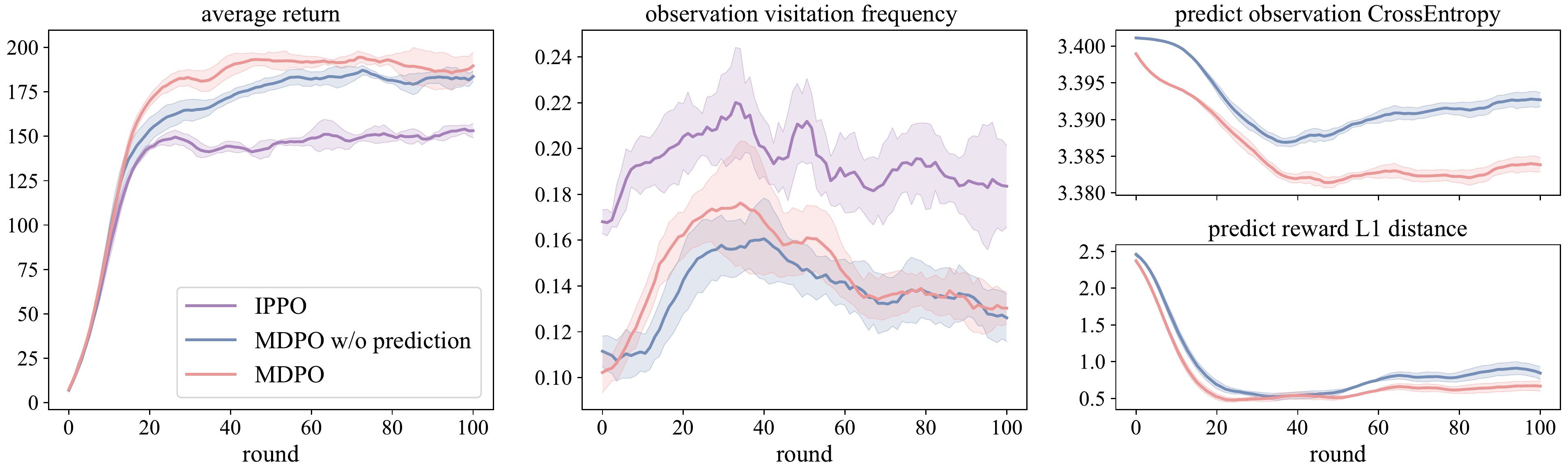}
        \vspace{-0.2cm}
        \caption{Learning curves of MDPO compared with MDPO w/o prediction and IPPO on the stochastic game: average return (left), observation visitation frequency divergence (mid), and model prediction errors (right). Each round is 1600 environment steps.}
        \label{matrix}
\end{figure*}


\subsection{Stochastic Game}

The stochastic game is a cooperative game with 30 observations (states), 3 agents, and 5 actions for each agent, and every episode consists of 40 steps. The transition function and the shared reward function are randomly generated. The game is chosen to verify our theoretical results.  

Figure \ref{matrix} (left) shows the learning curves of MDPO, MDPO w/o prediction, and IPPO, among which MDPO performs better throughout the learning process. With a finite observation space in this game, we calculate the divergence of observation visitation frequencies ($\left\|\Delta\rho\right\|$ in Section \ref{sec.3.2}) in consecutive rollouts. Concretely, we calculate the L1 distance of observation visitation frequency over all observations in consecutive rollouts (policy rollouts for IPPO and branched model rollouts for MDPO), and their curves are shown in Figure \ref{matrix} (mid). We can see that the latent variable model generates experiences with more stationary observation visitation frequency than IPPO, which is consistent with Theorem \ref{th.1}. This may account for the superior performance of MDPO w/o prediction over IPPO.

We also examine how well the latent variable prediction helps to control the prediction error ($\hat{\epsilon}_\omega$). As the real latent variable function is inaccessible, we examine $\hat{\epsilon}_\omega$ by comparing how well the learned environment model predicts with and without latent variable prediction. Specifically, we measure the mean cross-entropy of the next observation distribution predicted by the $n$th learned model and the ground truth in the $n+1$th round, and the mean L1 distance of predicted reward and ground truth reward. The curves are shown in Figure \ref{matrix} (right). The lower prediction error of MDPO indicates that latent variable prediction error ($\hat{\epsilon}_\omega$) is controlled at a lower level than without latent variable prediction. 
Moreover, as shown in Figure \ref{matrix} (mid), the divergence of observation visitation frequency of MDPO and MDPO w/o prediction are similar but much lower than IPPO, which indicates $\epsilon_\omega$ is still under control in MDPO. This indicates that MDPO can well control both $\epsilon_\omega$ and $\hat{\epsilon}_\omega$. 

As MDPO helps to handle non-stationarity in multi-agent settings from the perspective of an individual agent, it will be natural to also apply MDPO to non-stationary single-agent settings. So, we modify this stochastic game into a non-stationary single-agent game and show that MDPO also outperforms the baselines. More details are available in Appendix~\ref{app:more}.

\subsection{MPE}

MPE is a multi-agent environment with continuous observation. In our MPE tasks, agents observe their own positions, velocity, and others' relative positions. And agents are expected to fulfill a certain goal via controlling their accelerations in every direction which is continuous in our experiments. Two tasks of MPE, 4-agent Cooperative Navigation and 5-agent Regular Polygon Control, are chosen for performance comparison.
In 4-agent Cooperative Navigation, 4 agents learn to cooperate to reach 4 landmarks respectively. In 5-agent Regular Polygon Control, 4 agents learn to cooperate with another agent, which is controlled by a fixed policy, aiming to form a regular pentagon, and the reward is given according to the similarity to a regular pentagon.

\begin{figure}[t]
        \centering
        \includegraphics[width=1\linewidth]{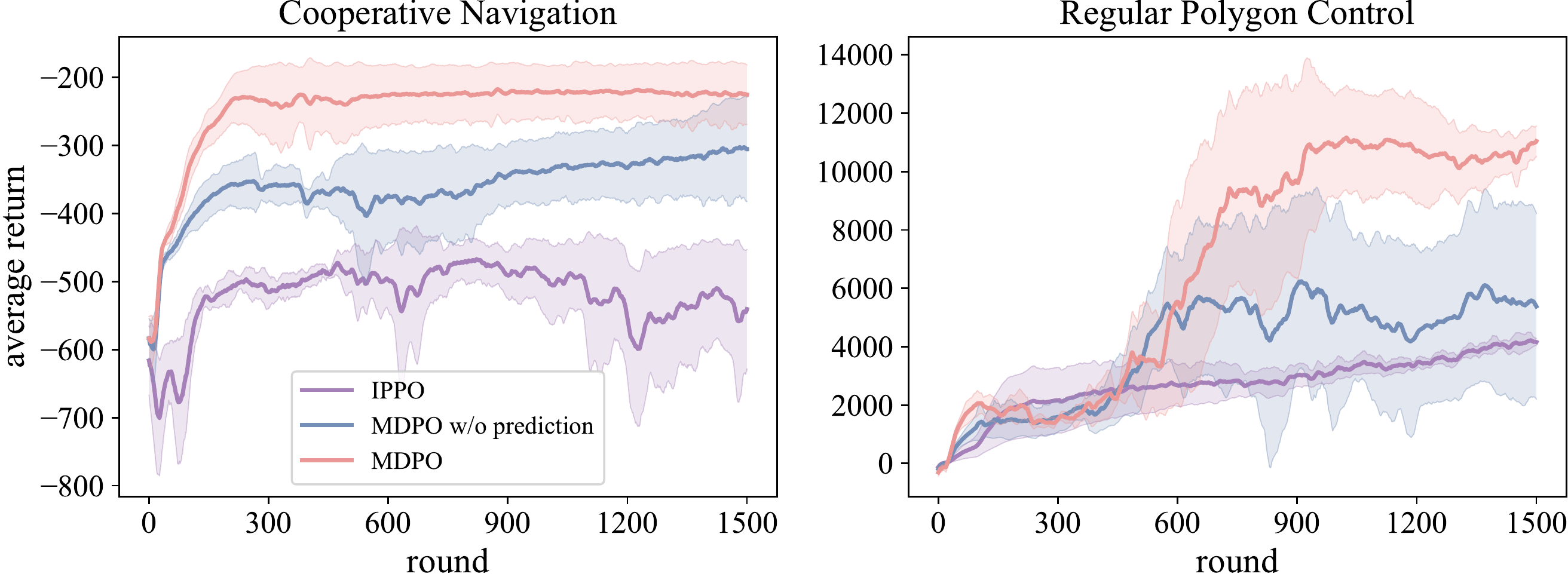}
        \vspace{-0.7cm}
        \caption{Learning curves of MDPO, MDPO w/o prediction, and IPPO in Cooperative Navigation (left) and Regular Polygon Control (right). Each round is 1280 environment steps.}
        \label{mpe}
\end{figure}

Figure \ref{mpe} shows the learning curves of all methods. Generally, MDPO w/o prediction performs better than IPPO, which verifies that the latent variable model can help decentralized policy improvement by making the observation visitation frequency more stationary. And MDPO outperforms MDPO w/o prediction, which verifies latent variable prediction can reduce the gap between the return of interaction and the return predicted by the environment model.

\begin{figure*}[ht]
        \centering
        \includegraphics[width=0.8\textwidth]{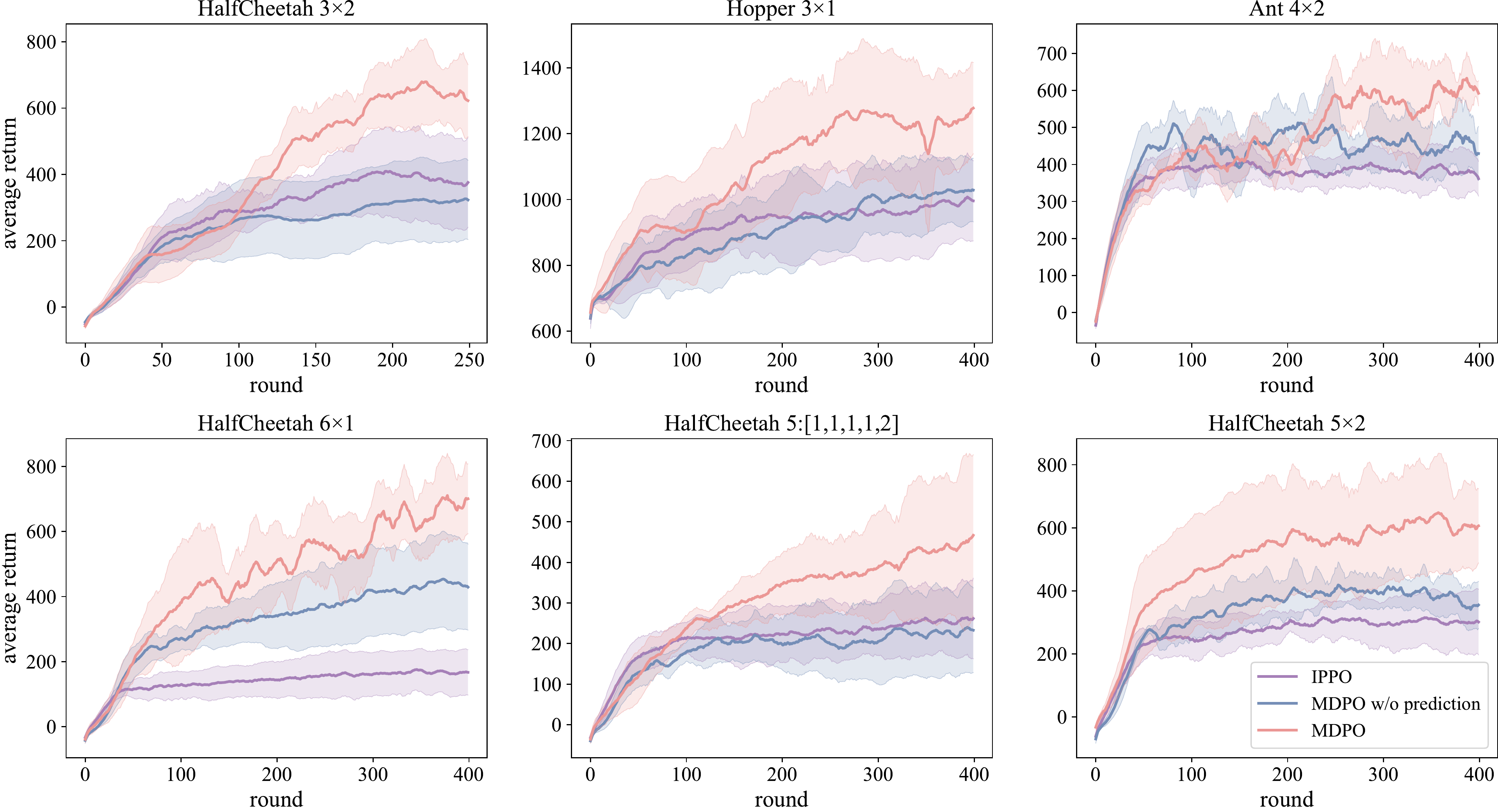}
        \vspace{-0.2cm}
        \caption{Learning curves of MDPO, MDPO w/o prediction, and IPPO in six multi-agent MuJoCo tasks. Each round is 4000 environment steps for 4-agent Ant and 2000 for other tasks.}
        \label{mujoco}
\end{figure*}

It is worth noting that the unobservable information required to fulfill the goal is at completely different levels in the two tasks. Concretely, acknowledging the general direction of others is enough to decide which landmark to approach in Cooperative Navigation. Yet the precise positions of others matter to form a regular polygon in Regular Polygon Control and are hard to learn accurately. Thus, MDPO performs well in Cooperative Navigation since the very early learning stage, while it does not perform well in Regular Polygon Control before 600 rounds. Although a more accurate model is required in Regular Polygon Control, MDPO still converges to better performance. And this indicates a progressive pattern in prediction also works when prediction is hard to be fairly accurate. 

\subsection{Multi-Agent MuJoCo}

Multi-agent MuJoCo is a continuous multi-agent robotic control environment, based on OpenAI's Mujoco Gym environments. In a multi-agent MuJoCo task, each agent controls several joints of the robotic to move forward, where both the observation space and action space are continuous. We choose 3-agent Hopper, 4-agent Ant, and 4 versions of HalfCheetah with different agent numbers or joint allocation for performance comparison. Details of joint allocation are given in Appendix~\ref{sec:hyperparameters}.

As illustrated in Figure \ref{mujoco}, MDPO consistently performs better in these tasks with different difficulties and various agent numbers. Compared with MPE, agents in multi-agent MuJoCo have deeper impacts on each other due to the interaction between adjacent joints. Consequently, the transitions of each agent are closely related to the policies of other agents. Thus, non-stationarity caused by policy updates of other agents is severer in these tasks, resulting in IPPO struggling and converging to low performance. Moreover, note that MDPO w/o prediction performs almost the same as IPPO or even worse in some tasks. The poor performance of MDPO w/o prediction is a consequence of a larger $\epsilon_\psi$ caused by strongly associated agents in these tasks. Thus, latent variable prediction is necessary in these tasks with closely associated agents.

\subsection{Google Research Football}
\label{sec:grf}

In GRF, we choose `simple115v2' as the observation representation which encodes the state with 115 floats and `scoring+checkpoint' as reward which encodes the domain knowledge that scoring is aided by advancing across the pitch. More experimental details are available in Appendix~\ref{app:grf}. 

We compare MDPO, MDPO w/o prediction and ITRPO in two tasks, \textit{Run and Pass} and \textit{3 vs 1 with Keeper}. The experiment is run for about 3M environmental steps and the final average goal rate is reported in Table~\ref{tab:grf}. Although GRF is not indeed a good environment for model learning due to its sparse reward setting, MDPO still improves the average goal rate of ITRPO in both tasks. This verifies the effectiveness of MDPO in more complex environments.

\begin{table}[t]
    \begin{center}
    \caption{Average Goal Rates (\%) in two GRF tasks.}
    \vspace{2mm}
    \label{tab:grf}
    \renewcommand{\arraystretch}{1.3}
    \setlength{\tabcolsep}{4pt}
    \begin{small}
    \begin{tabular}{|c||c|c|c|}
    \hline
          & ITRPO & MDPO w/o pred. & MDPO  \\
    \hline\hline
        \textit{Run and Pass} & $48 \pm 4$ & $53 \pm 4$ & $\boldsymbol{57 \pm 5} $ \\
    \hline
         \textit{3 vs 1 with Keeper} & $27 \pm 4$ & $ 30 \pm 2$ & $\boldsymbol{36 \pm 4}$ \\
    \hline
    \end{tabular}
    \end{small}    
    \end{center}
\end{table}

\section{Conclusion}

In this paper, we propose model-based decentralized policy optimization (MDPO). By introducing a latent variable into the environment model, we theoretically show the model helps to generate experiences with more stationary observation visitation frequency and benefits decentralized policy optimization. Furthermore, We theoretically analyze that the return bound for monotonic policy improvement is controllable by the prediction error of the latent variable function. Consequently, we propose a latent variable prediction method to constrain the prediction error. We examine all the theories and designs via experiments on a set of cooperative multi-agent tasks. Results verify our theoretical results and show MDPO indeed obtains superior performance over model-free decentralized policy optimization. 
\clearpage
\bibliography{reference}

\begin{thebibliography}{30}
\providecommand{\natexlab}[1]{#1}
\providecommand{\url}[1]{\texttt{#1}}
\expandafter\ifx\csname urlstyle\endcsname\relax
  \providecommand{\doi}[1]{doi: #1}\else
  \providecommand{\doi}{doi: \begingroup \urlstyle{rm}\Url}\fi

\bibitem[Cao et~al.(2020)Cao, Zhou, Li, Huang, and Wu]{cao2020multiagent}
Cao, Z., Zhou, P., Li, R., Huang, S., and Wu, D.
\newblock Multiagent deep reinforcement learning for joint multichannel access
  and task offloading of mobile-edge computing in industry 4.0.
\newblock \emph{IEEE Internet of Things Journal}, 7\penalty0 (7):\penalty0
  6201--6213, 2020.

\bibitem[de~Witt et~al.(2020)de~Witt, Gupta, Makoviichuk, Makoviychuk, Torr,
  Sun, and Whiteson]{de2020independent}
de~Witt, C.~S., Gupta, T., Makoviichuk, D., Makoviychuk, V., Torr, P.~H., Sun,
  M., and Whiteson, S.
\newblock Is independent learning all you need in the starcraft multi-agent
  challenge?
\newblock \emph{arXiv preprint arXiv:2011.09533}, 2020.

\bibitem[Du et~al.(2022)Du, Ma, Liu, Lin, Dong, Wang, and Yang]{du2022fully}
Du, Y., Ma, C., Liu, Y., Lin, R., Dong, H., Wang, J., and Yang, Y.
\newblock Fully decentralized model-based policy optimization for networked
  systems.
\newblock \emph{arXiv preprint arXiv:2207.06559}, 2022.

\bibitem[Feinberg et~al.(2018)Feinberg, Wan, Stoica, Jordan, Gonzalez, and
  Levine]{feinberg2018model}
Feinberg, V., Wan, A., Stoica, I., Jordan, M.~I., Gonzalez, J.~E., and Levine,
  S.
\newblock Model-based value estimation for efficient model-free reinforcement
  learning.
\newblock \emph{arXiv preprint arXiv:1803.00101}, 2018.

\bibitem[Janner et~al.(2019)Janner, Fu, Zhang, and Levine]{janner2019trust}
Janner, M., Fu, J., Zhang, M., and Levine, S.
\newblock When to trust your model: Model-based policy optimization.
\newblock \emph{Advances in Neural Information Processing Systems}, 2019.

\bibitem[Kim et~al.(2021)Kim, Park, and Sung]{kim2020communication}
Kim, W., Park, J., and Sung, Y.
\newblock Communication in multi-agent reinforcement learning: Intention
  sharing.
\newblock In \emph{International Conference on Learning Representations}, 2021.

\bibitem[Kurach et~al.(2020)Kurach, Raichuk, Sta{\'n}czyk, Zaj{\k{a}}c, Bachem,
  Espeholt, Riquelme, Vincent, Michalski, Bousquet, et~al.]{kurach2020google}
Kurach, K., Raichuk, A., Sta{\'n}czyk, P., Zaj{\k{a}}c, M., Bachem, O.,
  Espeholt, L., Riquelme, C., Vincent, D., Michalski, M., Bousquet, O., et~al.
\newblock Google research football: A novel reinforcement learning environment.
\newblock In \emph{Proceedings of the AAAI Conference on Artificial
  Intelligence}, 2020.

\bibitem[Lowe et~al.(2017)Lowe, Wu, Tamar, Harb, Abbeel, and
  Mordatch]{lowe2017multi}
Lowe, R., Wu, Y., Tamar, A., Harb, J., Abbeel, P., and Mordatch, I.
\newblock Multi-agent actor-critic for mixed cooperative-competitive
  environments.
\newblock In \emph{Advances in Neural Information Processing Systems}, 2017.

\bibitem[Oliehoek \& Amato(2016)Oliehoek and Amato]{oliehoek2016concise}
Oliehoek, F.~A. and Amato, C.
\newblock \emph{A concise introduction to decentralized POMDPs}.
\newblock Springer, 2016.

\bibitem[Papoudakis et~al.(2021)Papoudakis, Christianos, Sch{\"a}fer, and
  Albrecht]{benchmark}
Papoudakis, G., Christianos, F., Sch{\"a}fer, L., and Albrecht, S.~V.
\newblock Benchmarking multi-agent deep reinforcement learning algorithms in
  cooperative tasks.
\newblock In \emph{Advances in Neural Information Processing Systems}, 2021.

\bibitem[Pathak et~al.(2017)Pathak, Agrawal, Efros, and
  Darrell]{pathak2017curiosity}
Pathak, D., Agrawal, P., Efros, A.~A., and Darrell, T.
\newblock Curiosity-driven exploration by self-supervised prediction.
\newblock In \emph{International Conference on Machine Learning}, 2017.

\bibitem[Peng et~al.(2021{\natexlab{a}})Peng, Rashid, Schroeder~de Witt,
  Kamienny, Torr, B{\"o}hmer, and Whiteson]{mamujoco}
Peng, B., Rashid, T., Schroeder~de Witt, C., Kamienny, P.-A., Torr, P.,
  B{\"o}hmer, W., and Whiteson, S.
\newblock Facmac: Factored multi-agent centralised policy gradients.
\newblock In \emph{Advances in Neural Information Processing Systems},
  2021{\natexlab{a}}.

\bibitem[Peng et~al.(2021{\natexlab{b}})Peng, Rashid, Schroeder~de Witt,
  Kamienny, Torr, B{\"o}hmer, and Whiteson]{peng2021facmac}
Peng, B., Rashid, T., Schroeder~de Witt, C., Kamienny, P.-A., Torr, P.,
  B{\"o}hmer, W., and Whiteson, S.
\newblock Facmac: Factored multi-agent centralised policy gradients.
\newblock In \emph{Advances in Neural Information Processing Systems},
  2021{\natexlab{b}}.

\bibitem[Qie et~al.(2019)Qie, Shi, Shen, Xu, Li, and Wang]{qie2019joint}
Qie, H., Shi, D., Shen, T., Xu, X., Li, Y., and Wang, L.
\newblock Joint optimization of multi-uav target assignment and path planning
  based on multi-agent reinforcement learning.
\newblock \emph{IEEE access}, 7:\penalty0 146264--146272, 2019.

\bibitem[Rashid et~al.(2018)Rashid, Samvelyan, Schroeder, Farquhar, Foerster,
  and Whiteson]{rashid2018qmix}
Rashid, T., Samvelyan, M., Schroeder, C., Farquhar, G., Foerster, J., and
  Whiteson, S.
\newblock Qmix: Monotonic value function factorisation for deep multi-agent
  reinforcement learning.
\newblock In \emph{International Conference on Machine Learning}, 2018.

\bibitem[Samvelyan et~al.(2019)Samvelyan, Rashid, Schroeder~de Witt, Farquhar,
  Nardelli, Rudner, Hung, Torr, Foerster, and Whiteson]{samvelyan2019starcraft}
Samvelyan, M., Rashid, T., Schroeder~de Witt, C., Farquhar, G., Nardelli, N.,
  Rudner, T.~G., Hung, C.-M., Torr, P.~H., Foerster, J., and Whiteson, S.
\newblock The starcraft multi-agent challenge.
\newblock In \emph{International Conference on Autonomous Agents and MultiAgent
  Systems}, 2019.

\bibitem[Schulman et~al.(2015)Schulman, Levine, Abbeel, Jordan, and
  Moritz]{schulman2015trust}
Schulman, J., Levine, S., Abbeel, P., Jordan, M., and Moritz, P.
\newblock Trust region policy optimization.
\newblock In \emph{International conference on machine learning (ICML)}, 2015.

\bibitem[Schulman et~al.(2017)Schulman, Wolski, Dhariwal, Radford, and
  Klimov]{schulman2017proximal}
Schulman, J., Wolski, F., Dhariwal, P., Radford, A., and Klimov, O.
\newblock Proximal policy optimization algorithms.
\newblock \emph{arXiv preprint arXiv:1707.06347}, 2017.

\bibitem[Sun et~al.(2022)Sun, Devlin, Hofmann, and Whiteson]{sun2022monotonic}
Sun, M., Devlin, S., Hofmann, K., and Whiteson, S.
\newblock Monotonic improvement guarantees under non-stationarity for
  decentralized ppo.
\newblock \emph{arXiv preprint arXiv:2202.00082}, 2022.

\bibitem[Sutton(1990)]{sutton1990integrated}
Sutton, R.~S.
\newblock Integrated architectures for learning, planning, and reacting based
  on approximating dynamic programming.
\newblock In \emph{International Conference on Machine Learning}, 1990.

\bibitem[Tan(1993)]{tan1993multi}
Tan, M.
\newblock Multi-agent reinforcement learning: independent versus cooperative
  agents.
\newblock In \emph{International Conference on Machine Learning}, 1993.

\bibitem[Terry et~al.(2020)Terry, Grammel, Hari, Santos, and
  Black]{terry2020revisiting}
Terry, J.~K., Grammel, N., Hari, A., Santos, L., and Black, B.
\newblock Revisiting parameter sharing in multi-agent deep reinforcement
  learning.
\newblock \emph{arXiv preprint arXiv:2005.13625}, 2020.

\bibitem[Wang et~al.(2019)Wang, Bao, Clavera, Hoang, Wen, Langlois, Zhang,
  Zhang, Abbeel, and Ba]{wang2019benchmarking}
Wang, T., Bao, X., Clavera, I., Hoang, J., Wen, Y., Langlois, E., Zhang, S.,
  Zhang, G., Abbeel, P., and Ba, J.
\newblock Benchmarking model-based reinforcement learning.
\newblock \emph{arXiv preprint arXiv:1907.02057}, 2019.

\bibitem[Wei et~al.(2018)Wei, Zheng, Yao, and Li]{wei2018intellilight}
Wei, H., Zheng, G., Yao, H., and Li, Z.
\newblock Intellilight: A reinforcement learning approach for intelligent
  traffic light control.
\newblock In \emph{International Conference on Knowledge Discovery \& Data
  Mining}, 2018.

\bibitem[Willemsen et~al.(2021)Willemsen, Coppola, and
  de~Croon]{willemsen2021mambpo}
Willemsen, D., Coppola, M., and de~Croon, G.~C.
\newblock Mambpo: Sample-efficient multi-robot reinforcement learning using
  learned world models.
\newblock In \emph{IEEE/RSJ International Conference on Intelligent Robots and
  Systems}, 2021.

\bibitem[Yu et~al.(2021{\natexlab{a}})Yu, Velu, Vinitsky, Wang, Bayen, and
  Wu]{yu2021surprising}
Yu, C., Velu, A., Vinitsky, E., Wang, Y., Bayen, A., and Wu, Y.
\newblock The surprising effectiveness of mappo in cooperative, multi-agent
  games.
\newblock \emph{arXiv preprint arXiv:2103.01955}, 2021{\natexlab{a}}.

\bibitem[Yu et~al.(2021{\natexlab{b}})Yu, Jiang, Jiang, and Lu]{yu2021model}
Yu, X., Jiang, J., Jiang, H., and Lu, Z.
\newblock Model-based opponent modeling.
\newblock \emph{arXiv preprint arXiv:2108.01843}, 2021{\natexlab{b}}.

\bibitem[Zhang et~al.(2019)Zhang, Yang, and Basar]{Zhang2019multi}
Zhang, K., Yang, Z., and Basar, T.
\newblock Multi-agent reinforcement learning: {A} selective overview of
  theories and algorithms.
\newblock \emph{arXiv preprint arXiv:1911.10635}, 2019.

\bibitem[Zhang et~al.(2020)Zhang, Kakade, Basar, and Yang]{zhang2020model}
Zhang, K., Kakade, S.~M., Basar, T., and Yang, L.~F.
\newblock Model-based multi-agent {RL} in zero-sum markov games with
  near-optimal sample complexity.
\newblock \emph{arXiv preprint arXiv:2007.07461}, 2020.

\bibitem[Zhang et~al.(2021)Zhang, Wang, Shen, and Zhou]{AORPO}
Zhang, W., Wang, X., Shen, J., and Zhou, M.
\newblock Model-based multi-agent policy optimization with adaptive
  opponent-wise rollouts.
\newblock \emph{arXiv preprint arXiv:2105.03363}, 2021.

\end{thebibliography}
\bibliographystyle{icml2023}

\newpage
\appendix
\onecolumn

\section{Proofs}
\label{sec:proofs}

\subsection{Observation Visitation Frequency Divergence}

In this section, we provide proofs for the upper bound of observation visitation frequency divergence.

\begin{lemma}\label{l.A.1}
Given two pairs of policy and latent variable function, $(\pi_1,\psi_1)$ and $(\pi_2,\psi_2)$. $\forall o\in \mathcal{O}$, it holds that $$\sum_{a,z}{\left| \pi _1\left( a|o \right) \psi _1\left( z|o \right) -\pi _2\left( a|o \right) \psi _2\left( z|o \right) \right|} \le \left| \mathcal{A} \right|\cdot \left\| \pi _1-\pi _2 \right\| +|\mathcal{Z}|\cdot \left\| \psi _1-\psi _2 \right\| ,$$ where $\left\| \pi _1-\pi _2 \right\|\triangleq\underset{a,o}{\max}|\pi_1(a|o)-\pi_2(a|o)|$, $\left\| \psi _1-\psi _2 \right\|\triangleq\underset{z,o}{\max}|\psi_1(z|o)-\psi_2(z|o)|$.
\end{lemma}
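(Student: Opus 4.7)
The plan is to reduce this to a standard add-and-subtract argument: introduce a crossed term of the form $\pi_1(a|o)\psi_2(z|o)$ inside the absolute value and apply the triangle inequality to split the product difference into a piece that only sees the gap in $\psi$ and a piece that only sees the gap in $\pi$. Concretely, I would rewrite the integrand as
\begin{align*}
&\bigl|\pi_1(a|o)\psi_1(z|o)-\pi_2(a|o)\psi_2(z|o)\bigr|\\
&=\bigl|\pi_1(a|o)\bigl(\psi_1(z|o)-\psi_2(z|o)\bigr)+\psi_2(z|o)\bigl(\pi_1(a|o)-\pi_2(a|o)\bigr)\bigr|\\
&\le \pi_1(a|o)\,\bigl|\psi_1(z|o)-\psi_2(z|o)\bigr|+\psi_2(z|o)\,\bigl|\pi_1(a|o)-\pi_2(a|o)\bigr|,
\end{align*}
using nonnegativity of $\pi_1$ and $\psi_2$ so that the absolute values can be pulled out cleanly.

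Next, I would sum over $a$ and $z$ and exploit that $\pi_1(\cdot|o)$ and $\psi_2(\cdot|o)$ are probability distributions. For the first summand this gives $\sum_{a}\pi_1(a|o)=1$, leaving $\sum_z|\psi_1(z|o)-\psi_2(z|o)|$, which I bound crudely by $|\mathcal{Z}|\cdot\|\psi_1-\psi_2\|$ because each of the $|\mathcal{Z}|$ terms is at most the $\sup$-norm $\|\psi_1-\psi_2\|$. The symmetric computation on the second summand yields $|\mathcal{A}|\cdot\|\pi_1-\pi_2\|$. Summing the two contributions produces exactly the stated bound.

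There is essentially no obstacle here; the only subtlety is choosing \emph{which} crossed term to insert. If I had chosen $\pi_2(a|o)\psi_1(z|o)$ instead, the bookkeeping would be identical by symmetry, so the selection is cosmetic. The entire argument is purely algebraic and relies only on the triangle inequality, nonnegativity, and normalization of the conditional distributions, so no further lemmas from the paper are needed.
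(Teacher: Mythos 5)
Your proposal is correct and matches the paper's own proof essentially verbatim: both insert the crossed term $\pi_1(a|o)\psi_2(z|o)$, apply the triangle inequality, use normalization of $\pi_1(\cdot|o)$ and $\psi_2(\cdot|o)$ to collapse one sum in each piece, and bound the remaining sums by $|\mathcal{Z}|\cdot\|\psi_1-\psi_2\|$ and $|\mathcal{A}|\cdot\|\pi_1-\pi_2\|$ respectively. No discrepancies to note.
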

\begin{proof}
\begin{align*}
\sum_{a,z}{\left| \pi _1\left( a|o \right) \psi _1\left( z|o \right) -\pi _2\left( a|o \right) \psi _2\left( z|o \right) \right|} \le& \sum_{a,z}{\left| \pi _1\left( a|o \right) \psi _1\left( z|o \right) -\pi _1\left( a|o \right) \psi _2\left( z|o \right) \right|}
\\
&+\sum_{a,z}{\left| \pi _1\left( a|o \right) \psi _2\left( z|o \right) -\pi _2\left( a|o \right) \psi _2\left( z|o \right) \right|}
\\
=&\sum_a{\pi _1\left( a|o \right) \sum_z{\left| \psi _1\left( z|o \right) -\psi _2\left( z|o \right) \right|}}
\\
&+\sum_z{\psi _2\left( z|o \right) \sum_a{\left| \pi _1\left( a|o \right) -\pi _2\left( a|o \right) \right|}}
\\
\le& \left| \mathcal{A} \right|\cdot \left\| \pi _1-\pi _2 \right\| +|\mathcal{Z}|\cdot \left\| \psi _1-\psi _2 \right\|. 
\end{align*}

\end{proof}

\begin{lemma}[Timestep observation visitation frequency recursion]\label{th.a.1}
Given two pairs of policy and latent variable function $(\pi_1,\psi_1)$ and $(\pi_2,\psi_2)$, we define :
\begin{align*}
\Delta \rho _{t}^{\left( \pi _1,\psi _1 \right) ,\left( \pi _2,\psi _2 \right)}\left( o \right) \triangleq \rho _{t}^{\pi _1,\psi _1}\left( o \right) -\rho _{t}^{\pi _2,\psi _2}\left( o \right) ,
\\
\left\| \Delta \rho _{t}^{\left( \pi _1,\psi _1 \right) ,\left( \pi _2,\psi _2 \right)} \right\| \triangleq \underset{o}{\max}\left|\Delta \rho _{t}^{\left( \pi _1,\psi _1 \right) ,\left( \pi _2,\psi _2 \right)}\left( o \right)\right|.
\end{align*}

It holds that 
$$\left| \Delta \rho _{t+1}^{\left( \pi _1,\psi _1 \right) ,\left( \pi _2,\psi _2 \right)}\left( o^\prime \right) \right|\le \left| \mathcal{A} \right|\cdot \left\| \pi _1-\pi _2 \right\| +|\mathcal{Z}|\cdot \left\| \psi _1-\psi _2 \right\| +\left| \mathcal{O} \right|\cdot \left\| \Delta \rho _{t}^{\left( \pi _1,\psi _1 \right) ,\left( \pi _2,\psi _2 \right)} \right\| .$$
\end{lemma}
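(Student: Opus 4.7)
The starting point is the standard one-step recursion for the timestep observation visitation frequency, namely
\[
\rho_{t+1}^{\pi,\psi}(o') \;=\; \sum_{o,a,z} \rho_t^{\pi,\psi}(o)\,\pi(a\mid o)\,\psi(z\mid o)\,P(o'\mid o,a,z),
\]
which follows from marginalising over the triple at time $t$. Writing this identity for both $(\pi_1,\psi_1)$ and $(\pi_2,\psi_2)$ and subtracting gives $\Delta\rho_{t+1}^{(\pi_1,\psi_1),(\pi_2,\psi_2)}(o')$ as a single sum over $(o,a,z)$ weighted by $P(o'\mid o,a,z)$.

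The key algebraic step I would perform next is the add-and-subtract trick: insert the cross term $\rho_t^{\pi_2,\psi_2}(o)\,\pi_1(a\mid o)\,\psi_1(z\mid o)$, so that the difference splits cleanly into
\[
\Delta\rho_{t+1}(o') \;=\; \underbrace{\sum_{o,a,z} \Delta\rho_t(o)\,\pi_1(a\mid o)\,\psi_1(z\mid o)\,P(o'\mid o,a,z)}_{(\mathrm{I})} \;+\; \underbrace{\sum_{o,a,z} \rho_t^{\pi_2,\psi_2}(o)\bigl[\pi_1\psi_1 - \pi_2\psi_2\bigr]\,P(o'\mid o,a,z)}_{(\mathrm{II})}.
\]
Term (I) isolates the recursive dependence on $\Delta\rho_t$, while term (II) absorbs the mismatch in $(\pi,\psi)$ weighted by the already-pushed-forward probability mass.

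I would then bound each piece separately by the triangle inequality and $P(o'\mid o,a,z)\le 1$. For (I), use $\sum_{a}\pi_1(a\mid o)=\sum_{z}\psi_1(z\mid o)=1$ to collapse the inner sum, leaving $\sum_o |\Delta\rho_t(o)| \le |\mathcal{O}|\cdot\|\Delta\rho_t^{(\pi_1,\psi_1),(\pi_2,\psi_2)}\|$. For (II), apply Lemma \ref{l.A.1} pointwise in $o$ to control $\sum_{a,z}|\pi_1\psi_1-\pi_2\psi_2|$ by $|\mathcal{A}|\cdot\|\pi_1-\pi_2\|+|\mathcal{Z}|\cdot\|\psi_1-\psi_2\|$, and then use $\sum_o \rho_t^{\pi_2,\psi_2}(o)=1$ to eliminate the outer sum. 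Adding the two bounds yields exactly the claimed inequality.

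\textbf{Main obstacle.} There is no conceptually hard step here; this is a bookkeeping lemma. The only subtlety is choosing the right add-and-subtract so that the dangling factors are genuine probability distributions (so that the $\sum=1$ cancellations happen cleanly) and so that the bound of Lemma \ref{l.A.1} can be applied directly, without also picking up an extra $|\mathcal{O}|$-factor on the policy/latent terms. Symmetrising the decomposition so that $\rho_t^{\pi_2,\psi_2}$ is paired with the $(\pi_1\psi_1 - \pi_2\psi_2)$ mismatch and $\Delta\rho_t$ is paired with $(\pi_1,\psi_1)$ is the one choice that achieves this; swapping the roles would still work by symmetry but must be done consistently.
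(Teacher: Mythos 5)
Your proposal is correct and follows essentially the same route as the paper's proof: the same one-step recursion, the same add-and-subtract decomposition (the paper merely pairs $\rho_t^{\pi_1,\psi_1}$ with the $\pi\psi$-mismatch and $\Delta\rho_t$ with $(\pi_2,\psi_2)$, the mirror image of your choice, which you correctly note is equivalent by symmetry), and the same use of Lemma \ref{l.A.1} together with $P(o'\mid o,a,z)\le 1$ to produce the three terms of the bound.
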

\begin{proof}
For observation visitation frequency at timestep $t+1$, there is a recurrence relation:
$$
\rho _{t+1}^{\pi ,\psi}\left( o^\prime \right) =\sum_o{\rho _{t}^{\pi ,\psi}\left( o \right) \sum_{a,z}{P\left( o^\prime|a,o,z \right) \pi \left( a|o \right) \psi \left( z|o \right)}}\,\,
$$

Thus, the divergence of observation visitation frequency at timestep $t+1$ can be processed correspondingly:
\small
\begin{align*}
\Delta \rho _{t+1}^{\left( \pi _1,\psi _1 \right) ,\left( \pi _2,\psi _2 \right)}\left( o^\prime \right)=& \rho _{t+1}^{\pi _1,\psi _1}\left( o^\prime \right) -\rho _{t+1}^{\pi _2,\psi _2}\left( o^\prime \right)
\\
=&\sum_o{\left(\rho _{t}^{\pi _1,\psi _1}\left( o \right) \sum_{a,z}{\left( P\left( o\prime|a,o,z \right) \pi _1\left( a|s \right) \psi _1\left( z|o \right) \right)} \right)}
\\
&-\sum_o{\left( \rho _{t}^{\pi _2,\psi _2}\left( o \right) \sum_{a,z}{\left( P\left( o\prime|a,o,z \right) \pi _2\left( a|s \right) \psi _2\left( z|o \right) \right)} \right)}
\\
=&\sum_o{\left( \rho _{t}^{\pi _1,\psi _1}\left( o \right) \sum_{a,z}{\left( P\left( o\prime|a,o,z \right) \left( \pi _1\left( a|o \right) \psi _1\left( z|o \right) -\pi _2\left( a|o \right) \psi _2\left( z|o \right) \right) \right)} \right)}
\\
&+\sum_o{\left( \Delta \rho _{t}^{\left( \pi _1,\psi _1 \right) ,\left( \pi _2,\psi _2 \right)}\left( o \right) \sum_{a,z}{\left( P\left( o\prime|a,o,z \right) \pi _2\left( a|o \right) \psi _2\left( z|o \right) \right)} \right)}   
\end{align*}

Using Lemma \ref{l.A.1}, we can bound the divergence of observation state frequency at timestep $t+1$:

\begin{align*}
\left| \Delta \rho _{t+1}^{\left( \pi _1,\psi _1 \right) ,\left( \pi _2,\psi _2 \right)}\left( o^\prime \right) \right|\le& \sum_o{\left( \rho _{t}^{\pi _1,\psi _1}\left( o \right) \sum_{a,z}{\left( P\left( o^\prime|a,o,z \right) \left| \pi _1\left( a|o \right) \psi _1\left( z|o \right) -\pi _2\left( a|o \right) \psi _2\left( z|o \right) \right| \right)} \right)}
\\
&+\sum_o{\left( \left| \Delta \rho _{t}^{\left( \pi _1,\psi _1 \right) ,\left( \pi _2,\psi _2 \right)}\left( o \right) \right|\sum_{a,z}{\left( P\left( o^\prime|a,o,z \right) \pi _2\left( a|o \right) \psi _2\left( z|o \right) \right)} \right)}
\\
\le& \sum_o{\left( \rho _{t}^{\pi _1,\psi _1}\left( o \right) \left( \left| \mathcal{A} \right|\left\| \pi _1-\pi _2 \right\| +|\mathcal{Z}|\left\| \psi _1-\psi _2 \right\| \right) \right)}
\\&+\left\| \Delta \rho _{t}^{\left( \pi _1,\psi _1 \right) ,\left( \pi _2,\psi _2 \right)} \right\| \sum_{o,a,z}{\left( P\left( o^\prime|a,o,z \right) \pi _2\left( a|o \right) \psi _2\left( z|o \right) \right)}
\\
\le& \left| \mathcal{A} \right|\cdot \left\| \pi _1-\pi _2 \right\| +|\mathcal{Z}|\cdot \left\| \psi _1-\psi _2 \right\| +\left| \mathcal{O} \right|\cdot \left\| \Delta \rho _{t}^{\left( \pi _1,\psi _1 \right) ,\left( \pi _2,\psi _2 \right)} \right\| 
\end{align*}

\end{proof}

\begin{lemma}[discounted observation visitation frequency divergence bound] \label{l.3}
    Given two pairs of policy and latent variable function $(\pi_1,\psi_1)$ and $(\pi_2,\psi_2)$, with the same distribution of initial observation $\rho _{0}\left(o\right)$, it holds that 
$$ \left\| \Delta \rho ^{\left( \pi _1,\psi _1 \right) ,\left( \pi _2,\psi _2 \right)} \right\|_1 \le C\left( \left\| \pi _1-\pi _2 \right\| +\left\| \psi _1-\psi _2 \right\| \right),$$ where $C$ is a certain constant.
\end{lemma}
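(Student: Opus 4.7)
The plan is to reduce everything to the standard total-variation contraction argument by viewing each pair $(\pi_i,\psi_i)$ as inducing a row-stochastic transition kernel on observations. Define the effective kernel
\begin{equation*}
P_i(o'|o) \triangleq \sum_{a,z} P(o'|a,o,z)\,\pi_i(a|o)\,\psi_i(z|o),
\end{equation*}
so that the per-timestep measures satisfy the clean linear recursion $\rho_{t+1}^{\pi_i,\psi_i} = \rho_t^{\pi_i,\psi_i} P_i$, with base case $\rho_0^{\pi_1,\psi_1} = \rho_0^{\pi_2,\psi_2}$ by assumption. The key observation is that $P_i$ is row-stochastic, hence $L_1$-nonexpansive on signed measures.

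First I would bound the kernel divergence $\|P_1-P_2\|_\infty \triangleq \max_o \sum_{o'}|P_1(o'|o)-P_2(o'|o)|$. Moving the absolute value inside the sum over $(a,z)$ and using $\sum_{o'} P(o'|a,o,z) = 1$, this collapses to $\max_o \sum_{a,z}|\pi_1(a|o)\psi_1(z|o) - \pi_2(a|o)\psi_2(z|o)|$, so Lemma~\ref{l.A.1} yields
\begin{equation*}
\|P_1 - P_2\|_\infty \le |\mathcal{A}|\cdot\|\pi_1-\pi_2\| + |\mathcal{Z}|\cdot\|\psi_1-\psi_2\|.
\end{equation*}

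Next, for the per-timestep $L_1$ divergence I would write the telescoping identity
\begin{equation*}
\rho_{t+1}^{\pi_1,\psi_1} - \rho_{t+1}^{\pi_2,\psi_2} = \rho_t^{\pi_1,\psi_1}(P_1-P_2) + (\rho_t^{\pi_1,\psi_1} - \rho_t^{\pi_2,\psi_2})P_2.
\end{equation*}
Taking $\|\cdot\|_1$, using $\|\rho_t^{\pi_1,\psi_1}\|_1 = 1$ for the first term (which gives $\|\cdot\|_1 \le \|P_1-P_2\|_\infty$), and the $L_1$-nonexpansiveness of the stochastic $P_2$ for the second, produces the additive recursion $\|\Delta\rho_{t+1}\|_1 \le \|P_1-P_2\|_\infty + \|\Delta\rho_t\|_1$. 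Combined with $\|\Delta\rho_0\|_1 = 0$, this gives $\|\Delta\rho_t\|_1 \le t\,\|P_1-P_2\|_\infty$. Summing geometrically,
\begin{equation*}
\|\Delta\rho\|_1 \le \sum_{t\ge 0}\gamma^t \|\Delta\rho_t\|_1 \le \frac{\gamma}{(1-\gamma)^2}\,\|P_1-P_2\|_\infty,
\end{equation*}
and substituting the kernel bound finishes the proof with $C = \frac{\gamma\max(|\mathcal{A}|,|\mathcal{Z}|)}{(1-\gamma)^2}$.

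The main obstacle to avoid is the naive temptation to iterate Lemma~\ref{th.a.1} directly: its per-observation max-norm recursion carries a coefficient $|\mathcal{O}|$ on $\|\Delta\rho_t\|$, which blows up geometrically in $t$ and is not tamable by the discount factor $\gamma$ in general. The essential trick is to switch to the $L_1$ norm on measures so that the stochasticity of $P_2$ gives a contraction constant of $1$ rather than $|\mathcal{O}|$; once the right norm is chosen, the remaining steps are routine.
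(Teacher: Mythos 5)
Your proof is correct, and it takes a genuinely different (and cleaner) route than the paper's. Both arguments rest on Lemma~\ref{l.A.1} to control $\sum_{a,z}\left|\pi_1\psi_1-\pi_2\psi_2\right|$, but the paper then iterates the pointwise max-norm recursion of Lemma~\ref{th.a.1}, whose per-step coefficient $\left|\mathcal{O}\right|$ on $\left\|\Delta\rho_t\right\|$ forces a $\left(\gamma\left|\mathcal{O}\right|\right)^k$ geometric factor; the paper tames this by truncating at a horizon $T$ and paying a residual $\frac{2\gamma^T}{1-\gamma}$, which is \emph{not} proportional to $\left\|\pi_1-\pi_2\right\|+\left\|\psi_1-\psi_2\right\|$, so the claimed form $C\left(\left\|\pi_1-\pi_2\right\|+\left\|\psi_1-\psi_2\right\|\right)$ only follows after additional handling of that tail that the paper leaves implicit (and the resulting constant depends on $\left|\mathcal{O}\right|$ and $T$). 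You instead work directly with the $L_1$ norm on signed measures, use the telescoping decomposition $\rho_{t+1}^1-\rho_{t+1}^2=\rho_t^1\left(P_1-P_2\right)+\left(\rho_t^1-\rho_t^2\right)P_2$ together with the $L_1$-nonexpansiveness of the row-stochastic kernel $P_2$, and obtain the additive recursion $\left\|\Delta\rho_{t+1}\right\|_1\le\left\|P_1-P_2\right\|_\infty+\left\|\Delta\rho_t\right\|_1$, hence a linear-in-$t$ bound and a clean geometric sum. What your approach buys is substantial: an explicit constant $C=\frac{\gamma\max\left(\left|\mathcal{A}\right|,\left|\mathcal{Z}\right|\right)}{\left(1-\gamma\right)^2}$ that is independent of $\left|\mathcal{O}\right|$, no uncontrolled residual term, and a derivation that is self-contained once the kernel-difference bound is in hand; your closing remark correctly identifies the exact pitfall in the naive iteration that the paper's proof runs into. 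The one thing the paper's route retains is the intermediate per-observation (max-norm) statement of Lemma~\ref{th.a.1}, which you bypass, but nothing downstream in the paper requires it beyond the $L_1$ conclusion you prove.
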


\begin{proof}
We transform it to the cumulative form of the timestep, and scale it using Lemma \ref{th.a.1}: 

\begin{align*}
\left| \Delta \rho ^{\left( \pi _1,\psi _1 \right) ,\left( \pi _2,\psi _2 \right)}\left( o \right) \right| =&\left| \sum_{t=0}^{\infty}{\gamma ^t\Delta \rho _{t}^{\left( \pi _1,\psi _1 \right) ,\left( \pi _2,\psi _2 \right)}\left( o \right)} \right|
\\
\le& \sum_{t=1}^{T-1}{\gamma ^t\left| \Delta \rho _{t}^{\left( \pi _1,\psi _1 \right) ,\left( \pi _2,\psi _2 \right)}\left( o \right) \right|}+\gamma ^T\sum_{t=T}^{\infty}{\gamma ^{t-T}\left| \Delta \rho _{t}^{\left( \pi _1,\psi _1 \right) ,\left( \pi _2,\psi _2 \right)}\left( o \right) \right|}
\\
\le& \sum_{t=1}^{T-1}{\gamma ^t\left( \left| \mathcal{A} \right|\left\| \pi _1-\pi _2 \right\| +\left| \mathcal{Z} \right|\left\| \psi _1-\psi _2 \right\| \right)}
\\
&+\gamma \left| \mathcal{O} \right|\sum_{t=1}^{T-2}{\gamma ^t\left\| \Delta \rho _{t}^{\left( \pi _1,\psi _1 \right) ,\left( \pi _2,\psi _2 \right)} \right\|}+\frac{2\gamma ^T}{1-\gamma}
\\
\le& \left( \sum_{t=1}^{T-1}{\gamma ^t\sum_{k=0}^{T-1-t}{\left( \gamma \left| \mathcal{O} \right| \right) ^{k}}} \right) \left( \left| \mathcal{A} \right|\left\| \pi _1-\pi _2 \right\| +\left| \mathcal{Z} \right|\left\| \psi _1-\psi _2 \right\| \right) +\frac{2\gamma ^T}{1-\gamma}.
\end{align*}
Thus, we get bound discounted observation visitation frequency divergence:
$$
\left\| \Delta \rho ^{\left( \pi _1,\psi _1 \right) ,\left( \pi _2,\psi _2 \right)} \right\|_\infty \le C_1\left( \left\| \pi _1-\pi _2 \right\| +\left\| \psi _1-\psi _2 \right\| \right),
$$
$$
\left\| \Delta \rho ^{\left( \pi _1,\psi _1 \right) ,\left( \pi _2,\psi _2 \right)} \right\|_1 \le |\mathcal{O}|\cdot \left\| \Delta \rho ^{\left( \pi _1,\psi _1 \right) ,\left( \pi _2,\psi _2 \right)} \right\|_\infty \le C_2\left( \left\| \pi _1-\pi _2 \right\| +\left\| \psi _1-\psi _2 \right\| \right).
$$
\end{proof}

\subsection{Latent Variable Function Divergence}
In this section, we provide proof for divergence comparison between latent variable function of policy rollout and learned latent variable function in the model.

\begin{lemma}[Latent variable function divergence comparison]\label{l.4}
Assume $\psi^n_\omega = (1-\alpha)\psi^{n-1}_\omega + \alpha\psi^n$, and initially $\psi^1_\omega = \psi^1$, where $n$ is the $n$th policy rollout. Then, 
$$
\mathcal{E}_\psi>\mathcal{E}_{\psi_\omega},
$$
where $\mathcal{E}_\psi\triangleq \max\limits_n\left\|\psi^n-\psi^{n-1}\right\| $ and $\mathcal{E}_{\psi_\omega} \triangleq \max\limits_n\left\|\psi^n_\omega-\psi^{n-1}_\omega\right\|$.
\end{lemma}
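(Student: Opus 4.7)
The plan is to express $\psi^n_\omega - \psi^{n-1}_\omega$ as a weighted combination of the true increments $\psi^j - \psi^{j-1}$ and then apply the triangle inequality, so that the scalar weights will collapse to a factor strictly less than one times $\mathcal{E}_\psi$. The key identity is
\begin{equation*}
\psi^n_\omega - \psi^{n-1}_\omega = \alpha\bigl(\psi^n - \psi^{n-1}_\omega\bigr),
\end{equation*}
which follows immediately from the soft-update rule. So I would reduce the whole problem to bounding $\|\psi^n - \psi^{n-1}_\omega\|$ by $\tfrac{1}{\alpha}\mathcal{E}_\psi$, with strict inequality.

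First I would unroll the recursion to get the explicit convex combination
\begin{equation*}
\psi^{n-1}_\omega = (1-\alpha)^{n-2}\psi^1 + \sum_{k=2}^{n-1}\alpha(1-\alpha)^{n-1-k}\psi^k,
\end{equation*}
whose coefficients $c_1,\ldots,c_{n-1}$ are nonnegative and sum to one; this step is a straightforward induction using the initial condition $\psi^1_\omega=\psi^1$. Because the $c_k$ sum to one, I can write $\psi^n - \psi^{n-1}_\omega = \sum_{k=1}^{n-1} c_k(\psi^n - \psi^k)$ and then telescope each difference as $\psi^n - \psi^k = \sum_{j=k+1}^n (\psi^j - \psi^{j-1})$.

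The main technical step, and the one that needs the most care, is the combinatorial identity
\begin{equation*}
\sum_{k=1}^{n-1} c_k (n-k) \;=\; \frac{1-(1-\alpha)^{n-1}}{\alpha},
\end{equation*}
which I would verify by splitting off the $k=1$ term, performing the change of variables $m=n-k$ inside the sum over $k\geq 2$, and using the closed form for $\sum_{m\geq 1}\alpha(1-\alpha)^{m-1}m = 1/\alpha$ together with the tail correction coming from truncating at $m=n-2$. Combining this with $\|\psi^j-\psi^{j-1}\|\leq \mathcal{E}_\psi$ and the triangle inequality yields
\begin{equation*}
\|\psi^n-\psi^{n-1}_\omega\| \;\leq\; \frac{1-(1-\alpha)^{n-1}}{\alpha}\,\mathcal{E}_\psi,
\end{equation*}
so that $\|\psi^n_\omega-\psi^{n-1}_\omega\| = \alpha\|\psi^n-\psi^{n-1}_\omega\| \leq (1-(1-\alpha)^{n-1})\,\mathcal{E}_\psi$. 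Since $\alpha\in(0,1)$, the factor $1-(1-\alpha)^{n-1}$ is strictly less than $1$ for every finite $n$, and taking the maximum over $n$ delivers $\mathcal{E}_{\psi_\omega}<\mathcal{E}_\psi$. The potentially subtle point, which I would flag explicitly, is that a naive triangle inequality inside the sum $\sum c_k(n-k)\mathcal{E}_\psi$ diverges as $n\to\infty$; strict inequality survives because the geometric weights $(1-\alpha)^{n-1-k}$ damp exactly enough to leave a deficit of $(1-\alpha)^{n-1}\mathcal{E}_\psi$, and that deficit is what I would carry through the argument to obtain strictness rather than merely $\leq$.
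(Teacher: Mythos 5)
Your proposal is correct and follows essentially the same route as the paper's proof: both hinge on the identity $\psi^n_\omega-\psi^{n-1}_\omega=\alpha\bigl(\psi^n-\psi^{n-1}_\omega\bigr)$ and then bound $\bigl\|\psi^n-\psi^{n-1}_\omega\bigr\|$ by a geometric accumulation of the increments $\psi^j-\psi^{j-1}$, giving $\frac{1-(1-\alpha)^{n-1}}{\alpha}\,\mathcal{E}_\psi<\frac{\mathcal{E}_\psi}{\alpha}$. The paper obtains this by iterating the recursive triangle inequality $\|\psi^n-\psi^{n-1}_\omega\|\le\|\psi^n-\psi^{n-1}\|+(1-\alpha)\|\psi^{n-1}-\psi^{n-2}_\omega\|$ rather than by your explicit convex-combination unrolling, but the two computations are the same and yield the same bound (including the same caveat that strictness relies on $n$ ranging over finitely many rollouts).
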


\begin{proof}
Firstly, we can construct such a recursive inequality:
\begin{align*}
\left\| \psi ^n-\psi _{\omega}^{n-1} \right\| =\left\| \psi ^n-\left( 1-\alpha \right) \psi _{\omega}^{n-2}-\alpha \psi ^{n-1} \right\| \le \left\| \psi ^n-\psi ^{n-1} \right\| +\left( 1-\alpha \right) \left\| \psi ^{n-1}-\psi _{\omega}^{n-2} \right\| .
\end{align*}

Thus, we can expand it recursively:
\begin{align*}
\left\| \psi ^n-\psi _{\omega}^{n-1} \right\| \le& \left\| \psi ^n-\psi ^{n-1} \right\| +\left( 1-\alpha \right) \left\| \psi ^{n-1}-\psi _{\omega}^{n-2} \right\| 
\\
\le& \left\| \psi ^n-\psi ^{n-1} \right\| +\left( 1-\alpha \right) \left\| \psi ^{n-1}-\psi ^{n-2} \right\| +\cdots +\left( 1-\alpha \right) ^{n-1}\left\| \psi ^1-\psi _{\omega}^{0} \right\| 
\\
=&\left\| \psi ^n-\psi ^{n-1} \right\| +\left( 1-\alpha \right) \left\| \psi ^{n-1}-\psi ^{n-2} \right\| +\cdots +\left( 1-\alpha \right) ^{n-1}\left\| \psi ^1-\psi ^0 \right\| 
\\
\le& \mathcal{E}_\psi\left( 1+\left( 1-\alpha \right) +\cdots +\left( 1-\alpha \right) ^{n-1} \right) 
\\
<&\frac{\mathcal{E}_\psi}{\alpha}.
\end{align*}

Using this inequality, we can zoom $\left\|\psi^n_\omega-\psi^{n-1}_\omega\right\|$:
$$
\left\| \psi _{\omega}^{n}-\psi _{\omega}^{n-1} \right\| =\left\| \left( 1-\alpha \right) \psi _{\omega}^{n-1}+\alpha \psi ^n-\psi _{\omega}^{n-1} \right\| =\alpha \left\| \psi ^n-\psi _{\omega}^{n-1} \right\| z<\mathcal{E}_\psi.
$$
Thus, $\mathcal{E}_{\psi_\omega} = \max\limits_n\left\|\psi^n_\omega-\psi^{n-1}_\omega\right\|<\mathcal{E}_\psi.$

\end{proof}

Now we combine Lemma \ref{l.3} and \ref{l.4} to prove Theorem \ref{th.1}.

\textbf{Theorem 3.1} (Latent variable model benefits)\textbf{.} \label{th.1.p}
\textit{Define $\Delta \rho^n(o)\triangleq\rho^{\pi^{n},\psi^{n}}(o)-\rho^{\pi^{n-1},\psi^{n-1}}(o)$. Denote $ \left\| \Delta \rho ^n \right\| \triangleq \underset{o}{\max}|\rho^n(o)-\psi^{n-1}(o)|$, similarly for $\left\| \Delta \pi ^n \right\|$ and $\left\| \Delta \psi ^n \right\|$. It holds that,
\begin{equation*}
    \left\| \Delta \rho ^n \right\| \le C\left( \mathcal{E}_\pi + \mathcal{E}_\psi \right),
\end{equation*}
where $\mathcal{E}_\pi \triangleq \max\limits_n\left\|\Delta\pi^n\right\|$, $\mathcal{E}_\psi \triangleq \max\limits_n\left\|\Delta\psi^n\right\|$ and $C$ is a constant. Assume $\psi^n_\omega = (1-\alpha)\psi^{n-1}_\omega + \alpha\psi^n$ and $\psi^0_\omega = \psi^0$ \footnote{Since $\psi$ is varying and $\psi_\omega$ is continuously updated using the experiences from several recent policy rollouts, we use the form of soft-update for the relation between $\psi$ and $\psi_{\omega}$.}. It holds that $\mathcal{E}_\psi > \mathcal{E}_{\psi_\omega}$ and the bound above is lower when substituting $\psi$ with $\psi_\omega$.}

\begin{proof}
Using Lemma \ref{l.3}, we can scale $ \left\| \Delta \rho ^n \right\|$,
\begin{equation*}
     \left\| \Delta \rho ^n \right\| \le C \left( \left\| \Delta \pi ^n \right\| + \left\| \Delta \psi ^n \right\| \right) \le C\left( \mathcal{E}_\pi + \mathcal{E}_\psi \right)
\end{equation*}
Using Lemma \ref{l.4}, $\mathcal{E}_\psi > \mathcal{E}_{\psi_\omega}$. Thus, the bound above is lower for $\mathcal{E}_{\psi_\omega}$.
\end{proof}

\subsection{Lemmas for Return Bound Analysis}
In this section, we prove several lemmas as preparations for return bound analysis. 
\begin{lemma}[TVD bound of joint distribution]\label{l.c.1}
Consider two joint distributions of $n+1$ variables like this: 

\begin{align*}
P_1\left( x,y_1,y_2,\cdots ,y_n \right) =P_1\left( x \right) \cdot \prod_{i=1}^n{P_1\left( y_i|x \right)}
\\
P_2\left( x,y_1,y_2,\cdots ,y_n \right) =P_2\left( x \right) \cdot \prod_{i=1}^n{P_2\left( y_i|x \right)}
\end{align*}

We can bound the total variation distance of the joint distributions as:
\begin{small}
$$
D_{TV}\left( P_1\left( x,y_1,\cdots ,y_n \right) \|P_2\left( x,y_1,\cdots ,y_n \right) \right) 
\le D_{TV}\left( P_1\left( x \right) \|P_2\left( x \right) \right) +\sum_{i=1}^n{\underset{x}{\max}D_{TV}\left( P_1\left( y_i|x \right) \|P_2\left( y_i|x \right) \right)}.
$$
\end{small}

\end{lemma}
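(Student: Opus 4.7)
The plan is to prove the lemma by induction on $n$, with the core mechanism being the classical triangle-inequality coupling that writes each joint as a conditional-times-marginal and inserts a ``hybrid'' distribution in between. The overall idea is: to compare two product-style joints, first swap the marginal (paying a cost equal to the TV distance of the marginals), then swap the conditionals one at a time (paying at most the maximum conditional TV distance at each swap).

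For the base case $n=1$, I would unfold the definition $D_{TV}(P_1(x,y)\|P_2(x,y)) = \tfrac{1}{2}\sum_{x,y}|P_1(x)P_1(y|x) - P_2(x)P_2(y|x)|$ and insert the hybrid term $P_2(x)P_1(y|x)$ inside the absolute value via the triangle inequality. The first resulting piece factors as $\sum_x |P_1(x)-P_2(x)| \sum_y P_1(y|x) = \sum_x |P_1(x)-P_2(x)|$, contributing $2\, D_{TV}(P_1(x)\|P_2(x))$. The second piece becomes $\sum_x P_2(x) \sum_y |P_1(y|x)-P_2(y|x)| = \sum_x P_2(x) \cdot 2 D_{TV}(P_1(y|x)\|P_2(y|x))$, which I then upper bound by $2\max_x D_{TV}(P_1(y|x)\|P_2(y|x))$ using $\sum_x P_2(x)=1$. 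Halving yields the one-step bound.

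For the inductive step, I would group $x' \triangleq (x, y_1, \ldots, y_{n-1})$ and invoke the conditional-independence assumption in the hypothesis to write $P_j(x, y_1, \ldots, y_n) = P_j(x') \cdot P_j(y_n|x')$ with $P_j(y_n|x') = P_j(y_n|x)$, since the factorization in the lemma contains no cross-dependence among the $y_i$. Applying the base case to the pair $(x', y_n)$ produces $D_{TV}(P_1(x')\|P_2(x')) + \max_{x'} D_{TV}(P_1(y_n|x')\|P_2(y_n|x'))$, and conditional independence collapses the inner maximum back to $\max_x D_{TV}(P_1(y_n|x)\|P_2(y_n|x))$. The first term is then bounded by the inductive hypothesis applied to the smaller joint on $(x, y_1, \ldots, y_{n-1})$, and summing completes the proof.

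The main obstacle, such as it is, is the bookkeeping around the conditioning. One must verify that the maximum in the inductive step remains a max over the original $x$ alone and does not inflate to a max over $(x, y_1, \ldots, y_{n-1})$. Without the explicit conditional-independence factorization, this reduction would fail and the bound would pick up a larger supremum; making the role of that factorization transparent at the point where the $n=1$ base case is applied is where care is required.
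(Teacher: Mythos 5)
Your proof is correct and follows essentially the same route as the paper: the identical $n=1$ base case via the triangle inequality with the hybrid $P_2(x)P_1(y|x)$, then an inductive application of that two-variable bound. The only (cosmetic) difference is that you peel $y_n$ off the back by grouping $x'=(x,y_1,\dots,y_{n-1})$, whereas the paper keeps $x$ as the conditioning variable and recursively splits the block $(y_1,\dots,y_n)$ from the front; your observation that conditional independence collapses $\max_{x'}$ to $\max_x$ is exactly the point that makes either ordering work.
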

\begin{proof}
We start the proof from a basis case when $n=1$: 

\begin{align*}
D_{TV}\left( P_1 \|P_2\right) =&\frac{1}{2}\sum_{x,y}{\left| P_1\left( x,y \right) -P_2\left( x,y \right) \right|}
\\
\le& \frac{1}{2}\sum_{x,y}{\left| P_1\left( x \right) P_1\left( y|x \right) -P_2\left( x \right) P_1\left( y|x \right) \right|+\left| P_2\left( x \right) P_1\left( y|x \right) -P_2\left( x \right) P_2\left( y|x \right) \right|}
\\
=&\frac{1}{2}\sum_{x,y}{\left| P_1\left( x \right) -P_2\left( x \right) \right|\cdot P_1\left( y|x \right) +P_2\left( x \right) \cdot \left| P_1\left( y|x \right) -P_2\left( y|x \right) \right|}
\\
=&\frac{1}{2}\sum_x{\left| P_1\left( x \right) -P_2\left( x \right) \right|}+\sum_x{P_2\left( x \right) D_{TV}\left( P_1\left( y|x \right) \|P_2\left( y|x \right) \right)}
\\
\le& D_{TV}\left( P_1\left( x \right) \|P_2\left( x \right) \right) +\underset{x}{\max}D_{TV}\left( P_1\left( y|x \right) \|P_2\left( y|x \right) \right) 
\end{align*}

Similarly, we can prove the case of multi-variables:

\begin{align*}
D_{TV}\left( P_1 \|P_2 \right) 
\le D_{TV}\left( P_1\left( x \right) \|P_2\left( x \right) \right) +&\underset{x}{\max}D_{TV}\left( P_1\left( y_1,\cdots ,y_n|x \right) \|P_2\left( y_1,\cdots ,y_n|x \right) \right) 
\\
\le D_{TV}\left( P_1\left( x \right) \|P_2\left( x \right) \right) +&\underset{x}{\max}\Big[ {D_{TV}\left( P_1\left( y_1|x \right) \|P_2\left( y_1|x \right) \right)}
\\
&\ +\underset{x}{\max}D_{TV}\left( P_1\left( y_2,\cdots ,y_n|x \right) \|P_2\left( y_2,\cdots ,y_n|x \right) \right)\Big] 
\\
=D_{TV}\left( P_1\left( x \right) \|P_2\left( x \right) \right) +&\underset{x}{\max}D_{TV}\left( P_1\left( y_1|x \right) \|P_2\left( y_1|x \right) \right) 
\\
&\ +\underset{x}{\max}D_{TV}\left( P_1\left( y_2,\cdots ,y_n|x \right) \|P_2\left( y_2,\cdots ,y_n|x \right) \right) 
\\
\le D_{TV}\left( P_1\left( x \right) \|P_2\left( x \right) \right) +&\sum_{i=1}^n{\underset{x}{\max}D_{TV}\left( P_1\left( y_i|x \right) \|P_2\left( y_i|x \right) \right)}
\end{align*}
\end{proof}
Before proving following lemmas, we clarify the premise our discuss in this section is based on. In a Dec-POMDP, denote the co-occurrence probability of tuple $(o,a,z)$ at timestep $t$ as $P^t(o,a,z)\triangleq P(o_t=o,a_t=a,z_t=z)$. 

Consider two Dec-POMDPs different merely in transition function and reward function, $G_1,G_2$. $P_1$ represents the probability in $G_1$ while $P_2$ for $G_2$. Different policies and latent variable functions, $\left(\pi_1,\psi_1\right)$ and $\left(\pi_2,\psi_2\right)$, are used to rollout respectively in $G_1$ and $G_2$, We denote several bound between them:
\begin{align*}
    &\text{reward bound: } r_{\max} \triangleq \underset{o,a,z}{\max}\max\{R_1(o,a,z),R_2(o,a,z)\}; \\
    &\text{policy bound: } \epsilon _\pi\triangleq \underset{o}{\max}D_{TV}\left( \pi_1 \|\pi_2 \right);\\
    &\text{latent variable function bound: } \epsilon _\psi\triangleq \underset{o}{\max}D_{TV}\left( \psi_1 \|\psi_2 \right); \\ 
    &\text{transition function bound: } \epsilon _m\triangleq \underset{t}{\max}\mathbb{E}_{o,a,z\sim P_{2}^{t-1}}D_{TV}\left( P_1\left( o_t|o,a,z \right) \|P_2\left( o_t|o,a,z \right) \right) .
\end{align*}

Additionally, consider the branched model rollout mentioned in Section 3. Policy, latent variable function, transition function, and reward function vary before and after the model rollout branch. We denote these functions via superscripts 'Pre' for function before branch and 'Post' for functions after branch. Correspondingly, when discussing branched model rollout, we extend the bounds above:
\begin{small}
\begin{align*}
    &\text{reward bound: } r_{\max} \triangleq \underset{o,a,z}{\max}\max\{R_1^{Pre}(o,a,z),R_1^{Post}(o,a,z),R_2^{Pre}(o,a,z),R_2^{Post}(o,a,z)\}; \\
    &\text{policy bound: } \epsilon _\pi^{Pre}\triangleq \underset{o}{\max}D_{TV}\left( \pi_1^{Pre} \|\pi_2^{Pre} \right), \epsilon _\pi^{Post}\triangleq \underset{o}{\max}D_{TV}\left( \pi_1^{Post} \|\pi_2^{Post} \right);\\
    &\text{latent variable function bound: } \epsilon _\psi^{Pre}\triangleq \underset{o}{\max}D_{TV}\left( \psi_1^{Pre} \|\psi_2^{Pre} \right), \epsilon _\psi^{Post}\triangleq \underset{o}{\max}D_{TV}\left( \psi_1^{Post} \|\psi_2^{Post} \right); 
\end{align*}
\begin{align*}
    \text{transition function bound: } \epsilon _m^{Pre}&\triangleq \underset{t}{\max}\mathbb{E}_{o,a,z\sim P_{2}^{t-1,Pre}}D_{TV}\left( P_1^{Pre}\left( o_t|o,a,z \right) \|P_2^{Pre}\left( o_t|o,a,z \right) \right), \\
    \epsilon_m^{Post}&\triangleq \underset{t}{\max}\mathbb{E}_{o,a,z\sim P_{2}^{t-1,Post}}D_{TV}\left( P_1^{Post}\left( o_t|o,a,z \right) \|P_2^{Post}\left( o_t|o,a,z \right) \right) .
\end{align*}    
\end{small}

\begin{lemma}[Observation distributions TVD bound]\label{l.c.2}
The total variation distance of observation distributions at timestep $t$, $P_1(o_t)$ and $P_2(o_t)$, can be bounded as below:
\begin{align*}
D_{TV}\left( P_1\left( o_t \right) \|P_2\left( o_t \right) \right) \le t\left( \epsilon _\pi+\epsilon _\psi+\epsilon _m \right) 
\end{align*}
.
\end{lemma}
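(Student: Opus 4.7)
The plan is to proceed by induction on $t$. The base case $t = 0$ is immediate: by assumption the two Dec-POMDPs share the same initial observation distribution, so $D_{TV}(P_1(o_0)\|P_2(o_0)) = 0 \le 0 \cdot (\epsilon_\pi + \epsilon_\psi + \epsilon_m)$. For the inductive step, I would unfold one step of the observation dynamics using the factorization
\begin{equation*}
P_j(o_{t+1}) = \sum_{o,a,z} P_j^t(o,a,z)\, P_j(o_{t+1}\mid o,a,z), \qquad P_j^t(o,a,z) = P_j(o_t = o)\,\pi_j(a\mid o)\,\psi_j(z\mid o),
\end{equation*}
for $j = 1,2$, and then compare the two distributions term by term.

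The key algebraic step is to insert the hybrid $P_2^t(o,a,z)\, P_1(o_{t+1}\mid o,a,z)$, so that $P_1(o_{t+1}) - P_2(o_{t+1})$ splits into (i) a difference of the joint distributions $P_1^t$ and $P_2^t$ paired against a probability kernel, and (ii) the expectation, under $P_2^t$, of the difference of the two transition kernels. Taking absolute values, summing over $o_{t+1}$, and dividing by $2$ gives
\begin{equation*}
D_{TV}(P_1(o_{t+1})\|P_2(o_{t+1})) \le D_{TV}(P_1^t\|P_2^t) + \mathbb{E}_{o,a,z \sim P_2^t}\, D_{TV}\!\bigl(P_1(\cdot\mid o,a,z) \,\|\, P_2(\cdot\mid o,a,z)\bigr).
\end{equation*}
The second term is bounded by $\epsilon_m$ directly from its definition. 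For the first term, I would apply Lemma~\ref{l.c.1} to the factorization $P_j^t(o,a,z) = P_j(o_t)\cdot\pi_j(a\mid o)\cdot\psi_j(z\mid o)$, producing
\begin{equation*}
D_{TV}(P_1^t\|P_2^t) \le D_{TV}(P_1(o_t)\|P_2(o_t)) + \epsilon_\pi + \epsilon_\psi.
\end{equation*}

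Chaining these two estimates yields the one-step recursion $D_{TV}(P_1(o_{t+1})\|P_2(o_{t+1})) \le D_{TV}(P_1(o_t)\|P_2(o_t)) + (\epsilon_\pi + \epsilon_\psi + \epsilon_m)$, which telescopes, together with the base case, to the claimed bound $t(\epsilon_\pi + \epsilon_\psi + \epsilon_m)$. The only subtle point I expect to watch carefully is the direction in which the hybrid term is inserted: the split must be done so that the expectation in (ii) is taken against $P_2^t$ rather than $P_1^t$, so it matches the asymmetric definition of $\epsilon_m$ (expectation under $P_2^{t-1}$). This is a bookkeeping issue rather than a real obstacle, but picking the wrong hybrid would leave an uncontrolled term, so I would write out the add-and-subtract step explicitly.
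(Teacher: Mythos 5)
Your proposal is correct and follows essentially the same route as the paper's proof: the same add-and-subtract of the hybrid term $P_2^{t-1}(o,a,z)\,P_1(o_t\mid o,a,z)$ so that the kernel-difference term is averaged under $P_2^{t-1}$ (matching the asymmetric definition of $\epsilon_m$), the same application of Lemma~\ref{l.c.1} to the factorization $P_j^{t}(o,a,z)=P_j(o_t)\pi_j(a\mid o)\psi_j(z\mid o)$, and the same telescoping of the resulting one-step recursion from the shared initial distribution. The subtle point you flag about the direction of the hybrid insertion is exactly the choice the paper makes.
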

\begin{proof}
\begin{align*}
D_{TV}\left( P_1\left( o_t \right) \|P_2\left( o_t \right) \right) =&\frac{1}{2}\sum_{o_t}{\left| P_1\left( o_t \right) -P_2\left( o_t \right) \right|}
\\
=&\frac{1}{2}\sum_{o_t}{\left| \sum_{o,a,z}{P_{1}^{t-1}\left( o,a,z \right) P_1\left( o_t|o,a,z \right) -P_{2}^{t-1}\left( o,a,z \right) P_2\left( o_t|o,a,z \right)} \right|}
\\
\le& \frac{1}{2}\sum_{o_t}\bigg( \sum_{o,a,z}{\left| P_{1}^{t-1}\left( o,a,z \right) -P_{2}^{t-1}\left( o,a,z \right) \right|P_1\left( o_t|o,a,z \right)}
\\
&\qquad\quad+\sum_{o,a,z}{P_{2}^{t-1}\left( o,a,z \right) \left| P_1\left( o_t|o,a,z \right) -P_2\left( o_t|o,a,z \right) \right|} \bigg)
\\
=&\frac{1}{2}\sum_{o,a,z}{\left| P_{1}^{t-1}\left( o,a,z \right) -P_{2}^{t-1}\left( o,a,z \right) \right|\sum_{o_t}{\begin{array}{c}
	P_1\left( o_t|o,a,z \right)\\
\end{array}}}
\\
&+\mathbb{E}_{o,a,z\sim P_{2}^{t-1}}D_{TV}\left( P_1\left( o_t|o,a,z \right) \|P_2\left( o_t|o,a,z \right) \right) 
\\
=&D_{TV}\left( P_{1}^{t-1}\left( o,a,z \right) \|P_{2}^{t-1}\left( o,a,z \right) \right) + \epsilon_m
\end{align*}
According to Lemma \ref{l.c.1},
\begin{align*}
D_{TV}\left( P_{1}^{t-1}\left( o,a,z \right) \|P_{2}^{t-1}\left( o,a,z \right) \right)\le& D_{TV}\left( P_1\left( o_{t-1} \right) \|P_2\left( o_{t-1} \right) \right) 
\\
&+\underset{o}{\max}D_{TV}\left( \pi _1\|\pi _2 \right) +\underset{o}{\max}D_{TV}\left( \psi _1\|\psi _2 \right) 
\\
=& D_{TV}\left( P_1\left( o_{t-1} \right) \|P_2\left( o_{t-1} \right) \right)+\epsilon_\pi+\epsilon_\psi
\end{align*}

Thus,
\begin{align*}
    D_{TV}\left( P_1\left( o_t \right) \|P_2\left( o_t \right) \right) \le& D_{TV}\left( P_1\left( o_{t-1} \right) \|P_2\left( o_{t-1} \right) \right) + \epsilon_\pi + \epsilon_\psi + \epsilon_m
    \\ \le& D_{TV}\left( P_1\left( o_0 \right) \|P_2\left( o_0 \right) \right) +t\left(\epsilon_\pi + \epsilon_\psi + \epsilon_m\right)
    \\ =& t\left(\epsilon_\pi + \epsilon_\psi + \epsilon_m\right)
\end{align*}
\end{proof}

\begin{lemma}[Rollout return bound]\label{l.c.3}
  The gap between rollout returns in $G_1$ with $\left(\pi_1,\psi_1\right)$ and $G_2$ with $\left(\pi_2,\psi_2\right)$ is bounded as:
 \begin{align*}
\left| \eta _1\left( \pi _1,\psi _1 \right) -\eta _2\left( \pi _2,\psi _2 \right) \right| \le \frac{2r_{\max}}{1-\gamma}\left( \frac{\gamma \left( \epsilon _{\pi}+\epsilon _{\psi}+\epsilon _m \right)}{1-\gamma}+\epsilon _{\pi}+\epsilon _{\psi} \right) 
\end{align*}
\end{lemma}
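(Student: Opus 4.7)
The plan is to expand both returns as discounted sums over timesteps, reduce the per-timestep difference to a total variation distance on the joint distribution over $(o_t,a_t,z_t)$ times the uniform reward bound $r_{\max}$, and then invoke the two preceding lemmas to control that TVD. Concretely, I would write
\[
\eta_i(\pi_i,\psi_i)=\sum_{t=0}^{\infty}\gamma^{t}\sum_{o,a,z}P_i^{t}(o,a,z)\,R_i(o,a,z),
\]
apply the triangle inequality termwise, and use $|R_i|\le r_{\max}$ to obtain the clean per-timestep bound
\[
\bigl|\mathbb{E}_{P_1^{t}}[R_1]-\mathbb{E}_{P_2^{t}}[R_2]\bigr|\le 2r_{\max}\,D_{TV}\!\bigl(P_1^{t}(o,a,z)\|P_2^{t}(o,a,z)\bigr).
\]

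The next step is to peel this joint TVD into its three sources of discrepancy. Factorizing $P_i^{t}(o,a,z)=P_i(o_t)\,\pi_i(a\mid o)\,\psi_i(z\mid o)$ and applying Lemma~\ref{l.c.1} yields
\[
D_{TV}\!\bigl(P_1^{t}(o,a,z)\|P_2^{t}(o,a,z)\bigr)\le D_{TV}\!\bigl(P_1(o_t)\|P_2(o_t)\bigr)+\epsilon_\pi+\epsilon_\psi,
\]
and Lemma~\ref{l.c.2} then replaces the observation-marginal TVD by $t(\epsilon_\pi+\epsilon_\psi+\epsilon_m)$. Substituting back and pulling out the discount factors leaves the single series
\[
|\eta_1-\eta_2|\le 2r_{\max}\sum_{t=0}^{\infty}\gamma^{t}\bigl[\,t(\epsilon_\pi+\epsilon_\psi+\epsilon_m)+\epsilon_\pi+\epsilon_\psi\,\bigr],
\]
which collapses via the elementary identities $\sum_{t\ge 0}\gamma^{t}=\tfrac{1}{1-\gamma}$ and $\sum_{t\ge 0}t\gamma^{t}=\tfrac{\gamma}{(1-\gamma)^{2}}$ into exactly the right-hand side claimed.

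The main obstacle is really the very first step: the two rewards $R_1$ and $R_2$ need not agree, yet the target bound carries no dedicated reward-error term. Making $2r_{\max}\,D_{TV}$ the only contribution requires splitting $\sum_{o,a,z}(P_1^{t}R_1-P_2^{t}R_2)$ carefully and absorbing the $R_1-R_2$ slack into the uniform bound $r_{\max}$ rather than treating it as a separate error, an accounting choice that should be made explicit. After that, the only remaining risk is bookkeeping: keeping the extra $\epsilon_\pi+\epsilon_\psi$ contributed by Lemma~\ref{l.c.1} distinct from the $t(\epsilon_\pi+\epsilon_\psi+\epsilon_m)$ contributed by Lemma~\ref{l.c.2}, so that the two different geometric sums combine into the exact form $\tfrac{\gamma(\epsilon_\pi+\epsilon_\psi+\epsilon_m)}{1-\gamma}+\epsilon_\pi+\epsilon_\psi$ appearing in the parentheses.
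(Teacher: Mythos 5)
Your proposal follows essentially the same route as the paper's proof: bound the return gap by $2r_{\max}\sum_t\gamma^t D_{TV}\left(P_1^t(o,a,z)\|P_2^t(o,a,z)\right)$, peel the joint TVD with Lemma~\ref{l.c.1} into $D_{TV}\left(P_1(o_t)\|P_2(o_t)\right)+\epsilon_\pi+\epsilon_\psi$, control the observation marginal with Lemma~\ref{l.c.2}, and collapse the two geometric series. The reward-mismatch obstacle you flag is genuine (when $R_1\neq R_2$ the term $\mathbb{E}_{P_1^t}[R_1-R_2]$ is not controlled by any total variation distance), but the paper's own proof simply writes a single $R(o,a,z)$ for both returns, i.e., it implicitly assumes a common reward function and never introduces a reward-error term, so on this point you are being more careful than the source rather than diverging from it.
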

\begin{proof}
\begin{align*}
\left| \eta _1\left( \pi _1,\psi _1 \right) -\eta _2\left( \pi _2,\psi _2 \right) \right|=&\left| \sum_{o,a,z}{R\left( o,a,z \right) \sum_t{\gamma ^t\left( P_{1}^{t}\left( o,a,z \right) -P_{2}^{t}\left( o,a,z \right) \right)}} \right|
\\
\le& 2r_{\max}\sum_t{\gamma ^tD_{TV}\left( P_1^t\left( o,a,z \right) \|P_2^t\left( o,a,z \right) \right)}
\end{align*}
Using Lemma \ref{l.c.1} and \ref{l.c.2}, 
\small
\begin{align*}
D_{TV}\left( P_1^t\left( o,a,z \right) \|P_2^t\left( o,a,z \right) \right)
\le&  D_{TV}\left( P_1\left( o_t \right) \|P_2\left( o_t \right) \right) +\underset{o}{\max}D_{TV}\left( \pi _1\|\pi _2 \right) +\underset{o}{\max}D_{TV}\left( \psi _1\|\psi _2 \right)
\\
\le&  t\left( \epsilon _{\pi}+\epsilon _{\psi}+\epsilon _m \right) +\epsilon _{\pi}+\epsilon _{\psi}
\end{align*}
Thus,
\begin{align*}
\\
\left| \eta _1\left( \pi _1,\psi _1 \right) -\eta _2\left( \pi _2,\psi _2 \right) \right| \le \frac{2r_{\max}}{1-\gamma}\left( \frac{\gamma \left( \epsilon _{\pi}+\epsilon _{\psi}+\epsilon _m \right)}{1-\gamma}+\epsilon _{\pi}+\epsilon _{\psi} \right)     
\end{align*}
\end{proof}

\begin{lemma}[Branched rollout return bound]\label{l.c.4}
In $k$-step branched rollout with $h$-step length before the branch taking into consideration, denote the gap between branched rollout returns in $G_1$ with $\left( \left( \pi _{1}^{Pre},\pi _{1}^{Post} \right) ,\left( \psi _{1}^{Pre},\psi _{1}^{Post} \right)\right)$ and $G_2$ with $\left( \left( \pi _{2}^{Pre},\pi _{2}^{Post} \right) ,\left( \psi _{2}^{Pre},\psi _{2}^{Post} \right) \right)$ as:
$$
\left| \eta _1-\eta _2 \right| \triangleq \left| \eta _1^{branch}\left( \left( \pi _{1}^{Pre},\pi _{1}^{Post} \right) ,\left( \psi _{1}^{Pre},\psi _{1}^{Post} \right) \right) -\eta _2^{branch}\left( \left( \pi _{2}^{Pre},\pi _{2}^{Post} \right) ,\left( \psi _{2}^{Pre},\psi _{2}^{Post} \right) \right) \right|
$$
which is bounded as:
\begin{align*}
\left| \eta _1-\eta _2 \right|\le \frac{2r_{\max}}{1-\gamma}\bigg[ &\left( h+\frac{\gamma ^{h+k+1}}{\left( 1-\gamma \right)} \right) \left( \epsilon _{\pi}^{Pre}+\epsilon _{\psi}^{Pre}+\epsilon _{m}^{Pre} \right) +\left( \epsilon _{\pi}^{Pre}+\epsilon _{\psi}^{Pre} \right) \\
 &\qquad\qquad\qquad\qquad\quad+\gamma ^{h+1}\left( k\left( \epsilon _{\pi}^{Post}+\epsilon _{\psi}^{Post}+\epsilon _{m}^{Post} \right) +\epsilon _{\pi}^{Post}+\epsilon _{\psi}^{Post} \right) \bigg] 
\end{align*}
\end{lemma}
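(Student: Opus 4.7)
The plan is to extend the template of Lemma \ref{l.c.3} by partitioning the discounted return difference into three phases induced by the branched rollout structure: Pre dynamics $(\pi^{Pre},\psi^{Pre},P^{Pre})$ for $t\in[0,h]$, Post dynamics $(\pi^{Post},\psi^{Post},P^{Post})$ for $t\in[h+1,h+k]$, and the resumed Pre dynamics in the tail $t\ge h+k+1$. Starting from the same bound as in Lemma \ref{l.c.3},
\begin{equation*}
|\eta_1-\eta_2|\le 2r_{\max}\sum_{t=0}^\infty \gamma^t\, D_{TV}\left(P_1^t(o,a,z)\,\|\,P_2^t(o,a,z)\right),
\end{equation*}
the goal is to bound the joint TVD at every timestep under phase-dependent dynamics and then evaluate the resulting discounted sum.

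First I would apply Lemma \ref{l.c.1} at each $t$ to peel off the joint-distribution correction, replacing $D_{TV}(P_1^t(o,a,z)\|P_2^t(o,a,z))$ by $D_{TV}(P_1(o_t)\|P_2(o_t))+\epsilon_\pi^{\mathrm{phase}(t)}+\epsilon_\psi^{\mathrm{phase}(t)}$, where the policy/latent errors are chosen according to which policy is active at time $t$. Next, I would generalize the one-step recursion in the proof of Lemma \ref{l.c.2} so that the per-step increment $\Delta_s$ on the observation-marginal TVD is $\epsilon_\pi^{Pre}+\epsilon_\psi^{Pre}+\epsilon_m^{Pre}$ during the Pre phases and $\epsilon_\pi^{Post}+\epsilon_\psi^{Post}+\epsilon_m^{Post}$ during the Post phase. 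Writing $D_{TV}(P_1(o_t)\|P_2(o_t))\le\sum_{s=1}^t\Delta_s$ and swapping the order of summation gives
\begin{equation*}
\sum_{t=0}^\infty \gamma^t\sum_{s=1}^t\Delta_s=\sum_{s=1}^\infty \Delta_s\cdot\frac{\gamma^s}{1-\gamma}.
\end{equation*}

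Evaluating the partial geometric sums with the coarse bounds $\sum_{s=1}^h\gamma^s\le h$, $\sum_{s=h+1}^{h+k}\gamma^s\le k\gamma^{h+1}$, and $\sum_{s\ge h+k+1}\gamma^s=\gamma^{h+k+1}/(1-\gamma)$ collects the Pre-phase contributions (pre-branch plus tail) with coefficient $(h+\gamma^{h+k+1}/(1-\gamma))/(1-\gamma)$ on $\epsilon_\pi^{Pre}+\epsilon_\psi^{Pre}+\epsilon_m^{Pre}$ and the Post-phase contribution with coefficient $k\gamma^{h+1}/(1-\gamma)$ on $\epsilon_\pi^{Post}+\epsilon_\psi^{Post}+\epsilon_m^{Post}$. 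The Lemma \ref{l.c.1} correction terms are summed similarly, giving a coefficient of at most $1/(1-\gamma)$ on $\epsilon_\pi^{Pre}+\epsilon_\psi^{Pre}$ and $\gamma^{h+1}/(1-\gamma)$ on $\epsilon_\pi^{Post}+\epsilon_\psi^{Post}$. Factoring out $2r_{\max}/(1-\gamma)$ and rearranging gives the stated bound.

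The main obstacle is extending the recursion of Lemma \ref{l.c.2} to dynamics that switch at fixed timesteps: the observation-marginal TVD accumulated under Pre dynamics at $t=h$ becomes the base case for the Post recursion, and the accumulated value at $t=h+k$ then feeds into the resumed Pre recursion. One must carefully verify that the one-step telescoping bound of Lemma \ref{l.c.2} still applies at each phase boundary, with the increment at the boundary itself governed by whichever $(\pi,\psi,P)$ is active. Once this piecewise recursion is cleanly stated, the swap of summation order and the evaluation of the partial geometric series are mechanical, and the three contributions regroup exactly as claimed.
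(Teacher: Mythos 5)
Your proposal is correct and follows essentially the same route as the paper's proof: the same decomposition into three phases (Pre for $t\le h$, Post for $h<t\le h+k$, Pre again in the tail), the same phase-dependent use of Lemma~\ref{l.c.1} for the joint correction and of the one-step recursion from Lemma~\ref{l.c.2} for the observation-marginal TVD, yielding the same coefficients. The only difference is presentational --- you evaluate the discounted double sum by swapping the order of summation, whereas the paper writes out an explicit bound on $\delta_t$ in each regime and sums $\gamma^t\delta_t$ directly; both give identical partial geometric sums and the stated bound.
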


\begin{proof}
According to Lemma \ref{l.c.2},
\begin{align*}
D_{TV}\left( P_1\left( o_t \right) \|P_2\left( o_t \right) \right) \le D_{TV}\left( P_1\left( o_{t-1} \right) \|P_2\left( o_{t-1} \right) \right) +\epsilon _{\pi}+\epsilon _{\psi}+\epsilon _m,
\end{align*}
 which stays in branched rollout case. \\
We can discuss $\delta_t \triangleq D_{TV}\left( P_{1}^{t}\left( o,a,z \right) \|P_{2}^{t}\left( o,a,z \right) \right)$ with different $t$ value:\\
when $t\le h$,
\begin{align*}
\delta_t \le t\left( \epsilon _{\pi}^{Pre}+\epsilon _{\psi}^{Pre}+\epsilon _{\omega}^{Pre} \right) +\epsilon _{\pi}^{Pre}+\epsilon _{\psi}^{Pre}
\end{align*}
when $h<t\le h+k$,
\begin{align*}
 \delta_t \le& h\left( \epsilon _{\pi}^{Pre}+\epsilon _{\psi}^{Pre}+\epsilon _{m}^{Pre} \right) +\left( t-h \right) \left( \epsilon _{\pi}^{Post}+\epsilon _{\psi}^{Post}+\epsilon _{m}^{Post} \right) +\epsilon _{\pi}^{Post}+\epsilon _{\psi}^{Post}   
\end{align*}\\
when $t > h+k$,
\begin{align*}
\delta_t \le&h\left( \epsilon _{\pi}^{Pre}+\epsilon _{\psi}^{Pre}+\epsilon _{m}^{Pre} \right) +k\left( \epsilon _{\pi}^{Post}+\epsilon _{\psi}^{Post}+\epsilon _{m}^{Post} \right)+\epsilon _{\pi}^{Post}+\epsilon _{\psi}^{Post} 
\\
&\qquad\qquad\qquad\qquad\qquad\qquad\qquad\qquad+\left( t-h-k \right) \left( \epsilon _{\pi}^{Pre}+\epsilon _{\psi}^{Pre}+\epsilon _{m}^{Pre} \right) +\epsilon _{\pi}^{Pre}+\epsilon _{\psi}^{Pre}
\\
=&\left( t-k \right) \left( \epsilon _{\pi}^{Pre}+\epsilon _{\psi}^{Pre}+\epsilon _{m}^{Pre} \right) +\epsilon _{\pi}^{Pre}+\epsilon _{\psi}^{Pre}+k\left( \epsilon _{\pi}^{Post}+\epsilon _{\psi}^{Post}+\epsilon _{m}^{Post} \right) +\epsilon _{\pi}^{Post}+\epsilon _{\psi}^{Post}    
\end{align*}

Using the inequalities above, we can write:
\begin{align*}
\delta\triangleq&\sum_t{\gamma ^tD_{TV}\left( P_{1}^{t}\left( o,a,z \right) \|P_{2}^{t}\left( o,a,z \right) \right)} 
\\
\le&\sum_{t=0}^h{\bigg[\gamma ^t\Big( t\left( \epsilon _{\pi}^{Pre}+\epsilon _{\psi}^{Pre}+\epsilon _{m}^{Pre} \right) +\epsilon _{\pi}^{Pre}+\epsilon _{\psi}^{Pre} \Big)\bigg]}
\\
&+\sum_{t=h+1}^{h+k}\bigg[\gamma ^t\Big( h\left( \epsilon _{\pi}^{Pre}+\epsilon _{\psi}^{Pre}+\epsilon _{m}^{Pre} \right) +\epsilon _{\pi}^{Post}+\epsilon _{\psi}^{Post} +\left( t-h \right) \left( \epsilon _{\pi}^{Post}+\epsilon _{\psi}^{Post}+\epsilon _{m}^{Post} \right) \Big)\bigg]
\\
&+\sum_{t=h+k+1}^{\infty}\bigg[\gamma ^t\Big( \left( t-k \right) \left( \epsilon _{\pi}^{Pre}+\epsilon _{\psi}^{Pre}+\epsilon _{m}^{Pre} \right) +\epsilon _{\pi}^{Pre}+\epsilon _{\psi}^{Pre}
\\
&\qquad\qquad\qquad\qquad\qquad\qquad\qquad\qquad\qquad+k\left( \epsilon _{\pi}^{Post}+\epsilon _{\psi}^{Post}+\epsilon _{m}^{Post} \right) +\epsilon _{\pi}^{Post}+\epsilon _{\psi}^{Post} \Big)\bigg]
\\
\le&\sum_{t=0}^h\bigg[\gamma ^t\Big( h\left( \epsilon _{\pi}^{Pre}+\epsilon _{\psi}^{Pre}+\epsilon _{m}^{Pre} \right) +\epsilon _{\pi}^{Pre}+\epsilon _{\psi}^{Pre} \Big)\bigg]
\\
&+\sum_{t=h+1}^{h+k}\bigg[\gamma ^t\Big( h\left( \epsilon _{\pi}^{Pre}+\epsilon _{\psi}^{Pre}+\epsilon _{m}^{Pre} \right) +\epsilon _{\pi}^{Pre}+\epsilon _{\psi}^{Pre}
\\
&\qquad\qquad\qquad\qquad\qquad\qquad\qquad\qquad\qquad+k\left( \epsilon _{\pi}^{Post}+\epsilon _{\psi}^{Post}+\epsilon _{m}^{Post} \right) +\epsilon _{\pi}^{Post}+\epsilon _{\psi}^{Post} \Big)\bigg]
\\
&+\sum_{t=h+k+1}^{\infty}\bigg[\gamma ^t\Big( \left( t-k \right) \left( \epsilon _{\pi}^{Pre}+\epsilon _{\psi}^{Pre}+\epsilon _{m}^{Pre} \right) +\epsilon _{\pi}^{Pre}+\epsilon _{\psi}^{Pre}
\\
&\qquad\qquad\qquad\qquad\qquad\qquad\qquad\qquad\qquad+k\left( \epsilon _{\pi}^{Post}+\epsilon _{\psi}^{Post}+\epsilon _{m}^{Post} \right) +\epsilon _{\pi}^{Post}+\epsilon _{\psi}^{Post} \Big)\bigg]
\\
\le&\left( \sum_{t=0}^{\infty}{\gamma ^t}h+\gamma ^{h+k}\sum_{t=0}^{\infty}{\gamma ^tt} \right) \left( \epsilon _{\pi}^{Pre}+\epsilon _{\psi}^{Pre}+\epsilon _{m}^{Pre} \right) +\left( \sum_{t=0}^{\infty}{\gamma ^t} \right) \left( \epsilon _{\pi}^{Pre}+\epsilon _{\psi}^{Pre} \right) 
\\
&+\left( \sum_{t=h+1}^{\infty}{\gamma ^t} \right) \left( k\left( \epsilon _{\pi}^{Post}+\epsilon _{\psi}^{Post}+\epsilon _{m}^{Post} \right) +\epsilon _{\pi}^{Post}+\epsilon _{\psi}^{Post} \right) 
\\
=&\left( \frac{h}{1-\gamma}+\frac{\gamma ^{h+k+1}}{\left( 1-\gamma \right) ^2} \right) \left( \epsilon _{\pi}^{Pre}+\epsilon _{\psi}^{Pre}+\epsilon _{m}^{Pre} \right) +\frac{1}{1-\gamma}\left( \epsilon _{\pi}^{Pre}+\epsilon _{\psi}^{Pre} \right) 
\\
&+\frac{\gamma ^{h+1}}{1-\gamma}\left( k\left( \epsilon _{\pi}^{Post}+\epsilon _{\psi}^{Post}+\epsilon _{m}^{Post} \right) +\epsilon _{\pi}^{Post}+\epsilon _{\psi}^{Post} \right)
\end{align*}
Thus,
\begin{align*}
\left| \eta _1-\eta _2 \right|\le& 2r_{\max}\delta
\\
\le& \frac{2r_{\max}}{1-\gamma}\bigg[\left( h+\frac{\gamma ^{h+k+1}}{\left( 1-\gamma \right)} \right) \left( \epsilon _{\pi}^{Pre}+\epsilon _{\psi}^{Pre}+\epsilon _{m}^{Pre} \right) +\left( \epsilon _{\pi}^{Pre}+\epsilon _{\psi}^{Pre} \right) \\
 &\qquad\qquad\qquad\qquad\qquad\qquad+\gamma ^{h+1}\left( k\left( \epsilon _{\pi}^{Post}+\epsilon _{\psi}^{Post}+\epsilon _{m}^{Post} \right) +\epsilon _{\pi}^{Post}+\epsilon _{\psi}^{Post} \right) \bigg] 
\end{align*}
\end{proof}

\subsection{Proof of return bound}
In this section, we provide proofs of return bound in different cases.

\textbf{Theorem 3.2} (Rollout return bound for decentralized model) \textbf{.}
\textit{Denote the return gap between $n+1$th policy rollout and model rollout with $n$th learned model as $\left| \eta \left( \pi ,\psi ^{n+1} \right) -\eta ^{\operatorname{model}}\left( \pi ,\psi _{\omega}^{n} \right) \right| \triangleq \Delta \eta $ ,which is bounded as:
\begin{align*}
 \Delta \eta \le \underset{C\left(\epsilon _\theta,\epsilon _{\pi}, \textcolor{red!30!orange}{\epsilon _\omega},\textcolor{blue!70!black}{\epsilon _{\psi}} \right)}{\underbrace{\frac{2r_{\max}}{\left( 1-\gamma \right) ^2}\left( \gamma \epsilon_\theta+2\epsilon_{\pi}+2\epsilon _\omega+ \epsilon _{\psi} \right) }}.
\end{align*}}

\begin{proof}
$$
\left| \eta \left( \pi ,\psi ^{n+1} \right) -\eta ^{model}\left( \pi ,\psi _{\omega}^{n} \right) \right|\leqslant \underset{L_1}{\underbrace{\left| \eta \left( \pi ,\psi ^{n+1} \right) -\eta \left( \pi^n_D,\psi ^n \right) \right|}}+\underset{L_2}{\underbrace{\left| \eta \left( \pi^n_D,\psi ^n \right) -\eta ^{model}\left( \pi ,\psi _{\omega}^{n} \right) \right|}}\underset{}{}
$$
Apply Lemma \ref{l.c.3} to $L_1$ and $L_2$:
\begin{align*}
L_1\le& \frac{2r_{\max}}{1-\gamma}\left( \frac{\gamma \left( \epsilon _{\pi}+\epsilon _{\psi} \right)}{1-\gamma}+\epsilon _{\pi}+\epsilon _{\psi} \right) ,
\\
L_2\le& \frac{2r_{\max}}{1-\gamma}\left( \frac{\gamma \left( \epsilon _{\pi}+\epsilon _{\theta}+\epsilon _{\omega} \right)}{1-\gamma}+\epsilon _{\pi}+\epsilon _{\omega} \right) .
\end{align*}
Thus,
$$
\left| \eta \left( \pi ,\psi ^{n+1} \right) -\eta ^{model}\left( \pi ,\psi _{\omega}^{n} \right) \right|\le \frac{2r_{\max}}{\left( 1-\gamma \right) ^2}\left( 2\epsilon _{\pi}+\epsilon _{\psi}+\epsilon _{\omega}+\gamma \epsilon _{\theta} \right) .
$$
\end{proof}
\textbf{Theorem 3.3} (Branched rollout return bound for decentralized model) \textbf{.}
 \textit{Denote the return gap between $n+1$th policy rollout and branched model rollout with $n$th learned model as $\left| \eta \left( \pi ,\psi ^{n+1} \right) -\eta ^{\operatorname{branch}}\left( \left( \pi^n,\pi \right) ,\left( \psi ^n,\psi _{\omega}^{n} \right) \right) \right| \triangleq \Delta \eta^{\operatorname{branch}} $, which is bounded as:
 }
 \begin{equation*}
\Delta \eta^{\operatorname{branch}} \le C\left(\epsilon _\theta,\epsilon _{\pi}, \textcolor{red!30!orange}{\epsilon _\omega}, \textcolor{blue!70!black}{\epsilon _{\psi}} \right).
\end{equation*}
\begin{proof}
\begin{align*}
\delta \triangleq& \left| \eta \left( \pi ,\psi ^{n+1} \right) -\eta ^{branch}\left( \left( \pi _{D}^{n},\pi \right) ,\left( \psi ^n,\psi _{\omega}^{n} \right) \right) \right|
\\
\le& \underset{L_1}{\underbrace{\left| \eta \left( \pi ,\psi ^{n+1} \right) -\eta ^{branch}\left( \left( \pi _{D}^{n},\pi _{D}^{n} \right) ,\left( \psi ^{n},\psi ^{n} \right) \right) \right|}}\\
&+\underset{L_2}{\underbrace{\left| \eta ^{branch}\left( \left( \pi _{D}^{n},\pi _{D}^{n} \right) ,\left( \psi^{n},\psi ^{n} \right) \right) -\eta ^{branch}\left( \left( \pi _{D}^{n},\pi \right) ,\left( \psi ^n,\psi _{\omega}^{n} \right) \right) \right|}}
\end{align*}
Apply Lemma \ref{l.c.4} to $L_1$ and $L_2$:
\begin{align*}
L_1\le& \frac{2r_{\max}}{1-\gamma}\left[ \left( h+\frac{\gamma ^{h+k+1}}{1-\gamma} \right) \left( \epsilon _{\pi}+\epsilon_{\psi} \right) +\left( \epsilon _{\pi}+\epsilon_{\psi} \right) +\gamma ^{h+1}\left( k\left( \epsilon _{\pi}+\epsilon_{\psi}+\epsilon _{\theta} \right) +\epsilon _{\pi}+\epsilon_{\psi} \right) \right] 
\\
=&\frac{2r_{\max}}{1-\gamma}\left[ \left( h+\frac{\gamma ^{h+k+1}}{1-\gamma}+1+\left( k+1 \right) \gamma ^{h+1} \right) \left( \epsilon _{\pi}+\epsilon_{\psi} \right) +k\gamma ^{h+1}\epsilon _{\theta} \right] ,
\\
L_2\le& \frac{2r_{\max}}{1-\gamma}\left[ \left( h+\frac{\gamma ^{h+k+1}}{1-\gamma} \right) \left( \epsilon _{\omega} \right) +\left( \epsilon _{\omega} \right) +\gamma ^{h+1}\left( k\left( \epsilon _{\pi}+\epsilon _{\omega} \right) +\epsilon _{\pi}+\epsilon _{\omega} \right) \right] 
\\
=&\frac{2r_{\max}}{1-\gamma}\left[ \left( h+\frac{\gamma ^{h+k+1}}{1-\gamma}+1 \right) \epsilon _{\omega}+\left( k+1 \right) \gamma ^{h+1}\left(\epsilon _{\pi}+\epsilon _{\omega}\right) \right] .
\end{align*}
Thus,
\begin{align*}
\delta \le \frac{2r_{\max}}{1-\gamma}\left[ \left( \frac{\gamma ^{h+k+1}}{1-\gamma}+h+1+\left( k+1 \right) \gamma ^{h+1} \right) \left( \epsilon _{\pi}+\epsilon _{\psi}+\epsilon _{\omega} \right) +k\gamma ^{h+1}\left( \epsilon _{\theta}+k\epsilon _{\pi} \right) \right]
\end{align*}
\end{proof}

\textbf{Theorem 3.4} (Rollout return bound for decentralized model with prediction error) \textbf{.}
\textit{Denote the return gap of $n+1$th policy rollout and model rollout with $n$th learned model as $\left| \eta \left( \pi ,\psi ^{n+1} \right) -\eta ^{\operatorname{model}}\left( \pi ,\psi _{\omega}^{n} \right) \right| \triangleq \Delta \eta$, which is bounded as:}
\begin{equation*}
\Delta \eta \le C\left(\epsilon _\theta,\epsilon _{\pi},\textcolor{green!65!blue}{\hat{\epsilon}_\omega} \right).
\end{equation*}
\begin{proof}
$$
\left| \eta \left( \pi ,\psi ^{n+1} \right) -\eta ^{model} \left( \pi ,\psi _{\omega}^{n} \right) \right|\leqslant \underset{L_1}{\underbrace{\left| \eta \left( \pi ,\psi ^{n+1} \right) -\eta \left( \pi^n_D,\psi _{\omega}^{n} \right) \right|}}+\underset{L_2}{\underbrace{\left| \eta\left( \pi^n_D,\psi _{\omega}^{n} \right) -\eta ^{model}\left( \pi ,\psi _{\omega}^{n} \right) \right|}}
$$
Apply Lemma \ref{l.c.3} to $L_1$ and $L_2$:
\begin{align*}
L_1\le& \frac{2r_{\max}}{1-\gamma}\left( \frac{\gamma \left( \epsilon _{\pi}+\hat{\epsilon}_{\omega} \right)}{1-\gamma}+\epsilon _{\pi}+\hat{\epsilon}_{\omega} \right) 
\\
L_2\le& \frac{2r_{\max}}{1-\gamma}\left( \frac{\gamma \left( \epsilon _{\pi} +\epsilon _{\theta}\right)}{1-\gamma}+\epsilon _{\pi} \right) 
\end{align*}
Thus,
$$
\left| \eta \left( \pi ,\psi ^{n+1} \right) -\eta ^{model}\left( \pi ,\psi _{\omega}^{n} \right) \right|\le \frac{2r_{\max}}{\left( 1-\gamma \right) ^2}\left( 2\epsilon _{\pi}+\hat{\epsilon}_{\omega}+\gamma \epsilon _{\theta} \right) .
$$
\end{proof}

\textbf{Theorem 3.5} (Branched rollout return bound for decentralized model with prediction error) \textbf{.}
\textit{The return gap of $n+1$th policy rollout and branched model rollout with $n$th learned model as $\left| \eta \left( \pi ,\psi ^{n+1} \right) -\eta ^{\operatorname{branch}}\left( \left( \pi^n,\pi \right) ,\left( \psi ^n,\psi _{\omega}^{n} \right) \right) \right| \triangleq \Delta \eta^{\operatorname{branch}}$ is bounded as:}
\begin{equation*}
\Delta \eta^{\operatorname{branch}} \le C\left(\epsilon _\theta,\epsilon _{\pi}, \textcolor{red!30!orange}{\epsilon _\omega},\textcolor{green!65!blue}{\hat{\epsilon}_\omega} \right).
\end{equation*}
\begin{proof}
\begin{align*}
\delta \triangleq& \left| \eta \left( \pi ,\psi ^{n+1} \right) -\eta ^{branch}\left( \left( \pi _{D}^{n},\pi \right) ,\left( \psi ^n,\psi _{\omega}^{n} \right) \right) \right|
\\
\le& \underset{L_1}{\underbrace{\left| \eta \left( \pi ,\psi ^{n+1} \right) -\eta ^{branch}\left( \left( \pi _{D}^{n},\pi _{D}^{n} \right) ,\left( \psi _{\omega}^{n},\psi _{\omega}^{n} \right) \right) \right|}}\\
&+\underset{L_2}{\underbrace{\left| \eta ^{branch}\left( \left( \pi _{D}^{n},\pi _{D}^{n} \right) ,\left( \psi _{\omega}^{n},\psi _{\omega}^{n} \right) \right) -\eta ^{branch}\left( \left( \pi _{D}^{n},\pi \right) ,\left( \psi ^n,\psi _{\omega}^{n} \right) \right) \right|}}
\end{align*}
Apply Lemma \ref{l.c.4} to $L_1$ and $L_2$:
\begin{align*}
L_1\le& \frac{2r_{\max}}{1-\gamma}\left[ \left( h+\frac{\gamma ^{h+k+1}}{1-\gamma} \right) \left( \epsilon _{\pi}+\hat{\epsilon}_{\omega} \right) +\left( \epsilon _{\pi}+\hat{\epsilon}_{\omega} \right) +\gamma ^{h+1}\left( k\left( \epsilon _{\pi}+\hat{\epsilon}_{\omega}+\epsilon _{\theta} \right) +\epsilon _{\pi}+\hat{\epsilon}_{\omega} \right) \right] 
\\
=&\frac{2r_{\max}}{1-\gamma}\left[ \left( h+\frac{\gamma ^{h+k+1}}{1-\gamma}+1+\left( k+1 \right) \gamma ^{h+1} \right) \left( \epsilon _{\pi}+\hat{\epsilon}_{\omega} \right) +k\gamma ^{h+1}\epsilon _{\theta} \right] ,
\\
L_2\le& \frac{2r_{\max}}{1-\gamma}\left[ \left( h+\frac{\gamma ^{h+k+1}}{1-\gamma} \right) \left( \epsilon _{\omega} \right) +\left( \epsilon _{\omega} \right) +\gamma ^{h+1}\left( k\left( \epsilon _{\pi} \right) +\epsilon _{\pi} \right) \right] 
\\
=&\frac{2r_{\max}}{1-\gamma}\left[ \left( h+\frac{\gamma ^{h+k+1}}{1-\gamma}+1 \right) \epsilon _{\omega}+\left( k+1 \right) \gamma ^{h+1}\epsilon _{\pi} \right] .
\end{align*}
Thus,
\begin{align*}
\delta \le \frac{2r_{\max}}{1-\gamma}\left[ \left( \frac{\gamma ^{h+k+1}}{1-\gamma}+h+1 \right) \left( \epsilon _{\pi}+\hat{\epsilon}_{\omega}+\epsilon _{\omega} \right) +\left( k+1 \right) \gamma ^{h+1}\left( 2\epsilon _{\pi}+\hat{\epsilon}_{\omega} \right) +k\gamma ^{h+1}\epsilon _{\theta} \right] 
\end{align*}
\end{proof}

\section{Verification on Learned Latent Variable}\label{sec:verify}

To examine how related the learned latent variable and the inaccessible information are, we designed a simple tabular case, where policies, transition matrix, and reward matrix are preset. There are 3 states and 3 agents with 2 actions for each and the space of latent variable is set to be 4. For agent $0$, we collect experiences $\langle s,a_0,s^\prime,r \rangle$ and train a latent variable model end-to-end. For visualization, we design $\psi_{\omega}$ as an explicit network to fetch learned $z_0$ and preset the forward pass for the state in $P_\theta$ to avoid the correspondence being conditioned on the state. Then, we sample $z_0$ from learned latent variable function network $\psi_\omega$ for each experience in the buffer, and then calculate the conditional probabilities, $P(z_0|\boldsymbol{a}_{-0})$ and $P(\boldsymbol{a}_{-0}|z_0)$. As shown in Figure \ref{fig:verify}, there is a one-to-one correspondence between the latent variable and other agents' joint action. This demonstrates the latent variable model can implicitly capture inaccessible information relevant to transition and reward via end-to-end learning.

\begin{figure*}[t]
        \centering
        \includegraphics[width=0.65\textwidth]{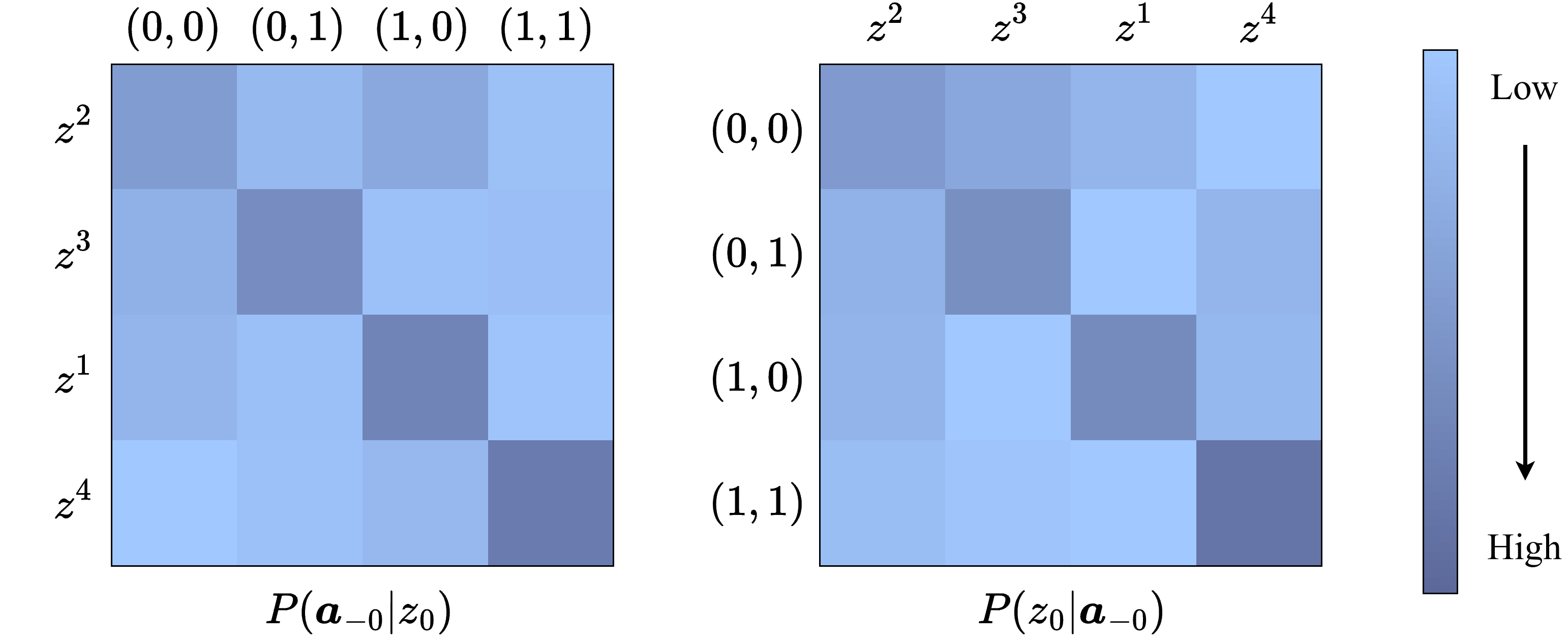}
        \vspace{-2mm}
        \caption{Conditional probability matrices: $P(\boldsymbol{a}_{-0}|z_0)$ (left) and $P(z_0|\boldsymbol{a}_{-0})$ (right). Deeper color means higher probability. The order of $z_0$ has been adjusted for clarification.}
        \label{fig:verify}
\end{figure*}

\section{Experiment Details} \label{sec:hyperparameters}
\subsection{Environment Setting}
In this section, we introduce the environment settings we used in the experiments.

\textbf{Stochastic Game.} In our stochastic game, there are 30 observations, 3 agents and 5 actions for each agent, and episode length is limited to 40 steps. We generate a transition matrix $T$ and a reward matrix $R$ in advance as transition function and reward function. Concretely, $T$ is a matrix in shape of $[30,5,5,5,30]$ and $R$ is a matrix in shape of $[30,5,5,5,1]$. At each timestep $t$, given observation $o_t$ and agent joint actions $\left(a^0_t,a^1_t,a^2_t\right)$, the transition is:
\begin{align*}
    o_{t+1}\sim T[o_t,a^0_t,a^1_t,a^2_t],\ r_t=R[o_t,a^0_t,a^1_t,a^2_t].
\end{align*}

\textbf{MPE.} In our MPE tasks, agents observe their own positions, velocity, and others' relative positions. And actions of agents control their accelerations in every direction which is continuous in our experiments. In MPE tasks, the episode length is limited to 40 steps.
\begin{itemize}[leftmargin=2em]
    \item[] \textbf{4-Agent Cooperative Navigation.} In 4-agent Cooperative Navigation, as shown in Figure~\ref{mpe_task} (left), 4 agents learn to cooperate to reach 4 landmarks respectively. Concretely, we denote the radius of agent $i$ as $d^i$, position of agent $i$ as $(x^i_a,y^i_a)$ and position of landmark $i$ as $(x^i_l,y^i_l)$, and the reward is: $$r=-\sum_{i=0}^{3}{\left(\underset{j}{\min}\sqrt{(x^i_l-x^j_a)^2+(y^i_l-y^j_a)^2}\right)}-p_c,$$ where $p_c$ is a collision penalty:
    $$
    p_c = \sum_{0\le i,j\le 3}\left[\mathbb{I}_{\{x|x<d_i+d_j\}}\left(\sqrt{(x^i_a-x^j_a)^2+(y^i_a-y^j_a)^2}\right)\right].
    $$ 
    Thus, the reward upper bound at each step is $-4$.
    \item[] \textbf{5-Agent Regular Polygon Control.} In 5-agent Regular Polygon Control, as shown in Figure~\ref{mpe_task} (right), 4 agents learn to cooperate with another agent, which is controlled by a fixed policy, aiming to form a regular pentagon. The fixed policy is that the acceleration of the agent is always in the direction of the relative position between the center of the other 4 agents and itself. And the reward is given according to the area $S$ of current pentagon scaled by its perimeter $C$ , which formally is:
    $$
    S_{scaled}=\left\{
    \begin{aligned}
    & S\cdot(\frac{10}{C})^2, & \text{agents form a convex pentagon} \\
    & 0, & \text{otherwise,} 
    \end{aligned}\right.
    $$ 
    and represents the area of its similar pentagon with a perimeter of 10. So when the pentagon is a regular pentagon, $S_{scaled}$ comes to its maximum, $5\cot \frac{\pi}{5}$. Additionally, two penalty items are given. Bound penalty, $p_b$, is used to restrict agents to stay in bounds :
    $$
    bound(x) = \left\{
    \begin{aligned}
    & 0, & |x|<0.9 \\
    & 10*(|x|-0.9), & |x|<1.0\\
    & e^{2|x|-2}, & \text{otherwise}\\
    \end{aligned}\right.,
    $$
    $$
    p_b=\sum_{i=0}^{3}\left[bound(x^i)+bound(y^i)\right],
    $$ where $(x^i,y^i)$ is the position of agent $i$. Collision penalty, $p_c$, is as same as that in 4-agent Cooperative Navigation. Finally, we design the reward as:
    $$
    r=\min\left\{\max\left\{S_{scaled},\frac{1}{5\cot \frac{\pi}{5}-S_{scaled}}\right\},1000\right\} - 4p_c -p_b,
    $$ where $\max$ operator helps to distinguish when pentagon is relatively large and $\min$ operator handles the situation being divided by zero. This task is more difficult than Cooperative Navigation.
\end{itemize}
\begin{figure*}[t]
        \centering
        \includegraphics[width=.6\textwidth]{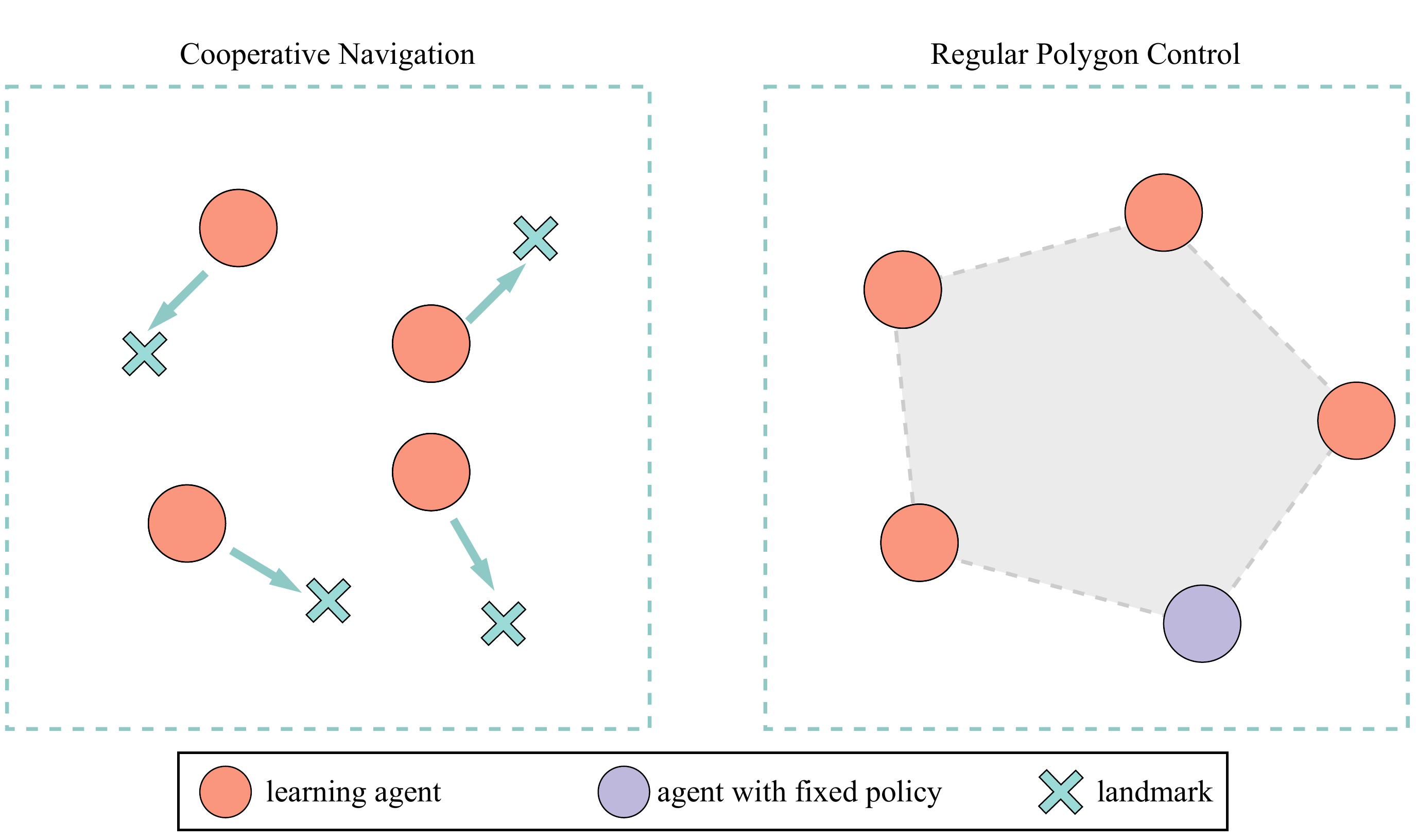}
        \caption{Illustration of MPE tasks: Cooperative Navigation (left) and Regular Polygon Control (right). }
        \label{mpe_task}
\end{figure*}
\textbf{Multi-Agent MuJoCo.} In our multi-agent MuJoCo experiments, the state in MuJoCo environment, which describes the position, velocity, angular velocity of each joint, etc, is used as the observation distributed to each agent. Specifically, in Ant task, we only use dimensions from 0 to 26 of the state. We limit the episode length of Halfcheetah to 250 steps, and 500 steps for Ant and Hopper. We provide the joint allocation of each task in Table~\ref{tab:joint allocation}.
\begin{table}[t]
    \begin{center}
    \caption{Joint allocation in multi-agent MuJoCo tasks. The relation column indicates how agents control the joints of robotics. }
    \label{tab:joint allocation}
    \vspace{2mm}
    \begin{small}
    \begin{tabular}{|c|c|c|c|}
    \hline
         Task & MuJoCo action & Multi-agent MuJoCo actions & Relation  \\
    \hline
         Hopper 3$\times$1& $(a_0,a_1,a_2)$ & $[(a^0_0),(a^1_0),(a^2_0)]$ & $a_i=a^i_0$ \\
    \hline
         Ant 4$\times$2& $(a_0,\cdots,a_7)$ & $[(a^0_0,a^0_1),\cdots,(a^3_0,a^3_1)]$ & $a_{2i+k}=a^i_k$\\
    \hline
         HalfCheetah 3$\times$2& $(a_0,\cdots,a_5)$ & $[(a^0_0,a^0_1),\cdots,(a^2_0,a^2_1)]$ & $a_{2i+k}=a^i_k$\\
    \hline
         HalfCheetah 6$\times$1& $(a_0,\cdots,a_5)$ & $[(a^0_0),\cdots,(a^5_0)]$ & $a_i=a^i_0$\\
    \hline
         HalfCheetah 5:[1,1,1,1,2]& $(a_0,\cdots,a_5)$ & $[(a^0_0),\cdots,(a^3_0),(a^4_0,a^4_1)]$ & $a_{i+k}=a^i_k$\\
    \hline
         HalfCheetah 5$\times$2 & $(a_0,\cdots,a_5)$ & $[(a^0_0,a^0_1),\cdots,(a^4_0,a^4_1)]$ & $ a_i=a^i_0$, $a_5=\frac{\sum_ia^i_1}{5}$\\
    \hline
    \end{tabular}
    \end{small}    
    \end{center}
\end{table}

\begin{table}[t]
    \centering
    \caption{Structure of the neural networks we used in experiments.}
    \label{tab:structure}
    \vspace{2mm}
    \begin{tabular}{|c|c|c|c|c|c|c|}
    \hline
        \multirow{2}{*}{Network} & \multicolumn{2}{|c|}{Stochastic Game} & \multicolumn{2}{|c|}{MPE} & \multicolumn{2}{|c|}{Multi-agent MuJoCo}\\ 
    \cline{2-7}
         & hidden & activation& hidden & activation& hidden & activation\\
    \hline
    PPO Actor & (128,128)& tanh & (128,128)& tanh & (128,128)& tanh\\
    \hline
    PPO Critic & (128,128)& tanh & (128,128)& tanh & (128,128)& tanh \\
    \hline
    $\psi_\omega$ & (64,64) & ReLU & (64,64) & ReLU & (32,32) & ReLU\\
    \hline 
    $f_\zeta$ & (128) & ReLU & (64,64) & ReLU & (256) & ReLU\\
    \hline
    $P_\theta$ & (128,64) & ReLU & (64) & ReLU & (128,64) & ReLU \\
    \hline
    $R_\phi$ & (128,64) & ReLU & (64,64) & ReLU & (128,64) & ReLU \\
    \hline
    \end{tabular}
\end{table}

\begin{table}[t!]
    \centering
    \caption{Hyperparameters}
    \label{tab:hyperparameters}
    \vspace{2mm}
    \begin{tabular}{|c|c|c|c|}
    \hline
        {} & {Stochastic Game} & {MPE} & {Multi-agent MuJoCo}\\
    \hline
        latent variable dimension & 3 & 4 & 6\\
    \hline
        $\lambda$ & \multicolumn{3}{|c|}{0.94}\\
    \hline
        $\gamma$ & \multicolumn{3}{|c|}{0.98}\\
    \hline
        $\epsilon$ & \multicolumn{3}{|c|}{0.2}\\
    \hline
        $\epsilon_{value}$ & \multicolumn{3}{|c|}{10}\\
    \hline
        $c_{entropy}$ & \multicolumn{3}{|c|}{3e-4}\\
    \hline
        $c_{value}$ & \multicolumn{3}{|c|}{0.5}\\
    \hline
        max gradient norm &  \multicolumn{3}{|c|}{0.6}\\
    \hline
        PPO batch size & \multicolumn{3}{|c|}{32}\\
    \hline
        actor learning rate &  \multicolumn{3}{|c|}{3e-4}\\
    \hline
        critic learning rate &  \multicolumn{3}{|c|}{1e-3}\\
    \hline
        $k$ &  \multicolumn{2}{|c|}{4}  & 2\\
    \hline
        $h$ &   \multicolumn{2}{|c|}{8} & 6\\
    \hline
        $l$ &  \multicolumn{3}{|c|}{8}\\
    \hline
        $c_{o\prime}$ & 10 & 1,100\footnotemark[4] & 5\\
    \hline
        latent variable model batch size & \multicolumn{3}{|c|}{64}\\
    \hline
        prediction batch size & \multicolumn{2}{|c|}{32} & 128\\
    \hline
        $\psi_\omega$ learning rate & \multicolumn{3}{|c|}{1e-5}\\
    \hline 
        $f_\zeta$ learning rate & 1e-4 & 3e-5 & 5e-5 \\
    \hline
        $P_\theta$ learning rate & 1e-4 & \multicolumn{2}{|c|}{3e-5}\\
    \hline
        $R_\phi$ learning rate & 1e-4 & \multicolumn{2}{|c|}{3e-5}\\   
    \hline
    \end{tabular}
\end{table}

\subsection{Implementation \& Hyperparameters}
In this section, we provide details for implementation and hyperparameters.

For the experiment environment, we adopt MPE (MIT license) and MuJoCo Gym (MIT license). For PPO, we follow the version in OpenAI's Spinning Up (MIT license).

All neural networks used in our implementation are in the form of Multi-Layer Perception (MLP). Particularly, the transition function and reward function are respectively learned using an ensemble formed by 3 individual versions of the last layer. The hidden size and activation function used in the networks are provided in Table~\ref{tab:structure}. And the parameters used in training are provided in Table~\ref{tab:hyperparameters}.

\footnotetext[4]{$c_{o\prime}=1$ in 4-Agent Cooperative Navigation task and $c_{o\prime}=100$ in 5-Agent Regular Polygon Control}

In the implementation of latent variable function, both deterministic and stochastic latent variable satisfy our analysis, and we choose between them according to environment properties and experimental performance. In MPE and stochastic game except for Appendix B, we use deterministic latent variable with L2 regularization. In stochastic game in Appendix B, we use Category distribution. And in multi-agent MuJoCo environment, we use Gaussian distribution. As for the implementation of transition function and reward function, we use Category distribution for transition function in stochastic game and deterministic output for others.

The experiments are carried out on Intel i9-10900K CPU and NVIDIA GTX 3080Ti GPU. The training of stochastic game task costs 6 hours, while it takes 14 hours for each MPE task, and 25 hours for each multi-agent MuJoCo task. 


\begin{table}[t]
    \centering
    \caption{Structure of neural networks in GRF experiments.}
    \vspace{2mm}
    \label{tab:grf structure}
    \begin{tabular}{|c|c|c|}
    \hline
        \multirow{2}{*}{Network} & \multicolumn{2}{|c|}{Google Research Football} \\ 
    \cline{2-3}
         & hidden & activation\\
    \hline
    TRPO Actor & (128,128)& tanh \\
    \hline
    TRPO Critic & (128,128)& tanh \\
    \hline
    $\psi_\omega$ & (128,128) & ReLU \\
    \hline 
    $f_\zeta$ & (256) & ReLU \\
    \hline
    $P_\theta$ & (256,128) & ReLU \\
    \hline
    $R_\phi$ & (128,128) & ReLU \\
    \hline
    \end{tabular}
\end{table}

\begin{table}[t]
    \centering
    \caption{Hyperparameters in GRF experiments}
    \label{tab:grf hyperparameters}
    \vspace{2mm}
    \begin{tabular}{|c|c|c|c|}
    \hline
        \multicolumn{2}{|c|}{TRPO hyperparameters} & \multicolumn{2}{|c|}{MDPO hyperparameters}\\
    \hline
        $\lambda$ & 1.0 & latent variable dimension & 6\\
    \hline
        $\gamma$ & 0.99 & $k$ & 4 \\
    \hline
        KL limitation & 0.06 & $h$ & 16 \\
    \hline
        damping coefficient & 0.2 & $l$ & 8 \\
    \hline
        conjugate gradient iteration & 8 & $c_{o\prime}$ & 1 \\
    \hline
        backtrack iteration & 8 & latent variable model batch size & 128 \\
    \hline
        backtrack coefficient & 0.8 & prediction batch size & 256\\
    \hline
        max gradient norm & 10 & $\psi_\omega$ learning rate & 1e-5\\
    \hline
        TRPO batch size & 48 & $f_\zeta$ learning rate & 5e-5\\
    \hline
        $\epsilon_{value}$ & 10 & $P_\theta$ learning rate & 3e-5\\
    \hline
        critic learning rate &  5e-5 & $R_\phi$ learning rate & 1e-5 \\
    \hline
    \end{tabular}
\end{table}

\section{Google Research Football}
\label{app:grf}



In our implementation, we use Category distribution for one-hot dimensions in observation and Gaussion distribution for others to model the transition. We use MLP for reward function and Gaussion distribution for latent variable function. The network structures are listed in Table~\ref{tab:grf structure} and the hyperparameters are listed in Table~\ref{tab:grf hyperparameters}.

\section{Single-Agent Non-Stationary Environment}
\label{app:more}

 \begin{figure*}[t]
        \centering
        \includegraphics[width=1\textwidth]{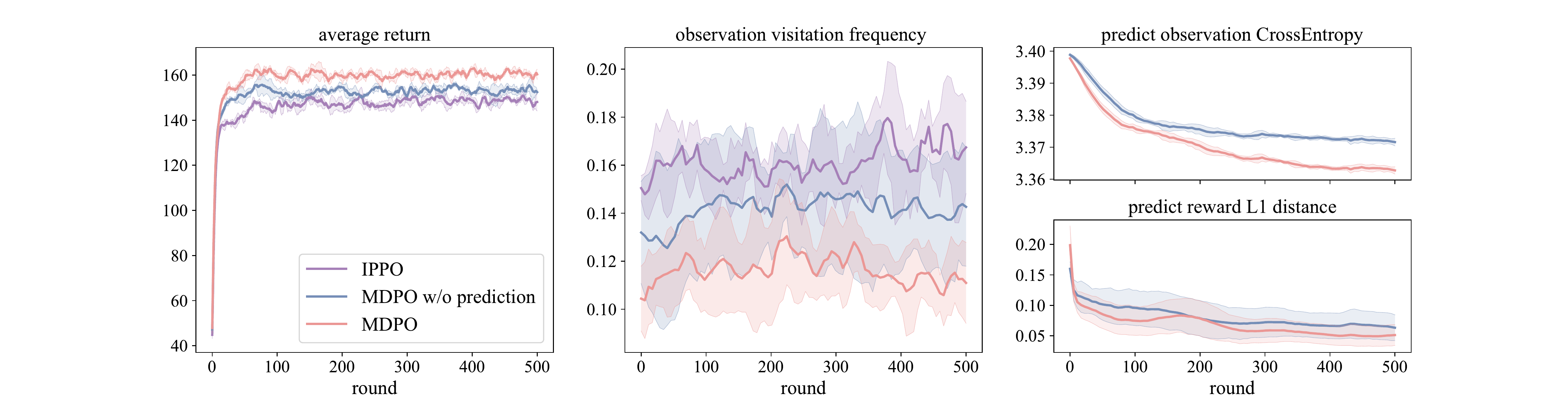}
        \vspace{-6mm}
        \caption{Learning curves of MDPO compared with MDPO w/o prediction and IPPO on the single-agent non-stationary stochastic game: average return (left), observation visitation frequency divergence (mid), and model prediction errors (right). Each round is 1600 environment steps.}
        \label{matrix_single}
\end{figure*}

Since MDPO helps to handle non-stationarity in decentralized MARL from the perspective of an individual agent, it will be natural and easy to also apply MDPO to single-agent RL in non-stationary environments. In this section, we investigate how MDPO performs in such a non-stationary single-agent environment.

We adopt the cooperative stochastic game into a single-agent non-stationary version. Concretely, we fix the policies of two agents and leave only one agent to update its policy. And we generate 5 noise matrices ($N_0,\cdots,N_4$) randomly, which are in the same shape as the transition matrix ($T$) and will influence the transition probability in a rotating manner. Formally, in $n$th policy rollout, the transition matrix is $T + N_{n\bmod5}$, and we guarantee such a transition matrix is reasonable when generating noise matrices.

We compare the performance of MDPO, MDPO w/o prediction, and IPPO on the single-agent non-stationary stochastic game, and the learning curves are shown in Figure~\ref{matrix_single}. As illustrated in Figure~\ref{matrix_single} (left), MDPO still performs the best in the single-agent non-stationary environment. As shown in  Figure~\ref{matrix_single} (right), latent variable prediction helps to predict the non-stationary transition, and as no noise is applied to the reward matrix, there is merely a slight difference in reward prediction. Since the latent variable in this environment (noise matrices) is in a regular rotation, the prediction function is easier to learn than in decentralized MARL settings. However, unlike in decentralized MARL, non-stationarity in this setting will not fade away in pace with policy convergence, thus MDPO w/o prediction may keep oscillating and generate experiences with larger observation visitation frequency divergence than MDPO, which is shown in Figure~\ref{matrix_single} (mid).
 
Generally, MDPO also works in single-agent non-stationary environments, especially when there is a regular pattern of non-stationarity. More thorough studies are left as future work.

\section{Additional Related Work}
\label{app:related}
By utilizing an environment model, model-based RL has shown many advantages, such as sample efficiency \citep{wang2019benchmarking} and exploration \citep{pathak2017curiosity}. Many paradigms have been proposed on how to exploit the environment model. Model-based planning methods, such as model predictive control, select actions through model rollouts. Dyna-style methods \citep{sutton1990integrated, feinberg2018model,janner2019trust} use both data collected in the real environment and data generated by the learned model to update the policy. Recent studies have extended model-based methods to multi-agent settings for sample efficiency in zero-sum game \citep{zhang2020model}, in stochastic game \citep{AORPO} and in networked system \citep{du2022fully}, centralized training \citep{willemsen2021mambpo}, opponent modeling \citep{yu2021model}, and communication \citep{kim2020communication}. However, none of them strictly tackle fully decentralized setting of our paper. 

Specially, the DMPO (decentralized model-based policy optimization) algorithm in prior work \citep{du2022fully} is designed for a networked system, where agents are able to communicate along the edges with their neighbors. The naming of \textit{DMPO} and \textit{MDPO} may lead to misunderstanding of similar settings, but the two algorithms are actually concerned with different settings. In fully decentralized setting of our paper, no information sharing is allowed between agents. And when the number of neighbors is set zero in DMPO, it will degenerate into the version of MDPO w/o prediction in our algorithm.

\end{document}